\def\balign#1\ealign{\begin{align}#1\end{align}}
\def\baligns#1\ealigns{\begin{align*}#1\end{align*}}
\def\balignat#1\ealign{\begin{alignat}#1\end{alignat}}
\def\balignats#1\ealigns{\begin{alignat*}#1\end{alignat*}}
\def\bitemize#1\eitemize{\begin{itemize}#1\end{itemize}}
\def\benumerate#1\eenumerate{\begin{enumerate}#1\end{enumerate}}
\newenvironment{talign*}
 {\let\displaystyle\textstyle\csname align*\endcsname}
 {\endalign}
\newenvironment{talign}
 {\let\displaystyle\textstyle\csname align\endcsname}
 {\endalign}
\def\balignst#1\ealignst{\begin{talign*}#1\end{talign*}}
\def\balignt#1\ealignt{\begin{talign}#1\end{talign}}
\let\originalleft\left
\let\originalright\right
\renewcommand{\left}{\mathopen{}\mathclose\bgroup\originalleft}
\renewcommand{\right}{\aftergroup\egroup\originalright}
\def\Holder{H\"older\xspace}
\def\tinycitep*#1{{\tiny\citep*{#1}}}
\def\tinycitealt*#1{{\tiny\citealt*{#1}}}
\def\tinycite*#1{{\tiny\cite*{#1}}}
\def\smallcitep*#1{{\scriptsize\citep*{#1}}}
\def\smallcitealt*#1{{\scriptsize\citealt*{#1}}}
\def\smallcite*#1{{\scriptsize\cite*{#1}}}
\def\mbb#1{\mathbb{#1}}
\def\reals{\mathbb{R}} %
\def\R{\mathbb{R}}
\def\naturals{\mathbb{N}} %
\def\<{\left\langle} %
\def\>{\right\rangle}
\def\defeq{\triangleq} %
\newcommand{\norm}[1]{\lVert#1\rVert}
\newcommand{\inner}[1]{{\langle #1 \rangle}} %
\def\Esub#1{\E_{#1}}
\def\P{\mbb{P}} %
\DeclareMathOperator{\Tr}{Tr} %
\newcommand{\Gsn}{\mathcal{N}}
\newcommand{\grad}{\nabla}
\newcommand{\Hess}{\nabla^2} %
\def\ent#1{\textnormal{H}\left({#1}\right)}
\def\KL#1#2{\textnormal{H}\big({#1} | {#2}\big)}
\def\I#1#2{\textnormal{I}\big({#1} | {#2}\big)}
\def\M{\textnormal{M}}
\newtheorem{theorem}{Theorem}
\newtheorem{lemma}[theorem]{Lemma}
\newtheorem{corollary}[theorem]{Corollary}
\renewenvironment{proof}{\noindent\textbf{Proof.}\hspace*{.3em}}{\qed\\}
\newenvironment{proof-sketch}{\noindent\textbf{Proof Sketch}
  \hspace*{1em}}{\qed\bigskip\\}
\newenvironment{proof-idea}{\noindent\textbf{Proof Idea}
  \hspace*{1em}}{\qed\bigskip\\}
\newenvironment{proof-of-lemma}[1][{}]{\noindent\textbf{Proof of Lemma {#1}.}
  \hspace*{0em}}{\qed\\}
\newenvironment{proof-of-corollary}[1][{}]{\noindent\textbf{Proof of Corollary {#1}.}
  \hspace*{0em}}{\qed\\}
\newenvironment{proof-of-theorem}[1][{}]{\noindent\textbf{Proof of Theorem {#1}.}
  \hspace*{0em}}{\qed\\}
\newenvironment{proof-of-proposition}[1][{}]{\noindent\textbf{Proof of Proposition {#1}.}
  \hspace*{0em}}{\qed\\}
\newenvironment{proof-attempt}{\noindent\textbf{Proof Attempt}
  \hspace*{1em}}{\qed\bigskip\\}
\newenvironment{remark}{\noindent\textbf{Remark.}
  \hspace*{0em}}{\smallskip}%
\newtheorem{proposition}[theorem]{Proposition}
\newtheorem{assumption}{Assumption}
\newcommand{\eq}[1]{\begin{align}#1\end{align}}
\newcommand{\eqn}[1]{\begin{align*}#1\end{align*}}
\newcommand{\ind}[1]{{\mathbbm{1}}_{\{ #1 \}} }
\newcommand{\abs}[1]{\left| #1 \right|}
\DeclarePairedDelimiter{\ceil}{\lceil}{\rceil}
\def\loja{\L ojasiewicz}
\def\lmc{LMC\xspace}
\DeclareMathOperator{\Span}{span}
\def\EE#1{\mathbb{E}\left[#1\right]}
\def\Esub#1#2{\mathbb{E}_{#2}\left[{#1}\right]}
\def\eps{\epsilon}
\def\!#1{\ensuremath{\mathbf{#1}}}
\def\*#1{\ensuremath{\mathcal{#1}}}
\newcommand{\wasser}[2]{\*{W}_2(#1, #2)}
\newcommand{\wasserp}[3]{\*{W}_2^{#3}(#1, #2)}
\newcommand{\wasserpp}[3]{\*{W}_#3(#1, #2)}
\newcommand{\dotprod}[2]{\left< #1, #2\right>}
\newcommand{\tv}[2]{\mathrm{TV}\left(#1, #2\right)}
\def\rhotild{\widetilde{\rho}}
\def\xtild{\widetilde{x}}
\def\tlam{\tilde{\lambda}}
\def\lam{\lambda}
\def\id{{I}_d}
\newcommand{\tf}{\tilde{f}}
\newcommand{\pert}{\xi}
\newcommand{\decon}{m}
\newcommand{\decof}{\mu}
\newcommand{\decopow}{\theta}
\newcommand{\tilh}{\tilde{h}}
\newcommand{\td}{\tilde{d}}
\newcommand{\tO}{\widetilde{\*{O}}}
\newcommand{\teps}{\tilde{\varepsilon}}
\newcommand{\target}{\nu_*}
\newcommand{\tmu}{\tilde{\mu}}
\newcommand{\growth}{M}
\newcommand{\smooth}{L}
\begin{document}
\mathtoolsset{showonlyrefs}

\title{On the Convergence of Langevin Monte Carlo:\\
The Interplay between Tail Growth and Smoothness
}
 \author{Murat A. Erdogdu\thanks{
  Department of Computer Science and Department of Statistical Sciences at
  the University of Toronto, and Vector Institute
 }
 \and
  Rasa Hosseinzadeh\thanks{
  Department of Computer Science at
  the University of Toronto, and Vector Institute
  }
}

\maketitle

\begin{abstract}
  \item We study sampling from a target distribution ${\target = e^{-f}}$ using
  the unadjusted Langevin Monte Carlo (LMC) algorithm.
  For any potential function $f$ whose
  tails behave like ${\|x\|^\alpha}$ for ${\alpha \in [1,2]}$,
  and has $\beta$-H\"older continuous gradient, we prove that
  ${\tO \raisebox{.5ex}{\big(}d^{\frac{1}{\beta}+\frac{1+\beta}{\beta}(\frac{2}{\alpha} - \ind{\alpha \neq 1})}
  \eps^{-\frac{1}{\beta}}\raisebox{.5ex}{\big)}}$
  steps are sufficient to reach the $\eps$-neighborhood of a $d$-dimensional target distribution $\target$
  in KL-divergence.
  This convergence rate, in terms of $\eps$ dependency, is not directly influenced by
  the tail growth rate $\alpha$ of the potential function as long as its growth is at least linear,
  and it only relies on the order of smoothness $\beta$.
  One notable consequence of this result is that for potentials with Lipschitz gradient, i.e. $\beta=1$,
  the above rate recovers the best known rate ${\tO (d\eps^{-1})}$ which was
  established for strongly convex potentials
  in terms of $\eps$ dependency, but we show that the same rate is achievable for
  a wider class of potentials that are degenerately convex at infinity.
  The growth rate $\alpha$ starts to have an effect on the established rate
  in high dimensions where $d$ is large; furthermore, it
  recovers the best-known dimension dependency
  when the tail growth of the potential is quadratic, i.e. ${\alpha = 2}$,
  in the current setup.
  
  We establish the convergence rate of \lmc by first proving a moment dependent modified
  log-Sobolev inequality with explicit constants for a class of target distributions that have
  convex degenerate potentials at infinity. Then, we prove linearly diverging estimates for
  any order moments of the Markov chain defined by the \lmc algorithm,
  and show that this is sufficient to obtain the above convergence rate.
  Our framework also allows for
  finite perturbations, and any order of smoothness ${\beta\in(0,1]}$;
  consequently, our results are applicable to a wide class of non-convex potentials that are
  weakly smooth and
  exhibit at least linear tail growth.

\end{abstract}

\section{Introduction}
Sampling from a target distribution using Markov chain Monte Carlo (MCMC)
is a fundamental problem in statistics, and
it often amounts to discretizing a continuous-time diffusion process
with invariant measure as the target. 
When the target distribution corresponds to the Gibbs measure ${\target = e^{-f}}$
where ${f:\reals^d\to\reals}$ is the potential function
satisfying ${\int e^{-f(x)}dx = 1}$, a popular candidate diffusion is the overdamped Langevin diffusion,
which is the solution of the following stochastic differential equation (SDE),
\begin{equation} \label{eq:overdamped}
    dZ_t = -\grad f(Z_t) dt + \sqrt{2} dB_t,%
\end{equation}
where $B_t$ is a $d$-dimensional Brownian motion.
It is straightforward to show that the Langevin diffusion \eqref{eq:overdamped} admits the target Gibbs measure $\target$ as its invariant distribution~\cite{ma2015complete}.
In general, simulating a continuous-time diffusion such as \eqref{eq:overdamped} is impractical;
thus, a numerical integration scheme is needed to approximate it.
Due to its simplicity, efficiency, and well-understood theoretical properties,
algorithms based on Langevin diffusion have found numerous applications in sampling and optimization literature \cite{welling2011bayesian,dalalyan2017furthur,raginsky2017non, xu2018global,li2019stochastic}.
In this work, we focus on the unadjusted Langevin Monte Carlo algorithm (\lmc) which is the Euler discretization of the overdamped Langevin diffusion \eqref{eq:overdamped} and relies on the following update rule
\begin{equation}\label{eq:ULA}
    x_{k+1} = x_k - \eta \grad f(x_k) + \sqrt{2 \eta} W_k,%
\end{equation}
where $W_k$ is an isotropic Gaussian vector independent from $W_m$ and $x_m$ for $m < k$,
and $\eta$ is the step size.
\lmc defines a Markov chain which has an invariant measure
that is different than the target $\target$,
and this difference is often termed as the bias
which is due to the numerical integration.
The bias of a discretized diffusion such as \lmc
can be generally controlled with a smaller step size,
where in the limit case ${\eta \downarrow 0}$, the iteration \eqref{eq:ULA} scaled with ${1/\eta}$ reduces to the SDE \eqref{eq:overdamped}.

Convergence rate of \lmc has been the focus of recent research.
Rates are established under structural assumptions on the
potential function, and they quantify the number of iterations required to
reach the $\eps$-neighborhood of a $d$-dimensional target distribution $\target = e^{-f}$
under a particular distance measure -- our focus is on KL-divergence.
Earlier attempts established convergence rates under the global curvature assumptions on the potential function.
For example, for strongly convex and smooth potentials,
the convergence rate of ${\tO(d\eps^{-1})}$ has been shown~\cite{dalalyan2017theoretical},
whereby smooth function is a function with Lipschitz continuous gradient.
We note that higher-order smoothness on the potential function
may improve the convergence rate \cite{mou2019improved};
however, we consider only the first-order smoothness in the current paper.
For convex and smooth potentials with growth rate $\alpha$,
a convergence rate of ${\tO(d^{1+4/\alpha}\eps^{-3})}$ 
is known to hold for LMC~\cite{cheng2018convergence}.
More recently, however, it has been observed that tail growth structure is the determinant factor in sampling~\cite{cheng2018sharp,eberle2016reflection,erdogdu2018global,eberle2019couplings},
rather than the global curvature structure such as (strong) convexity, where
in this context, a strongly convex potential is understood to exhibit quadratic growth.
Growth-based structural conditions has the additional benefit of allowing for finite perturbations,
which in turn allows for sampling from non-convex potentials with a wide range of modern applications
in statistics.
A condition on the target distribution $\target$ that fits in this framework is
the \emph{log-Sobolev inequality} (LSI) of Bakry and Emery~\cite{bakry1985LSI}, which can be written as
\eq{\label{eq:lsi}
 \forall \rho,\ \  \KL{\rho}{\target} \leq \lambda \I{\rho}{\target}, %
}
where $\KL{\rho}{\target}$ denotes the KL-divergence (relative entropy) and
$\I{\rho}{\target}$ denotes the relative Fisher information between $\rho$ and $\target$,
and ${\lambda >0}$ is the log-Sobolev constant.
The LSI condition \eqref{eq:lsi} can be verified for potentials with certain growth structure.
Indeed, it is known to hold for strongly convex potentials~\cite{bakry1985LSI},
and it allows for finite perturbations due to Holley-Stroock perturbation lemma~\cite{holley1987LSI};
thus, potentials that have quadratic growth can be shown to satisfy LSI (this will be made precise later).
Denoting the distribution of Langevin diffusion \eqref{eq:overdamped} at time $t$ with $\rho_t$,
it is known that ${\tfrac{d }{d t} \KL{\rho_t}{\target} = -\I{\rho_t} {\target}}$
which, combined with the LSI condition \eqref{eq:lsi} entails a differential inequality of the form
${\frac{d }{d t} \KL{\rho_t}{\target}  \leq - \frac{1}{\lambda} \KL{\rho_t}{\target}}$,
which in turn yields an exponential contraction in KL-divergence, i.e.,
${\KL{\rho_t}{\target} \leq e^{-t/\lambda}\KL{\rho_0}{\target}}$ for the diffusion process.

An important implication of a condition like LSI~\eqref{eq:lsi} in the context of sampling with \lmc is
that fast mixing properties of the continuous-time diffusion induced by LSI
are inherited by the discrete algorithm \lmc.
That is, LSI coupled with the smoothness condition on the potential is sufficient
to obtain the fast convergence rate
${\tO(d\eps^{-1})}$~\cite{vempala2019rapid}, which is the best known rate for \lmc
in this framework.
The significance of this result is in that, it relaxes the strong convexity assumption
which is a global curvature condition on $f$
to the LSI condition \eqref{eq:lsi}, which can be regarded as a tail growth condition on $f$,
allowing for perturbations and consequently sampling from non-convex potentials.

The fundamental idea leading to the current paper is that the fast convergence of \lmc does not require
an exponentially contracting Langevin diffusion, which is essentially obtained under strong tail growth conditions on the potential.
A representative convergence analysis of the \lmc algorithm under some distance measure $\textnormal{D}$ (our main focus is KL-divergence)
starts with establishing a single step bound in the following sense,
\eq{\label{eq:single-step}
  \forall k \in\naturals,\ \ \textnormal{D}(\rho_{k+1} | \target) \leq r(\eta)\, \textnormal{D}(\rho_k | \target) + C \eta^{\theta},
}
where ${r:[0,\infty) \to [0,1]}$ is a monotone decreasing function
which is typically inherited from the fast decaying diffusion counterpart.
The discretization error ${C\eta^\theta}$ can be made small with smaller step size $\eta$,
and the exponent $\theta$ is intrinsic to the numerical scheme as well as the order of smoothness
of the potential. Elementary algebra reveals that,
one can iterate the inequality \eqref{eq:single-step} and achieve convergence as long as ${r(\eta)<1}$.
Recent literature focused on exponential decays ${r_{\exp}(t) = e^{-\alpha t}}$
which
are usually established under conditions like LSI \eqref{eq:lsi}
or strong convexity 
that correspond to potentials exhibiting quadratic growth
(see, for example \cite{dalalyan2017theoretical, vempala2019rapid}).
Nevertheless,
the inequality \eqref{eq:single-step} by no means benefits from the exponential decay, as $r(t)$ is only evaluated at short time horizons ${t=\eta}$.
Indeed, any decreasing rate function $r(t)$ satisfying ${r(0)=1}$ and ${r'(0) < 0}$ would achieve
the same rate of convergence as exponential decay ${r_{\text{exp}}(t)}$.
For example, consider the algebraic rate
${r_\text{alg}(t) = 1/(1+\alpha t)}$ which is much slower than the exponential rate,
but it provides the same level of decay in small time horizons, i.e. evaluated at the step size $\eta$,
one has
\eq{\label{eq:exp-eq-alg}
  r_\text{alg}(\eta) \approx r_{\exp}(\eta) \approx 1-\alpha \eta \ \text{ when $\eta$ is small.}
}
However, in contrast to exponential decay,
algebraic rates can be obtained under much weaker tail growth conditions on
the potential function $f$.

Modified versions of the LSI condition \eqref{eq:lsi} or weak Poincar\'e inequalities
are commonly employed in the analysis of diffusion processes~\cite{bakry2013analysis},
and can be used to explain different convergence behavior.
For example in the seminal work by \cite{toscani2000trend},
a modified log-Sobolev inequality is used to establish
a convergence rate of ${\mathcal{O}(t^{-\kappa})}$ for all ${\kappa>0}$ for the Langevin diffusion
\eqref{eq:overdamped} ($\mathcal{O}(t^{-\infty})$ in their notation). Our results build on a similar construction.
For a class of potentials that are convex degenerate at infinity,
with tails growing like ${\|x\|^\alpha}$ for ${\alpha\in[1,2]}$,
we establish the following modified log-Sobolev inequality (mLSI)
\eq{\label{eq:weak-lsi}
  \forall \rho,\ \   & \KL{\rho}{\target} \leq \lambda \I{\rho}{\target}^{1-\delta} \M_s(\rho+\target)^{\delta}
  \ \ \text{ with }\ \  \delta \in [0,1/2),
}
where ${\M_s(\rho) = \int  (1+\| x \|^2 )^{s/2} \rho(x) dx}$ is the $s$-th moment of any function $\rho$.
This inequality entails a decay with the desired properties \eqref{eq:exp-eq-alg}
under mild conditions on the potential.
By further assuming that the gradient of the potential is $\beta$-H\"older continuous
and carefully tuning the moment order ${s = \mathcal{O}(\log(d/\eps))}$ in mLSI~\eqref{eq:weak-lsi},
we can prove that, even with linearly diverging moment estimates for the \lmc iterates,
the algorithm is guaranteed to reach
the $\eps$-neighborhood of
a $d$-dimensional target $\target$ in KL-divergence
after taking the advertised number of steps
${N=\tO \raisebox{.5ex}{\big(}
  d^{\frac{1}{\beta}+\frac{1+\beta}{\beta}(\frac{2}{\alpha} - \ind{\alpha \neq 1})}
  \eps^{-\frac{1}{\beta}}\raisebox{.5ex}{\big)}}$.
In moderate dimensions ${d \ll \eps^{-1}}$,
this convergence rate does not depend on the tail growth rate $\alpha$,
and it is controlled solely by the order of smoothness $\beta$,
whereas in high dimensions ${d =\mathcal{O}(\eps^{-1})}$, the rate is determined by an interplay
between the growth rate and the order of smoothness.
The above rate also recovers the best known rate which was established for smooth potentials (${\beta=1}$) under the LSI condition~\eqref{eq:lsi}
where the tail growth is quadratic $(\alpha= 2)$.

Our contributions can be summarized as follows.
\vspace{-.1in}
\begin{itemize}[noitemsep]
\item For a potential function $f$
  whose tails behave like ${\|x\|^\alpha}$,
  and has $\beta$-\Holder continuous gradient, i.e.,
  \eq{\label{eq:our-framework}
    f(x) \sim \|x\|^\alpha\ \text{ for } \ \alpha \in [1,2],\ \  \text{ and }\ \
    \|\grad f(x)-\grad f(y)\| \leq \smooth \|x-y\|^\beta \ \ \  \forall x,y,
  }
  we prove that \lmc achieves the convergence rate ${\tO \raisebox{.5ex}{\big(}
    d^{\frac{1}{\beta}+\frac{1+\beta}{\beta}(\frac{2}{\alpha} - \ind{\alpha \neq 1})}
    \eps^{-\frac{1}{\beta}}\raisebox{.5ex}{\big)}}$ in KL-divergence.
  In moderate dimensions when ${d \ll \eps^{-1}}$,
  the tail growth rate $\alpha$ does not impact the convergence rate,
  whereas in high dimensions where ${d =\mathcal{O}( \eps^{-1})}$, tail growth
  enters the convergence rate through dimension dependency.

\item
  As a key step in deriving the above convergence rate,
  we establish a modified log-Sobolev inequality (mLSI)~\eqref{eq:weak-lsi}
  with explicit constant $\lambda$, and a target dependent moment function ${\M_s(\rho+\target)}$
  for any order $s\geq2$.
  Both of these are crucial in deriving
  a convergence rate with correct dependence
  on the dimension $d$ as well as the accuracy $\eps$.
  The final convergence result is obtained by employing the mLSI condition~\eqref{eq:weak-lsi} for
  the optimal moment order ${s= \mathcal{O}(\log(d/\eps))}$.
\item
  In lieu of \eqref{eq:our-framework}, we are mainly interested in potentials exhibiting weak dissipativity, i.e.,
  \vspace{-.1in}
  \eq{\label{eq:weak-dissip}
    \inner{x,\grad f(x)} \geq a\norm{x}^{\alpha} - b\ \ \text{ with }\ \  \alpha\in[1,2),\ \ a,b>0.
  }
  In order to use the condition mLSI~\eqref{eq:weak-lsi},
  we establish linearly diverging moment estimates for the \lmc iterates under \eqref{eq:weak-dissip}.
  Somewhat surprisingly, this is sufficient to establish the convergence of \lmc in KL-divergence.

\item
  Our convergence results are valid under finite perturbations of the potential;
  consequently, they cover sampling from non-convex potentials with at least linear growth.
  Furthermore, our results also cover the case ${\beta<1}$
  for which the potential function is not smooth; more specifically,
  it does not have a Lipschitz gradient.
  To the best of our knowledge,
  this is the first convergence result for the \lmc algorithm for weakly smooth potentials
  that exhibit subquadratic growth,
  which does not rely on regularization or Gaussian smoothing.

\item Finally, using Csisz\'ar-Kullback-Pinsker inequalities,
  the above convergence rates obtained under KL-divergence can be translated to estimates
  in total variation and $L_\alpha$-Wasserstein metrics with respective rates
  ${\tO \raisebox{.5ex}{\big(} d^{\frac{1}{\beta}+\frac{1+\beta}{\beta}(\frac{2}{\alpha} - \ind{\alpha \neq 1})}
    \eps^{-\frac{2}{\beta}}\raisebox{.5ex}{\big)}}$ and
  ${\tO \raisebox{.5ex}{\big(} d^{\frac{3}{\beta}+\frac{1+\beta}{\beta}(\frac{2}{\alpha} - \ind{\alpha \neq 1})}
    \eps^{-\frac{2\alpha}{\beta}} \raisebox{.5ex}{\big)}}$. 

\end{itemize}
Rest of the paper is organized as follows.
Section~\ref{sec:notation} reviews our notation, and Section~\ref{sec:rel_work}
surveys the related work with a detailed comparison on the existing convergence rates.
In Section~\ref{sec:main-res}, we establish the main technical results on the convergence of \lmc
for potentials with certain growth and smoothness properties.
Section~\ref{sec:implications} discusses further implications of
the tools developed in Section~\ref{sec:main-res}.
We give concrete examples in Section~\ref{sec:examples},
by applying these tools to non-convex sampling
problems that are also weakly smooth.
Proofs of the main theorems and corollaries are provided in Sections~\ref{sec:proof-mlsi}, \ref{sec:proof-moment},
\ref{sec:proof-main}, \ref{sec:convex-proof} and \ref{sec:post-proof} in order of appearance of their statement in the main text.
Finally, we conclude in Section~\ref{sec:conclusion} with brief remarks on future work.

\subsection{Notation}\label{sec:notation}
For a real number ${x\in\reals}$, we denote its absolute value with $\abs{x}$.
We denote the $p$-norm of a vector ${x\in\reals^d}$ with $\norm{x}_p$
and whenever ${p=2}$, we omit the subscript and simply write ${\|x\| \defeq \|x\|_2}$ to ease the notation.
For a matrix ${A\in\reals^{d\times k}}$, $A_{ij}$ denotes its entry in the $i$-th row and $j$-th column,
and whenever ${d=k}$,  its trace is denoted by ${\Tr(A) = \Sigma_{i=1}^d A_{ii}}$.
We use $\id$ to denote the identity matrix in $d$-dimensions.

For a function ${f:\reals^d\to\reals}$, we define its infinity norm as ${\|f\|_\infty = \sup_{x\in\reals^d}|f(x)|}$.
${\M_s(f)}$ is used to denote the modified $s$-th moment of the function $f$
(which is not necessarily a distribution),
defined as ${\M_s(f)=\int f(x) (1+\|x\|^2)^{s/2}dx}$.
The gradient and the Hessian of $f$ are denoted by ${\grad f(x)}$ and ${\Hess f(x)}$, respectively,
where the derivatives are with respect to $x$. For a statement $A$,
the indicator function is denoted with
$\ind{A}$ and defined as
\eqn{
  \ind{A} = \left\{
    \begin{array}{ll}
    1 & \text{if $A$ is true,}\\
    0 & \text{otherwise.}\\
    \end{array}
    \right.
}

We use $\EE{ x}$ to denote the expected value of a random variable $x$,
where expectations are over all the randomness inside the brackets.
For probability densities $p$,$q$ on $\reals^d$, we use $\KL{p}{q}$ and $\I{p}{q}$ to denote
their KL-divergence (or relative entropy) and relative Fisher information, respectively, which are defined as
\begin{align*}
  &\KL{p}{q} = \int p(x) \log{\frac{p(x)}{q(x)}} dx,
    \ \ \ \text{ and }\ \ \ 
    \I{p}{q} = \int p(x) \Big\|\grad \log{\frac{p(x)}{q(x)}}\Big\|^2 dx.
\end{align*}
Similarly, we denote the entropy of $p$ with
${\ent{p} = -\int p(x) \log{p(x)}dx}$.
Denoting the Borel $\sigma$-field of $\reals^d$ with ${\mathcal{B}(\R^d)}$,
$L_\alpha$-Wasserstein for ${\alpha >0}$ and total variation metrics
are defined as
\begin{align*}
  & \wasserpp{p}{q}{\alpha} = \inf_{\nu} \left(\int \norm{x-y}^\alpha d \nu(p,q)\right)^{{1}/{\alpha}},
  &
  & \tv{p}{q} = \sup_{A \in \mathcal{B}(\R^d)} \abs {\int_A p(x)dx - \int_Aq(x)dx},
\end{align*}
where in the first formula, infimum runs over the set of probability measures
on ${\reals^d \times \reals^d}$ that has marginals with corresponding densities $p$ and $q$.

Finally, $\*O$ and $\tO$ notations are frequently used to describe the dependence of a function $f$ on another function $g$,
and defined in the following sense
\eqn{
  f(x) = \*O(g(x)) \implies
  \limsup_{x\to \infty} \frac{ f(x)}{ g(x)} <\infty,\ \text{ and }\
  f(x) = \tO(g(x)) \implies \limsup_{x\to \infty} \frac{ f(x)}{ g(x)\log(g(x))^k} <\infty,
}
for some $k\geq 0$, where $\tO$ simply ignores the logarithmic factors.
We use $f(x)\lesssim g(x)$ instead of $f(x) \leq \*O(g(x))$ to improve readability.

\section{Related Work}\label{sec:rel_work}
The \lmc algorithm has been extensively studied in the context of sampling from
a log-concave target distribution.
Earlier results focused on characterizing its bias
which is also referred to as the integration error~\cite{milstein1994numerical,milstein2013stochastic},
and the convergence guarantees were mostly asymptotic~\cite{gelfand1991recursive,meyn2012markov}.
Non-asymptotic analysis of \lmc has drawn a lot of interest recently~\cite{dalalyan2012sparse,dalalyan2017furthur,dalalyan2017theoretical,
  durmus2019analysis,cheng2018convergence,cheng2018sharp,vempala2019rapid, dalalyan2019user,brosse2019tamed}
where the focus was on potentials exhibiting strong tail growth properties.
These papers were mostly influenced by the pioneering works
by Dalalyan~\cite{dalalyan2017theoretical}, and Durmus and Moulines~\cite{durmus2016sampling,durmus2017nonasymptotic}
where it was shown that for strongly convex and smooth potentials,
\lmc reaches $\eps$ accuracy in terms of total variation (TV) distance
after $\tO(d\eps^{-2})$ steps.
Similarly, ${\tO\left( d\eps^{-2}\right)}$ steps are sufficient to reach
$\eps$ accuracy under the $L_2$-Wasserstein distance~\cite{durmus2019high},
which can be further improved to ${\tO\left( d \eps^{-1}\right)}$ under an additional second-order
smoothness assumption on the potential function.

In this paper, we establish guarantees under KL-divergence (relative entropy) which can be
easily translated to TV and Wasserstein metrics using Csisz\'ar-Kullback-Pinsker (CKP)~\cite{bolley2005weighted}
and/or Talagrand inequalities~\cite{talagrand1996transportation,otto2000generalization}.
For strongly convex and smooth potentials,
it is known that ${\tO\left(d\eps^{-1}\right)}$ steps of \lmc yield an $\eps$ accurate
sample in KL-divergence~\cite{cheng2018convergence,durmus2019analysis}.
This is still the best known rate in this setup,
and recovers the best known rates in TV~\cite{durmus2017nonasymptotic,dalalyan2017theoretical}
as well as in $L_2$-Wasserstein metrics~\cite{durmus2019high}.
However, for convex and smooth potentials that grow like $\|x\|^\alpha$,
the rate drops to ${\tO\big(d^{1+\frac{4}{\alpha}}\eps^{-3}\big)}$ due to lack of strong convexity~\cite{cheng2018convergence}.
Among various contributions of \cite{durmus2019analysis},
LMC was also analyzed for convex potentials, but
their result does not yield a convergence guarantee for the last \lmc iterate.

Existing results that establish the fast convergence of \lmc require strong curvature conditions on the potential function; therefore,
their applicability is limited. Recently, it has been observed that global curvature assumptions
can be relaxed to the tails of the potential~\cite{eberle2016reflection,eberle2019couplings}.
For example, \cite{cheng2018sharp} extended these results to sampling from smooth potentials that are strongly convex outside of a compact set,
obtaining the same dimension and $\eps$ dependency in the strongly convex case at the expense
of an exponential dependence in the radius of the compact set.
Similarly, \cite{vempala2019rapid} established convergence guarantees for
target distributions that satisfy a log-Sobolev inequality. This corresponds to potentials with quadratic tails~\cite{bakry1985LSI,bobkov1999exponential}
up to finite perturbations~\cite{holley1987LSI};
thus, this result is able to deal with non-convex potentials that are not limited to a compact set,
while achieving the same convergence rate of ${\tO\left(d\eps^{-1}\right)}$ in KL-divergence.

Convergence of the LMC algorithm is very little understood when the potential is weakly smooth.
Contrary to previous work, our focus is on the convergence of 
vanilla LMC~\eqref{eq:ULA} without requiring any modifications on the algorithm
such as methods based on proximal mapping~\cite{atchade2015moreau, luu2017sampling, durmus2018efficient, mou2019efficient, durmus2019analysis},
Gaussian smoothing~\cite{chatterji2019langevin, doan2020weakly},
or mirror mapping~\cite{hsieh2018mirrored}.
We also do not assume a composite structure on the potential,
in which case the potential is given by $f(x) = U(x)+\psi(x)$ where
$\psi(x)$ is a strongly convex and smooth function, and $U(x)$ is a convex function with
$\beta$-\Holder continuous gradient.
This assumption enforces a quadratic tail growth on the potential,
in which case,
\cite{chatterji2019langevin} established the convergence rate of
${\tO \left(d^{2+1/\beta} \eps^{-2/\beta} \right)}$ in
total variation distance.
Furthermore, we focus on the last \lmc iterate
which characterizes the practical performance of this algorithm, in contrast to
\cite{durmus2019analysis} which provided guarantees
for the average of the distributions of the LMC iterates.

Our analysis draws heavily on the theory of diffusion
processes~\cite{bakry2013analysis,toscani2000trend} -- more specifically,
logarithmic Sobolev inequalities. These inequalities were first established for
the Gaussian density~\cite{gross1975logarithmic}, and later generalized to
Gibbs measure with a strongly convex potential by Bakry and \'Emery~\cite{bakry1985LSI}.
Combined with the Holley and Stroock's perturbation lemma~\cite{holley1987LSI},
this theory covers a wide range of potentials that can be represented as
a finite perturbation of a strongly convex function.
It is well-known that the overdamped Langevin diffusion~\eqref{eq:overdamped}
follows the gradient flux or the steepest descent of KL-divergence with respect to the $L_2$-Wasserstein metric~\cite{jordan1998variational}.
Building on this, sampling with a diffusion can be seen
as an optimization algorithm in the space of probability distributions~\cite{wibisono2018sampling,vempala2019rapid,ma2019there};
similarly, LSI can be interpreted as a gradient domination condition in this space,
which is commonly referred to as the PL-inequality~\cite{polyak1963gradient} in the optimization theory.
LSI and PL-inequality both yield exponential convergence in their corresponding space~\cite{polyak1963gradient,karimi2016linear,toscani1999entropy,carlen1991entropy}.
Further promoting this analogy,
PL-inequality is a special case of \loja \ inequality~\cite{lojasiewicz1963propriete},
and their counterparts are considered recently in \cite{blanchet2018family} in the space of functionals.
Thus, the modified LSI introduced in \cite{toscani2000trend}, can be viewed
as a modified version of the \loja \ inequality in the space of probability distributions.
For a survey about the convergence properties of diffusion processes with the Fokker-Planck equation governing their evolution (including overdamped Langevin dynamics~\eqref{eq:overdamped}) and several inequalities from functional analysis,
we refer the reader to \cite{markowich99onthe}.
Finally, the analogy between optimization and sampling provided invaluable insights,
in many cases improving our understanding, and ultimately the performance of various algorithms~\cite{zhang2017hitting,brosse2017sampling, brosse2018promises,
  chatterji2018theory, bhatia2019bayesian,hsieh2018mirrored,ma2019sampling}.

It is worth mentioning that the
rates we discussed in this section can be further improved
by making higher order smoothness assumptions
on the potential function~\cite{mou2019improved}, or by considering
higher order numerical integrators~\cite{li2019stochastic,shen2019randomized},
or by certain adjustments~\cite{durmus2017fast, ge2018simulated, dwivedi2019log}.
The overdamped Langevin diffusion \eqref{eq:overdamped} considered in this work
is first order,
and its higher order versions such as underdamped
\cite{cheng2018underdamped,ma2019there},
or third-order schemes
\cite{ma2015complete, mou2019high} may also provide additional improvements.

\setlength{\tabcolsep}{1.3pt}
\begin{table}[t]\small
  \begin{center}
    \begin{tabular}{c c c c c c}
      \Xhline{4\arrayrulewidth}
      \textsc{Work} & \textsc{Convergence Rate} & \textsc{Smoothness} & \textsc{Curvature} & \textsc{Perturbation} & \textsc{Distance} \\ 
      \hline
      \cite{cheng2018convergence,durmus2019analysis} & $\tO \raisebox{.5ex}{\big(} d\eps^{-1} \raisebox{.5ex}{\big)}$
                                                & \makecell{Lipschitz\\gradient} & \makecell{Strongly\\Convex} & None & KL\\
      \hline
      \cite{vempala2019rapid} & $\tO\left(d\eps^{-1}\right)$ & \makecell{Lipschitz\\gradient} & \makecell{Strongly\\Convex}
                                                & \makecell{Bounded\\difference} & KL \\ 
      \hline
      \cite{cheng2018convergence} & $\tO \raisebox{.5ex}{\big(} d^{1+\frac{4}{\alpha}}\eps^{-3} \raisebox{.5ex}{\big)}$
                                                & \makecell{Lipschitz\\gradient} & \makecell{Convex\\ Growth rate $\alpha$} & None & KL\\
      \hline
      \textbf{This work} &${\tO \raisebox{.5ex}{\big(}d^{\frac{1}{\beta} + \frac{1+\beta}{\beta}\left(\frac{2}{\alpha}-\ind{\alpha \neq 1}\right)} \eps^{-\frac{1}{\beta}}\raisebox{.5ex}{\big)}}$ %
                                                & \makecell{$\beta$-\Holder\\gradient} & \makecell{Tail growth\\
       $\sim\|x\|^\alpha$} & \makecell{Bounded\\difference} & KL\\ 
      \Xhline{3\arrayrulewidth}
      \cite{dalalyan2017theoretical,durmus2017nonasymptotic} & $\tO\left(d \eps^{-2}\right)$ & \makecell{Lipschitz\\gradient} & \makecell{Strongly\\convex} & \makecell{None} & TV\\ 
      \hline
      \cite{dalalyan2017theoretical} & $\tO\left(d^3 \eps^{-4}\right)$ & \makecell{Lipschitz\\gradient} & \makecell{Convex} & \makecell{None} & TV\\ 
      \hline
      \cite{chatterji2019langevin} & $\tO\big(d^{2+\frac{1}{\beta}} \eps^{-\frac{2}{\beta}}\big)$ & \makecell{Lipschitz+$\beta$-\Holder\\gradient} & \makecell{Strongly\\Convex} & \makecell{None} & TV\\
      \hline
      \textbf{This work} & ${\tO \raisebox{.5ex}{\big(} d^{\frac{1}{\beta}+\frac{1+\beta}{\beta}(\frac{2}{\alpha} - \ind{\alpha \neq 1})}
                           \eps^{-\frac{2}{\beta}}\raisebox{.5ex}{\big)}}$
                                                & \makecell{$\beta$-\Holder\\gradient} & \makecell{Tail growth\\
      $\sim\|x\|^\alpha$} & \makecell{Bounded\\difference} & TV\\
      \hline
    \end{tabular}
    \caption{
      List of convergence rates in KL-divergence and TV distance
      for the \lmc~\eqref{eq:ULA} algorithm in various papers and their accompanying assumptions.
      Comparison is made with results relying only on first order smoothness.
      For additional information, refer to Section~\ref{sec:comp}.}
    \label{tab:comp}
  \end{center}
  \vspace{-1.em}
\end{table}
\subsection{Comparison }\label{sec:comp}
In Table~\ref{tab:comp}, we compare the assumptions and results of this paper
to those of existing works that only make the first order smoothness assumption.
Among these,
\cite{dalalyan2017theoretical,durmus2017nonasymptotic,cheng2018convergence,chatterji2019langevin,vempala2019rapid,durmus2019analysis} are in the quadratic growth regime, and achieve the best rates known to authors.
Our results recover the convergence rate of \cite{vempala2019rapid} for smooth potentials ($\beta=1$)
satisfying the LSI condition ($\alpha=2$).
\cite{cheng2018convergence,dalalyan2017theoretical} establish guarantees for convex and smooth potentials;
however, these rates drop significantly under lack of strong convexity, and cannot tolerate perturbations
on the potential.
In contrast to these results, our analysis provides a continuous interpolation
in both the growth rate $\alpha \in (1,2]$,
and order of smoothness $\beta\in(0,1]$.
In case of linear growth when $\alpha=1$, there is no convexity in the tails which is why
the convergence loses an additional factor in dimension dependency.
The results of \cite{chatterji2019langevin} on the vanilla \lmc require the potential to
have a composite structure, namely,
$f(x) = U(x)+\psi(x)$ where
$\psi(x)$ is a strongly convex and smooth function, and $U(x)$ is a convex function with
$\beta$-\Holder continuous gradient.
It is worth emphasizing that the actual rate obtained in \cite{cheng2018convergence}
is $\tO \big( d\eps^{-3}\times\wasserp{\rho_0}{\target}{4}\big)$,
and depends polynomially on the $L_2$-Wasserstein distance between the initial distribution and the target,
whereas other works depend logarithmically on this difference in terms of KL-divergence.
For a potential growing with rate $\alpha$,
one may show $\wasserp{\rho_0}{\target}{2} \lesssim d^{2/\alpha}$ justifying the reported rate in Table~Table~\ref{tab:comp}.
For details of the initializations
when $\alpha=2$, we refer to \cite{cheng2018underdamped}.

\section{Main Results}\label{sec:main-res}
Convergence rates of diffusion-based algorithms have been the subject of growing attention
recently with many applications related to sampling with MCMC and non-convex optimization.
Algorithms based on Langevin diffusion have been particularly of interest
where the fast convergence of the algorithm has been frequently linked to
the quadratic growth of the potential~\cite{erdogdu2018global,cheng2018sharp,vempala2019rapid}.
Conversely and somewhat surprisingly, we prove that the rate of convergence,
in terms of its dependence on the accuracy $\eps$, is not directly influenced by the 
growth behavior of the potential function $f$
as long as the growth is at least linear. Therefore in moderate dimensions,
the smoothness properties of the potential entirely determines the rate.
Furthermore, our results show that the tail growth rate of the potential
impacts the performance in high dimensions, as
it enters the convergence rate only through dimension dependency,
in which case the convergence is determined by the interplay between the tail growth rate and
the weak smoothness degree.

We develop our explicit bounds on the convergence rate of the \lmc algorithm in three key steps.
First, in Theorem~\ref{thm:MLSI}, we prove a modified log-Sobolev inequality (mLSI) for
a class of asymptotically convex degenerate potentials
described in Assumption~\ref{as:degen_conv},
which can accommodate for sub-quadratic tail growth.
The condition mLSI relies on the moments of the Markov chain defined by the iterates of \lmc;
thus, in Proposition~\ref{prop:disc_mom_bound},
we prove that any order moments of the \lmc iterates grow
at most linearly in the number of iterations, an estimate that is diverging in the limit.
Finally in Theorem~\ref{thm:main}, we invoke these two results for an arbitrary moment order
and establish a general convergence result, which in turn yields the main result of this paper
after tuning the moment order in Corollary~\ref{cor:main}.
We focus on the following class of potentials functions.
\begin{assumption}[Degenerate convexity at $\infty$]\label{as:degen_conv}
 \  The potential function $f(x)$ is degenerately convex at infinity in the sense that
  there exist a function ${\tf : \reals^d \to \reals}$ such that for a constant ${\pert \geq 0}$
\begin{equation*}
\big\|f-\tf\big\|_\infty \leq \pert,
\end{equation*}
where $\tf$ satisfies, 
\eq{\label{eq:perturb-hessian}
  \Hess \tf(x) \succeq \displaystyle \frac{\decof}{\big(1+\frac{1}{4}\|x\|^2\big)^{{\decopow}/{2}}} \id,
}
for some ${\decof>0}$ and ${\decopow \geq 0}$.
\end{assumption}
The above condition allows for finite perturbations,
and consequently permits sampling from non-convex potentials;
thus, the determinant factor is the tail growth properties of the potential function.
The boundary case ${\decopow=0}$ corresponds to quadratic tail growth,
and whenever $\decopow >0$,
due to the decaying nature of the lower bound on the Hessian,
the potential function exhibits no convexity at infinity.
For example, consider the following potential function
$f(x) = \|x\|^\alpha$ for $\alpha \in [1,2]$.
The case $\alpha=2$ corresponds to quadratic growth with $\decopow=0$,
and it is easy to see that for a superlinear tail $\alpha\in(1,2]$, one has $\decopow=2-\alpha$.
However, when the tail is exactly linear with $\alpha=1$,
the assumption can be shown to hold for any ${\decopow>2}$.

It is known that the LSI condition~\eqref{eq:lsi} is not satisfied when ${\alpha<2}$,
for example for the potential ${f(x)=|x|^\alpha+c}$
(see e.g.~\cite{bobkov1999exponential}); therefore,
for the above class of potentials,
we state the following log-Sobolev-type inequality.
\begin{theorem}[mLSI]\label{thm:MLSI} 
  If the potential ${f=-\log \target}$ satisfies Assumption~\ref{as:degen_conv},
  then the following inequality holds for all ${s \geq 2}$,
  \begin{equation}\label{eq:MLSI}
   \forall \rho,\ \  \KL{\rho}{\target} \leq \lambda \I{\rho}{\target}^{1-\delta} \M_s(\rho + \target)^{\delta},
 \end{equation}
 where ${\M_s(\rho) = \int  (1+\| x \|^2)^{s/2} \rho(x) dx}$ is the $s$-th moment of any function $\rho$, and
 $\delta$ and $\lambda$ are constants that depend on $s$, and defined as
  \begin{align*}
    & \delta\defeq \frac{\decopow}{s-2+2\decopow} \in [0, 1/2),\\
    &\lambda \defeq 4e^{2\pert}\decof^{-\frac{s-2}{s-2+2\decopow}}.
  \end{align*}
\end{theorem}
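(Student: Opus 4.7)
Following the philosophy of Toscani--Villani, the proof proceeds in three steps.

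\emph{Step 1 (Holley--Stroock perturbation).} Since $\|f - \tf\|_\infty \le \pert$, the densities $\target = e^{-f}$ and the unperturbed reference $\tilde\nu := e^{-\tf}$ agree up to a multiplicative factor $e^{\pm 2\pert}$. By the classical Holley--Stroock perturbation lemma, any log-Sobolev-type inequality for $\tilde\nu$ transfers to $\target$ with the constant inflated by $e^{2\pert}$. Hence it suffices to establish~\eqref{eq:MLSI} for $\tilde\nu$ with constant $4\,\decof^{-(s-2)/(s-2+2\decopow)}$, and the $e^{2\pert}$ factor is picked up at the end.

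\emph{Step 2 (Weighted log-Sobolev inequality).} Write $g(x) := 1+\tfrac{1}{4}\|x\|^{2}$. Assumption~\eqref{eq:perturb-hessian} is equivalent to $g(x)^{\decopow/2}\Hess\tf(x)\succeq\decof\,\id$, so $\tf$ is uniformly strongly convex once gradients are measured in the geometry weighted by $g^{\decopow/2}$. This is exactly the Bakry--\'Emery condition for the reversible Markov diffusion with generator $\mathcal{L}h = g^{\decopow/2}\Delta h+(\grad g^{\decopow/2}-g^{\decopow/2}\grad\tf)\cdot\grad h$, which preserves $\tilde\nu$ and has carré du champ $\Gamma(h)=g^{\decopow/2}|\grad h|^{2}$. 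A direct $\Gamma_{2}$-calculation yields $\Gamma_{2}\succeq\decof\,\Gamma$, from which the weighted log-Sobolev inequality
\begin{equation*}
\KL{\rho}{\tilde\nu}\le\frac{1}{2\decof}\int g(x)^{\decopow/2}\rho(x)\Big\|\grad\log\frac{\rho(x)}{\tilde\nu(x)}\Big\|^{2}dx \;=:\; \frac{1}{2\decof}\,J_w(\rho)
\end{equation*}
follows in the usual way.

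\emph{Step 3 (Interpolation).} The last step converts the weighted Fisher $J_w(\rho)$ into the desired mixture $\I{\rho}{\tilde\nu}^{1-\delta}\M_s(\rho+\tilde\nu)^{\delta}$. Rewriting the Fisher integrand symmetrically as $\rho\|\grad\log(\rho/\tilde\nu)\|^{2}=4\|\grad\sqrt{\rho/\tilde\nu}\|^{2}\tilde\nu$ lets both $\rho$ and $\tilde\nu$ contribute on the right-hand side, which is how the sum $\rho+\tilde\nu$ ends up inside the moment. A two-stage Hölder argument --- first splitting the weight $g^{\decopow/2}$ pointwise by weighted AM-GM against $g^{s/2}$, then applying Hölder with exponents tuned to the ratio $\decopow/s$ --- produces
\begin{equation*}
J_w(\rho) \;\lesssim\; \decof^{2\delta}\,\I{\rho}{\tilde\nu}^{1-\delta}\,\M_s(\rho+\tilde\nu)^{\delta}, \qquad \delta=\frac{\decopow}{s-2+2\decopow}.
\end{equation*}
Chaining this with the weighted LSI of Step~2 and absorbing the $e^{2\pert}$ factor from Step~1 gives~\eqref{eq:MLSI} with $\lambda=4e^{2\pert}\decof^{-(s-2)/(s-2+2\decopow)}$.

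\emph{Expected obstacle.} The delicate point is Step 3: a single naïve application of Hölder cannot yield the fractional Fisher exponent $1-\delta$, because the integrand $\rho|\grad\log(\rho/\tilde\nu)|^{2}$ has each of $\rho$ and $|\grad\log|^{2}$ to power exactly one. The two-stage argument above --- pointwise weight splitting followed by a carefully balanced Hölder that distributes mass across $\rho$ and $\tilde\nu$ --- is what produces both the specific exponent $\delta=\decopow/(s-2+2\decopow)$ and the particular power $\decof^{-(s-2)/(s-2+2\decopow)}$ in the final constant, and also explains why the symmetric moment $\M_s(\rho+\tilde\nu)$ (rather than just $\M_s(\rho)$) appears on the right.
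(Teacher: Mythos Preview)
Your Step~3 contains a gap that cannot be closed. Since $g(x)=1+\tfrac14\|x\|^2\ge 1$ and $\theta\ge 0$, the weighted Fisher information satisfies $J_w(\rho)\ge \I{\rho}{\tilde\nu}$ for every $\rho$. Consequently any bound of the form $J_w(\rho)\lesssim \I{\rho}{\tilde\nu}^{1-\delta}\M_s(\rho+\tilde\nu)^{\delta}$ would force $\I{\rho}{\tilde\nu}\lesssim \M_s(\rho+\tilde\nu)$ uniformly in $\rho$, which is false: a sequence $\rho_n$ supported in the unit ball and concentrating to a point has $\M_s(\rho_n+\tilde\nu)$ bounded while $\I{\rho_n}{\tilde\nu}\to\infty$. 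No rewriting via $4|\nabla\sqrt{\rho/\tilde\nu}|^2\tilde\nu$ or multi-stage H\"older can produce a \emph{sub}-linear power of the Fisher information on the right when the left already dominates the full Fisher information. (Step~2 is also less immediate than stated: the $\Gamma_2$ for a diffusion with non-constant coefficient picks up curvature terms from derivatives of $g^{\theta/2}$, so the condition $g^{\theta/2}\Hess\tf\succeq\mu\id$ alone does not yield $\Gamma_2\ge\mu\Gamma$.)

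The paper's route avoids weighted inequalities entirely. For each $r>0$ one adds the explicit penalty $m(2r)(\|x\|-r)^2\ind{\|x\|\ge r}$ to both $f$ and $\tf_\varepsilon$ (after a small preliminary perturbation making $m:\reals_+\to\reals_+$ onto). The penalized $\tilh_\varepsilon$ is then globally $m(2r)$-strongly convex---checked separately on $\|x\|<r$, $r\le\|x\|<2r$, $\|x\|\ge 2r$---so Bakry--\'Emery plus Holley--Stroock give a \emph{standard} LSI for $e^{-h}$ with constant $e^{2\xi}/(2m(2r))$. The penalty is then removed by direct comparison: both $\KL{\rho}{e^{-f}}-\KL{\rho}{e^{-h}}$ and $\I{\rho}{e^{-h}}-2\I{\rho}{e^{-f}}$ are controlled by $\M_s(\rho+\target)/(1+r^2)^{s/2-1}$ times powers of $m(2r)$, since the added term and its gradient are explicit and supported on $\{\|x\|\ge r\}$. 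This yields, for every $r$,
\[
\KL{\rho}{\target}\;\le\;\frac{e^{2\xi}}{m(2r)}\,\I{\rho}{\target}\;+\;\frac{4e^{2\xi}\,m(2r)}{(1+r^2)^{s/2-1}}\,\M_s(\rho+\target),
\]
and the exponent $\delta=\theta/(s-2+2\theta)$ together with $\lambda=4e^{2\xi}\mu^{-(s-2)/(s-2+2\theta)}$ emerge from \emph{optimizing over $r$} (equivalently over $m(2r)\in(0,\infty)$) via the elementary minimization of $a/x+bx^{1+(s-2)/\theta}$. The trade-off is thus between a worsening LSI constant and an improving penalty-removal cost as $r$ grows, not an interpolation between Fisher and moment.
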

The constants $\lambda$ and $\delta$ are explicit,
and the above inequality reduces exactly to the LSI condition~\eqref{eq:lsi}
up to the absolute constant 4
when ${\decopow = 0}$ and ${\pert = 0}$, in which case
the potential function $f$ is strongly convex.
Notice that the moment term ${\M_s(\rho + \target)}$ depends on both $\rho$ and $\target$,
which is crucial in deriving a convergence rate with correct dimension and accuracy dependence.
Modified LSI-type inequalities such as \eqref{eq:MLSI} as well as weak Poincar\'e inequalities appear
in the analysis of diffusion operators~\cite{bakry2013analysis}.
The mLSI condition~\eqref{eq:MLSI} is similar in nature to the modified LSI of~\cite{toscani2000trend};
yet, the latter was established for the purpose of proving the rate $\mathcal{O}(t^{-\infty})$
for the diffusion process~\eqref{eq:overdamped},
and will yield a convergence rate that is worse than what will be established below in Corollary~\ref{cor:main}.
It also cannot recover the existing rates (e.g. \cite{vempala2019rapid}) in the limit case $\alpha \to 2$.
Our proof builds on the construction made in \cite{toscani2000trend}
and uses the results of \cite{bakry1985LSI,holley1987LSI},
which we defer to Section~\ref{sec:proof-mlsi}.

The gradient of the potential function is employed as the drift of Langevin diffusion~\eqref{eq:overdamped},
and it also governs its discretization, the \lmc algorithm \eqref{eq:ULA}.
The growth behavior of this term is regulated in
the following assumption which should
be seen as a relaxation to the standard 2-dissipativity condition,
${\inner{\grad f(x), x} \geq a \norm{x}^2 -b}$ for some $a,b>0$~\cite{mattingly2002ergodicity,meyn2012markov}.
\begin{assumption}[$\alpha$-dissipativity \& $\zeta$-growth of gradient]\label{as:mild_dis}
For ${\alpha\in [1,2]}$ and ${a,b>0}$, we have
\begin{equation}\label{eq:mild-dis}
  \inner{\grad f(x), x} \geq a \norm{x}^{\alpha} - b\ \ \text{ for all }\ \  x\in\reals^d.
\end{equation}
Moreover, for a positive constant ${\zeta \leq \alpha/2}$, the gradient satisfies the following growth condition,
\begin{equation}\label{eq:grad-growth}
  \norm{\grad f (x)} \leq \growth(1+\norm{x}^{\zeta})\ \ \text{ for all }\ \ x\in\reals^d.
\end{equation}
\end{assumption}
Note that when the tail growth is superlinear $\alpha \in (1,2]$, the parameter $\decopow$ in Assumption~\ref{as:degen_conv}
satisfies $\decopow = 2-\alpha$ where $\alpha$ is as in Assumption~\ref{as:mild_dis}.
The key difference between the cases ${\alpha=2}$ and ${\alpha<2}$ is that the former implies that
the \lmc iterates have uniformly bounded moments of all orders~\cite{erdogdu2018global},
whereas in the latter case,
obtaining such a uniform bound is an open problem.
This poses significant challenges in the proof.
That is, we establish that the moments of \lmc can diverge at most linearly,
and even though it is not immediately clear that \lmc even converges in this setup,
we are able to show that this estimate is sufficient to establish
a non-asymptotic convergence rate for the algorithm.
It is also worth noting that under an additional condition on the gradient perturbation, i.e.
${\|\grad f - \grad \tf \|_\infty\leq \pert}$, it can be shown that
\eqref{eq:perturb-hessian} implies \eqref{eq:mild-dis} in
Assumption~\ref{as:mild_dis} (cf. Lemma~\ref{lemma:as_not_needed});
however, the above setting is more general and covers a wider range of potentials,
justifying the current presentation.

In a representative analysis of \lmc, one considers a sequence of diffusion processes
${\{ \xtild_{k,t}\}_{k\in\naturals,t\geq 0}}$ where
each iteration $x_{k+1}$ of the \lmc algorithm \eqref{eq:ULA} can be written as $\xtild_{k,\eta}$ where
\eq{\label{eq:inter-lmc}
  d\xtild_{k, t} = -\grad f(x_k) d t + \sqrt{2} d B_t\ \ \text{ with }\ \ \xtild_{k,0}=x_k,
}
for an appropriate Brownian motion $B_t$.
Denoting the distribution of $\xtild_{k,t}$ with $\rhotild_{k,t}$,
it can be shown that the time derivative of the KL-divergence between $\rhotild_{k,t}$ and the target,
${{d \KL{\rhotild_{k,t}}{\target}}/{d t}}$,
reduces to the negative relative Fisher information ${- \I{\rhotild_{k,t}}{\target}}$
up to an additive error term
that depends on the difference between the \lmc iterate $x_k$ and the its interpolating
diffusion $\xtild_{k,t}$ (see for example \cite[Proof of Lemma~3]{vempala2019rapid}),
which yields the inequality
\eq{\label{eq:weak-LSI-lmc}
  \forall k \in \naturals, \forall t\geq 0,\ \  \frac{d}{d t} \KL{\rhotild_{k, t}}{\target} \leq 
  -\frac{3}{4}\I{\rhotild_{k, t}}{\target}
  + \EE{\norm{\grad f(\xtild_{k,t}) - \grad f(x_k)}^2}.
}
Combining this with mLSI~\eqref{eq:MLSI} for ${\rho = \rhotild_{k,t}}$,
one obtains the following differential inequality for the interpolating diffusion process~\eqref{eq:inter-lmc},
\eq{\label{eq:main-differential-inequality}
  \frac{d}{d t} \KL{\rhotild_{k, t}}{\target} \leq
  -\frac{3}{4\lambda}\KL{\rhotild_{k, t}}{\target}^{\frac{1}{1-\delta}}
  \M_s(\rhotild_{k, t} + \target)^{-\frac{\delta}{1-\delta}}
  + \EE{\norm{\grad f(\xtild_{k,t}) - \grad f(x_k)}^2}.
} 
The convergence rate of \lmc can be derived by analyzing the differential inequality \eqref{eq:main-differential-inequality},
which requires appropriate estimates
on the Markov chain moments ${\M_s(\rhotild_{k, \eta} + \target)}$ defined by the iterates of \lmc 
as well as the additive error $\EE{\norm{\grad f(\xtild_{k,t}) - \grad f(x_k)}^2}$.
The following proposition establishes the former with a linearly growing upper bound in the number of iterations.
We defer to proof to Section~\ref{sec:proof-moment}.
\begin{proposition}\label{prop:disc_mom_bound}
  If the potential ${f=-\log \target}$ satisfies Assumption~\ref{as:mild_dis},
  then denoting the distribution of the $k$-th iterate of \lmc with $\rho_k$,
  for a step size satisfying ${\eta \leq \frac{1}{2}\big(1 \wedge \tfrac{a}{2\growth^2}\big)}$,
  we have
\eq{\label{eq:linear-bound}
    \M_{s}(\rho_k + \target) \leq \M_{s}(\rho_0 + \target) + C_{s} k \eta,\ \ \text{ for even integer}\ \ s\geq 2,
}
  where
  \begin{equation}\label{eq:C_s_closed}
    C_{s} \defeq
    \left(\frac{3a+2b+3}{1 \wedge a} \right)^{\frac{s-2}{\alpha}+1}
    s^{s} d^{\frac{s-2}{\alpha}+1}.
  \end{equation}
\end{proposition}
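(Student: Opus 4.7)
The plan is to reduce the claim to a uniform one-step increment bound of the form $\E[V(x_{k+1})] - \E[V(x_k)] \leq C_s \eta$ for $V(x) = (1 + \|x\|^2)^{s/2}$, and then telescope over $k$. Since $\target$ is independent of $k$, the decomposition $\M_s(\rho_k + \target) = \E[V(x_k)] + \M_s(\target)$ converts a linear bound on $\E[V(x_k)]$ directly into \eqref{eq:linear-bound}, with the $\target$-moment (finite under Assumption~\ref{as:mild_dis}) absorbed into the $k=0$ term.

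To compute the one-step increment, I would condition on $x_k$, write $y = x_k - \eta \grad f(x_k)$, and use $x_{k+1} = y + \sqrt{2\eta}\, W_k$ with standard Gaussian $W_k$ to expand
\begin{equation*}
  1 + \|x_{k+1}\|^2 = (1+\|y\|^2) + 2\sqrt{2\eta}\,\langle y, W_k\rangle + 2\eta\|W_k\|^2.
\end{equation*}
Because $s$ is an even integer, $V(x_{k+1})$ is a polynomial in $W_k$, so binomial expansion combined with standard Gaussian moment identities (equivalently, Itô's formula applied to $V$ along the interpolating SDE $d\tilde{x}_{k,t} = -\grad f(x_k)\,dt + \sqrt{2}\,dB_t$ integrated over $[0,\eta]$) produces an exact expression. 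The leading $\mathcal{O}(\eta)$ contribution comes from the generator acting on $V$ at $x_k$, namely $\eta\, s\,(1+\|x_k\|^2)^{s/2-1}\bigl[-\langle x_k, \grad f(x_k)\rangle + d\bigr]$, plus a trace correction of order $\eta\,(1+\|x_k\|^2)^{s/2-2}\|x_k\|^2$. The $\alpha$-dissipativity \eqref{eq:mild-dis} converts the drift piece into a strictly negative term $-\eta\, a s\,\|x_k\|^\alpha (1+\|x_k\|^2)^{s/2-1}$ that grows like $\|x_k\|^{s-2+\alpha}$ for large $\|x_k\|$; the higher-order corrections in $\eta$ are controlled by the $\zeta$-growth bound $\|\grad f(x)\|^2 \leq 2\growth^2(1+\|x\|^\alpha)$ (valid because $2\zeta \leq \alpha$), so they are dominated by the dissipativity drift whenever $\eta \leq a/(2\growth^2)$.

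From this I would split into two regimes: when $\|x_k\|^\alpha$ exceeds a threshold of order $(s+d)/a$ the dissipativity drift dominates every positive error term and the increment is non-positive, while on the complementary region each polynomial factor of $\|x_k\|$ in the expansion is bounded by an explicit constant evaluated at the threshold, yielding the uniform deterministic estimate $\E[V(x_{k+1}) - V(x_k)\mid x_k] \leq C_s\eta$. Taking expectation, summing from $0$ to $k-1$, and adding $\M_s(\target)$ gives the claim. The main obstacle is bookkeeping the explicit constant $C_s$ with the stated form $s^s\, d^{(s-2)/\alpha+1}\,\bigl((3a+2b+3)/(1\wedge a)\bigr)^{(s-2)/\alpha+1}$: the binomial expansion produces $O(s)$ terms each carrying combinatorial factors that aggregate to $s^s$, the threshold $\|x_k\|^2 \sim (d/a)^{2/\alpha}$ produces the $d^{(s-2)/\alpha+1}$ power, and Young's inequalities are needed to convert subleading polynomial powers $\|x_k\|^\beta (1+\|x_k\|^2)^{s/2-1}$ with $\beta<\alpha$ into a small multiple of the dissipativity drift plus a $d$-dependent remainder. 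The case split $a \geq 1$ vs.\ $a < 1$ is what introduces the $1 \wedge a$ in the denominator. No new ideas are needed beyond these reductions, but the combinatorial bookkeeping is where all of the care must go.
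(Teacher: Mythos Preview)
Your proposal is correct and would go through, but it differs from the paper's argument in a meaningful way. You propose a direct one–step Foster--Lyapunov bound: expand $V(x_{k+1})=(1+\|x_{k+1}\|^2)^{s/2}$ by the multinomial/binomial theorem (valid because $s$ is even), identify the leading $\mathcal{O}(\eta)$ generator term, and then split according to whether $\|x_k\|^\alpha$ exceeds a threshold of order $(d+s)/a$, so that the dissipativity drift either absorbs every positive term or all terms are frozen at the threshold value. The paper instead works with the continuous interpolation $\tilde x_{k,t}$ and runs an \emph{induction on $s$}: it differentiates $\M_s(\tilde\rho_{k,t}\mid x_k)$ in $t$, which produces the moment $\M_{s-2}(\tilde\rho_{k,t}\mid x_k)$ already controlled by the induction hypothesis, together with a cross term $\E[-\langle\nabla f(x_k),Z\rangle\, g_{s-2}(\tilde x_{k,t})\mid x_k]$ that is handled by Stein's lemma and carried along as a second, strengthened induction hypothesis (the quantity $N_s$). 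This yields a recursive relation for $C_s$ that is simplified in a final step to the closed form \eqref{eq:C_s_closed}.

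What each approach buys: your direct expansion avoids Stein's lemma and the auxiliary $N_s$ bookkeeping entirely, and it makes transparent why the threshold $\|x_k\|\sim ((d+s)/a)^{1/\alpha}$ governs the constant---this is exactly where $d^{(s-2)/\alpha+1}$ comes from. On the other hand, you pay for this by having to control all $s/2$ levels of the multinomial expansion simultaneously, including the pure-noise terms $(2\eta)^{j}\E\|W_k\|^{2j}$ and mixed terms, which is where the combinatorics producing $s^s$ lives. The paper's induction localizes the work: at each step only one new moment layer and one Stein application appear, and the constant accumulates through a clean two-term recursion, at the cost of introducing the auxiliary hypothesis on $N_s$. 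Either route reaches the stated $C_s$; yours is more elementary, the paper's is structurally cleaner for tracking constants.
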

Although the bound \eqref{eq:linear-bound} grows linearly with the number of iterations and
diverges in the limit $k\to\infty$, this estimate is sufficient to establish a global
convergence guarantee for the \lmc algorithm. 
The leading coefficient in the bound $C_{s}$ \eqref{eq:C_s_closed} is of order $\*{O}(d^{\frac{s-2}{\alpha}+1})$
which is the same order as in the continuous-time case (cf. Lemma~\ref{lemma:moment_bound}),
and will help obtain an accurate dimension dependency in the final convergence rate.

In the following, we make an assumption on the order of smoothness of the potential
function $f$ in order to obtain an estimate for the additive error term in the differential
inequality \eqref{eq:main-differential-inequality}.
In this context, order of smoothness refers to the \Holder exponent of the gradient of the potential,
which is defined explicitly below.
\begin{assumption}[Order of smoothness]\label{as:holder}
  The potential function $f$ is differentiable with $\beta$-\Holder continuous gradient with constant $\smooth$, i.e.
\begin{equation}\label{eq:holder-c}
  \norm{\grad f(x) - \grad f(y)} \leq \smooth \norm{x-y}^{\beta}\ \ \text{ for all }\ \ x,y\in\reals^d,
\end{equation}
where the order of smoothness $\beta$ satisfies ${\zeta \leq \beta \leq 1}$ for the constant $\zeta$ in \eqref{eq:grad-growth}.
\end{assumption}
Potentials with order of smoothness $\beta=1$ are termed as \emph{smooth} and those with $\beta<1$ are often
referred to as
\emph{weakly smooth} in the literature \cite{chatterji2019langevin,nesterov2015universal},
a term that is borrowed from optimization theory.
Our results cover potentials satisfying \eqref{eq:holder-c} for any ${\beta \in (0,1]}$.
In the case of weakly smooth potentials, existing results on vanilla \lmc can only
explain the convergence in the setting $\alpha=2$, 
but even for this case,
the conclusions that will be made in Corollary~\ref{cor:main} are new and improves the best known rates.

We reiterate that the main use of the above assumption is to control the additive error term due to discretization,
the second term on the right hand side of \eqref{eq:main-differential-inequality},
with a bound that behaves like $\eta^{\beta}$ (cf. Lemma~\ref{lemma:kl_deriv_bound}).
This term determines the accuracy $\eps$ dependence of the convergence rate.

$\beta$-\Holder continuity already imposes a growth condition on
the gradient \eqref{eq:grad-growth} with ${\zeta=\beta}$.
However, we state these separately as the order of smoothness $\beta$ and the growth rate $\zeta$ need not be the same;
a smaller growth rate on the gradient improves certain estimates in the main result,
which in turn allows us to cover a wider class of potentials.
For example, the function ${f(x) = \left(1+ x^2 \right)^{1/2}}$ is smooth with Lipschitz
gradient, but its gradient is also bounded implying ${\zeta=0}$.
One cannot simply use $\zeta = 1$ since the condition $\zeta \leq \alpha/2$ in Assumption~\ref{as:mild_dis}
implies that $\alpha\geq 2$ which is clearly not true.
Hence, keeping the gradient growth and the order of smoothness separate allows us to
cover a wider range of potentials. The relationship among these parameters can be summarized as given below,
\eq{
  2\zeta  \leq \alpha \leq \zeta + 1 \leq \beta +1.
}
If one requires quadratic growth on the potential, i.e. $\alpha=2$,
this immediately implies that the smoothness order is at least 1, i.e $\beta \geq1$,
which limits the applicability of the results to only smooth potentials with Lipschitz gradient.

Before we present the main technical result of this paper, we note that when ${\alpha>1}$,
all these assumptions are satisfied for potentials of the form ${f(x)= \|x\|^\alpha +c}$,
and their bounded perturbations, e.g.
\eq{
  f(x)= \|x\|^\alpha +\sin(x)+c.
} 
This potential is non-convex and it does not have a Lipschitz gradient, and serves as a canonical example
that demonstrates the wide applicability of the following result.

\begin{theorem}\label{thm:main}
  Suppose the potential ${f=-\log \target}$ satisfies Assumptions~\ref{as:degen_conv}, \ref{as:mild_dis}, \ref{as:holder},
  and denote the distribution of the $k$-th iterate of \lmc with $\rho_k$.
  Then,
  for a sufficiently small $\eps$ satisfying $\eps \leq \psi$ where
  $\psi$ is defined in \eqref{eq:tol-up},
  and for some $\Delta_0>0$ upper bounding the error at initialization,
  i.e. ${\KL{\rho_0}{\target} \leq \Delta_0}$,
  if the step size satisfies
  \begin{equation}\label{eq:main_step}
    \begin{split}
      \eta &= (\sigma c_\gamma)^{-\frac{1}{1+\beta}} d^{-\frac{\alpha+\decopow}{\alpha\beta}-\frac{\gamma}{\beta+1}}
      \left(1+ (1-\alpha/2)\log(d) \right)^{-\frac{1}{\beta}}
      \log{\left(\frac{\Delta_0}{\eps}\right)}^{-\frac{\gamma}{1+\beta}}
      \left(\frac{2}{\eps}\right)^{-\frac{1}{\beta} - \frac{\gamma}{1+\beta}},
    \end{split}
  \end{equation}
  then the \lmc iterates reach $\epsilon$-accuracy of the target, i.e.  ${\KL{\rho_N}{\target} \leq \eps}$,
  after $N$ steps for
  \begin{equation}\label{eq:step_count}
    \begin{split}
      N = c_\gamma d^{\frac{\alpha+\decopow+\beta\decopow}{\alpha\beta} + \gamma}
      \left(1+ (1-\alpha/2)\log(d) \right)^{\frac{1}{\beta}}
      \log{\left(\frac{2\Delta_0}{\eps}\right)}^{1+ \gamma}
      \left(\frac{2}{\eps}\right)^{\frac{1}{\beta} + \gamma},
    \end{split}
  \end{equation}
  where $\gamma$ is given by
  \eq{\label{eq:gamma}
    \gamma \defeq \gamma(s)= \frac{(1+\beta)\decopow}{\beta (s-2)}\ \ \text{ for any even integer}\ \ s \geq 4,
  }
  and $\sigma$ and $c_\gamma$ are constants given as
  \eq{
     \sigma= & 4\smooth^2 \left(
      1 + 2 a^\beta\left[
        1 + \tfrac{2 \alpha}{a}\big(\log\left({16\pi}/{a}\right) +
        \growth \left({2}+{2b}/{a}\right)^2 + b +\abs{f(0)}\big) 
      \right]
    \right),\\
    c_\gamma = & \sigma^{\frac{1}{\beta}}
    (16\lambda)^{1+\frac{1}{\beta}+2\gamma}
    \Bigg(
      \frac{\M_{s}(\rho_0 + \target)}{16d^{\frac{s-2}{\alpha}+1}} 
      \vee
      \frac{s^{s}}{16}
      \left(\frac{3a+2b+3}{1 \wedge a}\right)^{\frac{s-2}{\alpha}+1}
    \Bigg)^{\gamma}.
  }
\end{theorem}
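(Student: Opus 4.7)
The plan is to derive a one-step recursion for $h_k \defeq \KL{\rho_k}{\target}$ by combining three ingredients already in place: the interpolating diffusion \eqref{eq:inter-lmc} with the differential inequality \eqref{eq:main-differential-inequality}, the modified log-Sobolev inequality of Theorem~\ref{thm:MLSI}, and the linear moment estimate of Proposition~\ref{prop:disc_mom_bound}. The crucial feature is that mLSI with $\delta>0$ converts the usual exponential contraction into an algebraic one; balancing that algebraic decay against a \Holder-type discretization error is what drives the step size and step count in \eqref{eq:main_step}--\eqref{eq:step_count}, with the moment order $s$ entering only as a tunable parameter through $\gamma = (1+\beta)\decopow/(\beta(s-2))$.

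My first task is to bound the additive discretization error $\EE{\norm{\grad f(\xtild_{k,t}) - \grad f(x_k)}^2}$. By \eqref{eq:holder-c} this is at most $\smooth^2 \EE{\norm{\xtild_{k,t} - x_k}^{2\beta}}$, and since $\xtild_{k,t} - x_k = -t\,\grad f(x_k) + \sqrt{2}\,W_t$, Jensen's inequality together with the growth bound \eqref{eq:grad-growth} yields a term of the form $\smooth^2\bigl(t^{2\beta}\growth^{2\beta}(1+\EE{\norm{x_k}^{2\beta\zeta}}) + (2dt)^\beta\bigr)$. Since $2\beta\zeta \leq \alpha \leq s$, the remaining moment is dominated by $\M_s(\rho_k + \target)$, and a short \Gronwall argument on the interpolating SDE shows $\M_s(\rhotild_{k,t} + \target) \leq 2\,\M_s(\rho_k + \target)$ uniformly on $[0,\eta]$ when $\eta$ is small. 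Integrating over $[0,\eta]$ then produces an additive term of order $\sigma\,\eta^{1+\beta}\,\M_s(\rho_k + \target)$, with the constant $\sigma$ matching the one stated in the theorem.

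Plugging Theorem~\ref{thm:MLSI} into \eqref{eq:main-differential-inequality} and integrating yields the nonlinear recursion
\begin{equation*}
    h_{k+1} \leq h_k - \frac{c_1\,\eta}{\lambda}\, h_k^{1/(1-\delta)}\, M_k^{-\delta/(1-\delta)} + c_2\,\sigma\,\eta^{1+\beta}\, M_k,
\end{equation*}
with $M_k \defeq \M_s(\rho_k + \target)$. Proposition~\ref{prop:disc_mom_bound} lets me replace each $M_k$ along a trajectory of length $N$ by the uniform envelope $\bar M \lesssim \M_s(\rho_0+\target) + C_s N\eta$, where $C_s$ scales as $d^{1+(s-2)/\alpha}$ up to logarithmic factors. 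The core analytic step is the treatment of this recursion: choosing $\eta$ so that whenever $h_k \geq \eps$ the decay term dominates the discretization error by a fixed constant factor reduces the analysis to the discrete analogue of the ODE $\dot h = -c\lambda^{-1} h^{1/(1-\delta)}\,\bar M^{-\delta/(1-\delta)}$, whose solution satisfies $h(T) \leq \eps$ for $T \gtrsim \lambda\,(\bar M/\eps)^{\delta/(1-\delta)}\log(\Delta_0/\eps)$. Substituting $\bar M \asymp d^{1+(s-2)/\alpha}\,N\eta$ produces a self-consistent inequality for $N$; solving it and collecting exponents gives precisely the forms in \eqref{eq:step_count} and \eqref{eq:main_step}, with $\gamma$ arising as the image of $\delta/(1-\delta) = \decopow/(s-2+\decopow)$ under this bookkeeping.

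The principal obstacle is the feedback loop between the diverging linear moment bound and the algebraic decay rate: a larger $N$ inflates $\bar M$, which slows the decay and may demand an even larger $N$. The step size \eqref{eq:main_step} is calibrated so that the discretization error becomes negligible relative to the decay exactly when $h_k$ reaches the target accuracy $\eps$, closing the loop and ensuring the self-consistent inequality has a finite solution. I would leave $s$ as a free parameter here, deferring the optimal choice $s = \Theta(\log(d/\eps))$ to Corollary~\ref{cor:main}, where it collapses the $\gamma$-dependent polynomials in $d$ and $\eps$ into the advertised logarithmic factors while keeping the $s^s$ factor in $C_s$ harmless.
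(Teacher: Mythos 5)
Your overall skeleton matches the paper's proof (interpolating diffusion, mLSI, linear moment estimate, iterate a one-step bound and close a self-consistent inequality), but there is a genuine gap in how you control the discretization error, and it is not a minor constant-tracking issue.

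You bound the gradient-growth contribution $\EE{\norm{x_k}^{2\beta\zeta}}$ by the $s$-th modified moment $\M_s(\rho_k+\target)$, giving a one-step recursion with a bias of the form $\sigma\,\eta^{1+\beta}\,M_k$ where $M_k = \M_s(\rho_k+\target) = \mathcal{O}(d^{1+(s-2)/\alpha}(1+k\eta))$. In the paper this is precisely where Lemma~\ref{lemma:moment_kl_bound} enters: since $2\beta\zeta\leq\alpha$, what is actually needed is only the $\alpha$-th moment, and the paper proves (via Lemma~\ref{lemma:f_bound} and Lemma~\ref{lemma:exp_mom}, a transportation-entropy style inequality) that $\EE{\norm{x_k}^{\alpha}} \leq \frac{4\alpha}{a}\bigl(\KL{\rho_k}{\target} + \td\tmu\bigr)$, where $\td\tmu = \mathcal{O}(d\log d)$ is a fixed, $s$-independent quantity. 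This splits the bias into a piece proportional to $h_k$ (absorbed into the multiplicative $(1-\ldots)h_k$ factor, cf.\ Lemma~\ref{lemma:single_step_bound}) and a constant piece $\sigma\td\eta^{\beta+1}$, which is what actually appears in the theorem statement.

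The distinction matters quantitatively. In Theorem~\ref{thm:main}, $\M_s$ enters only through the mLSI decay term and hence only to the vanishing power $\delta/(1-\delta)\approx\gamma$; this is why $c_\gamma$ has the factor $(\M_s/\ldots)^\gamma$ which stays bounded as $s\to\infty$. In your recursion, $M_k$ enters the bias to the first power, so the required step size scales like $\eta^\beta \lesssim \eps^{1/(1-\delta)}/(\sigma\lambda^{1/(1-\delta)}M_k^{1/(1-\delta)})$ rather than $\eps^{1/(1-\delta)}/(\sigma\td\lambda^{1/(1-\delta)}M_k^{\delta/(1-\delta)})$. With the tuning $s = \Theta(\log(d/\eps))$ that is needed to send $\gamma\to 0$ and reach the advertised $\eps^{-1/\beta}$ dependency, $M_k \sim d^{1+(s-2)/\alpha}$ is quasi-polynomial in $d$, so your self-consistent inequality does not close at the stated $N$; and if instead you keep $s$ fixed to keep $M_k$ polynomial, $\gamma$ stays bounded away from zero and the $\eps$ dependence worsens. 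Your remark that "the constant $\sigma$ matches the one stated in the theorem" cannot hold under this accounting, since the paper's $\sigma$ absorbs only the dimension-free part of the $\alpha$-th moment bound (through $\tmu$), with the dimension-dependent part isolated as $\td$. The missing ingredient is therefore Lemma~\ref{lemma:moment_kl_bound}: a moment-by-KL bound for the precise moment $\alpha$, not the crude domination by $\M_s$.
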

The above theorem, proved in Section~\ref{sec:proof-main}, implies that for smooth potentials that satisfy the LSI condition~\eqref{eq:lsi}, i.e.
${\alpha=2}$ and ${\beta=1}$, we have ${\gamma=\decopow=0}$;
thus,
LMC achieves the convergence rate of
${\tO\left(d\eps^{-1}\right)}$, recovering the rate established by \cite{vempala2019rapid}.
In the general case, Theorem~\ref{thm:main} implies
the convergence rate
${\tO\raisebox{.5ex}{\big(}\gamma^{-1}d^{\frac{\alpha+\decopow+\beta\decopow}{\alpha\beta} + \gamma}
  \eps^{-\frac{1}{\beta} - \gamma}\raisebox{.5ex}{\big)}}$
where ${\gamma>0}$ is given in \eqref{eq:gamma} and can be arbitrarily small.
However, note that one cannot simply let $\gamma \to 0$ by taking the limit ${s \to \infty}$.
For any other potential function with subquadratic tail growth $\alpha <2$,
the parameter $\gamma$ requires tuning.

The bound on accuracy \eqref{eq:tol-up} is $\*O(1)$, depending on the fixed problem parameters
and the bound on the initial KL-divergence $\Delta_0$. When initialized with
a Gaussian, $\Delta_0$ can also be characterized with
the fixed problem parameters (cf. Lemma~\ref{lemma:init_point}).
More importantly, the upper bound on $\eps$, as stated in \eqref{eq:tol-up},
does not depend on the moment order $s$, which
enables us to choose ${s=\*O\left( \log\left({d}{\eps^{-1}}\right)\right)}$ and accordingly
$\gamma = \*O\big( 1/\log\left({d}{\eps^{-1}}\right)\big)$,
which in turn yields the optimal convergence rate.
This is formalized in the next corollary which is the main result of this paper.
\begin{corollary}\label{cor:main}
  Suppose the potential ${f=-\log \target}$ satisfies Assumptions~\ref{as:degen_conv}, \ref{as:mild_dis}, \ref{as:holder},
  and denote by $\rho_k$, the distribution of the $k$-th iterate of \lmc
  initialized with ${x_0 \sim \Gsn(x,\id)}$ for any $x \in \reals^d$ and $\Delta_0$ 
  upper bounding the error at initialization (cf. Lemma~\ref{lemma:init_point}).
  Then, for a sufficiently small $\eps$ satisfying $\eps \leq \psi$ where
  $\psi$ is defined in \eqref{eq:tol-up},
  if the step size satisfies \eqref{eq:main_step} for
  ${s=2+2\ceil{\log(\frac{6d}{\eps})}}$,
  the iterates of \lmc reaches $\epsilon$-accuracy of the target,
  i.e.  ${\KL{\rho_N}{\target} \leq \eps}$, after $N$ steps satisfying
  \eqn{
    N \leq 
    c
    d^{\frac{\alpha+\decopow+\beta\decopow}{\alpha\beta}}
      \left(1+ (1-\alpha/2)\log(d) \right)^{\frac{1}{\beta}}
      \log{\left(\frac{2\Delta_0}{\eps}\right)}^{1+ \frac{(1+\beta)\decopow}{2\beta}}
      \left(2 + 2\left\lceil\log\left(\frac{6d}{\eps}\right)\right\rceil\right)^\frac{2(1+\beta)\decopow}{\beta}
      \left(\frac{2}{\eps}\right)^{\frac{1}{\beta}},
    }
    where $c$ is a constant independent of $d$ and $\epsilon$, and given as
  $$
    c =
    e^{\frac{(1+\alpha)(1+\beta)\decopow}{\alpha\beta}}
    \sigma^\frac{1}{\beta} \left(\frac{64e^{2\pert}}{1\wedge \decof}\right)^{1+\frac{1+\decopow+\beta\decopow}{\beta}}
    \left(\frac{3a + 2b + 3}{1 \wedge a}\right)^\frac{2(1+\beta)\decopow}{\alpha\beta}.
  $$
\end{corollary}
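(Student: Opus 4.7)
The plan is to specialize Theorem~\ref{thm:main} by tuning the free moment order $s$ (and hence $\gamma = \gamma(s)$) so that the $d^{\gamma}$ and $\epsilon^{-\gamma}$ factors in the step count collapse to absolute constants, leaving only the optimal polynomial dependence $d^{(\alpha+\theta+\beta\theta)/(\alpha\beta)}\epsilon^{-1/\beta}$ up to logarithmic factors. Concretely, I would set $s = 2 + 2\lceil \log(6d/\epsilon)\rceil$, which is an even integer at least $4$, so Theorem~\ref{thm:main} applies. With this choice,
\[
\gamma = \frac{(1+\beta)\theta}{\beta(s-2)} = \frac{(1+\beta)\theta}{2\beta \lceil \log(6d/\epsilon)\rceil},
\qquad s\gamma \leq \frac{2(1+\beta)\theta}{\beta},
\]
the latter using $s/(s-2)\leq 2$ for $s\geq 4$. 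The inequality $\gamma \log(6d/\epsilon) \leq (1+\beta)\theta/(2\beta)$ is the workhorse of the reduction.

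Next I would substitute this $s$ into the step-count expression \eqref{eq:step_count} from Theorem~\ref{thm:main} and control each $\gamma$-dependent factor separately. Using $\log d \leq \log(6d/\epsilon)$ and $\log(2/\epsilon) \leq \log(6d/\epsilon)$, one obtains $d^{\gamma}\leq e^{(1+\beta)\theta/(2\beta)}$ and $(2/\epsilon)^{\gamma}\leq e^{(1+\beta)\theta/(2\beta)}$; these contribute only to the absolute prefactor $c$. Assuming without loss of generality that $\log(2\Delta_0/\epsilon)\geq 1$ (otherwise the bound is trivial), the factor $\log(2\Delta_0/\epsilon)^{\gamma}$ is dominated by $\log(2\Delta_0/\epsilon)^{(1+\beta)\theta/(2\beta)}$, producing the exponent $1+(1+\beta)\theta/(2\beta)$ that appears in the statement.

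The main obstacle is controlling $c_\gamma$, since it contains both $(16\lambda)^{1+1/\beta+2\gamma}$ and a factor involving $s^{s}$ raised to the power $\gamma$. Using the bound $s\gamma \leq 2(1+\beta)\theta/\beta$, the troublesome term becomes
\[
s^{s\gamma} \leq s^{2(1+\beta)\theta/\beta} = \bigl(2+2\lceil\log(6d/\epsilon)\rceil\bigr)^{2(1+\beta)\theta/\beta},
\]
which is exactly the polylogarithmic factor appearing in the corollary's bound. The other piece of $c_\gamma$, namely $((3a+2b+3)/(1\wedge a))^{(s-2)/\alpha+1}$ raised to the power $\gamma$, simplifies by the same $s\gamma$-bound (together with $(s-2)/\alpha \leq s$) to $((3a+2b+3)/(1\wedge a))^{2(1+\beta)\theta/(\alpha\beta)}$, producing the stated exponent in the constant $c$. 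Substituting $\lambda = 4e^{2\pert}\decof^{-(s-2)/(s-2+2\theta)}$ from Theorem~\ref{thm:MLSI} and using $\decof^{-(s-2)/(s-2+2\theta)} \leq (1/(1\wedge \decof))$ yields $(16\lambda)^{1+1/\beta+2\gamma} \lesssim (64e^{2\pert}/(1\wedge \decof))^{1+(1+\theta+\beta\theta)/\beta}$, matching the $\decof$-factor in $c$.

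Finally, I would handle the initialization term $\M_s(\rho_0+\target)$ appearing inside $c_\gamma$: by Lemma~\ref{lemma:init_point} (invoked with $x_0\sim \Gsn(x,\id)$), one gets an explicit polynomial-in-$d$ bound on $\M_s(\rho_0+\target)$, and in particular the max with $s^s\bigl((3a+2b+3)/(1\wedge a)\bigr)^{(s-2)/\alpha+1}$ will be the latter for the chosen $s$, so no extra dimension factor is incurred. Collecting all constant factors produced above (the $e^{(1+\alpha)(1+\beta)\theta/(\alpha\beta)}$ absorbs the various $e^{\cdot/\beta}$ and $e^{\cdot/(\alpha\beta)}$ terms arising from $d^\gamma$, $\epsilon^{-\gamma}$, and the $(s-2)/\alpha+1 \leq s/\alpha$ reductions), and combining with the explicit $\sigma^{1/\beta}$ prefactor, gives the stated $c$, completing the proof. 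The only step requiring genuine care is verifying that the admissibility condition $\epsilon \leq \psi$ (from \eqref{eq:tol-up}) is compatible with the choice of $s$, which it is because $\psi$ is defined independently of $s$ in Theorem~\ref{thm:main}.
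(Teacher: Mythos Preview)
Your proposal is essentially the paper's own argument: choose $s=2+2\lceil\log(6d/\eps)\rceil$, use $\gamma\log(6d/\eps)\le (1+\beta)\decopow/(2\beta)$ to reduce $d^{\gamma}$ and $(2/\eps)^{\gamma}$ to absolute constants, bound $s\gamma\le 2(1+\beta)\decopow/\beta$ to control the $s^{s\gamma}$ term, and use $\lambda\le 4e^{2\pert}/(1\wedge\decof)$ to simplify the $\lambda$-factor in $c_\gamma$.

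There is one concrete misstep. You invoke Lemma~\ref{lemma:init_point} to bound $\M_s(\rho_0+\target)$, but that lemma only bounds the initial KL-divergence $\KL{\rho_0}{\target}\le\Delta_0$; it says nothing about moments. What the paper actually does is compute the Gaussian moments directly, $\M_s(\rho_0)\le (2ds)^{s/2}$, and then invoke Lemma~\ref{lemma:target-moment-bound} (whose proof is deferred to Section~\ref{sec:proof-moment}) to get $\M_s(\target)\le((a+b+3)/a)^{s/\alpha}s^{s/\alpha}d^{s/\alpha}$, yielding the combined bound $\M_s(\rho_0+\target)\le 2((3a+b+3)/a)^{s/\alpha}s^{s/\alpha}d^{s/\alpha}$. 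With this in hand, the first branch of the max in $c_\gamma$, namely $\M_s(\rho_0+\target)/(16d^{(s-2)/\alpha+1})$, carries at most a residual $d^{2/\alpha-1}$ factor and an $s^{s/\alpha}$ factor; raised to the power $\gamma$ both become harmless via the same $\gamma\log(6d/\eps)$ and $s\gamma$ estimates you already use. So your claim that ``the max will be the latter'' is neither needed nor quite justified: the paper simply bounds both branches by the same expression and absorbs everything into $c$ and the polylogarithmic factor. Once you replace the wrong citation with the Gaussian moment computation and Lemma~\ref{lemma:target-moment-bound}, your argument is complete and matches the paper.
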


The above corollary that is proved in Section~\ref{sec:post-proof}, implies that the \lmc algorithm achieves $\eps$ accuracy
of the target in KL-divergence in 
${\tO\raisebox{.5ex}{\big(}d^{\frac{\alpha+(1+\beta)\decopow}{\alpha\beta}}
  \eps^{-\frac{1}{\beta}}\raisebox{.5ex}{\big)}}$ steps.
Whenever the tail growth of the potential is superlinear and 
behaves like $\norm{x}^\alpha$ for ${\alpha\in(1,2]}$,
Assumption~\ref{as:degen_conv}
holds for ${\decopow=2-\alpha}$; thus, Corollary~\ref{cor:main}
can be invoked for this choice of $\decopow$.
Therefore, in this case the convergence rate is simply
${\tO\raisebox{.5ex}{\big(}d^{\frac{2}{\alpha} (1+\frac{1}{\beta}) -1}
\eps^{-\frac{1}{\beta}}\raisebox{.5ex}{\big)}}$.
On the other hand, when the potential has linear tail growth (i.e. $f(x) \sim \norm{x}$),
by setting ${\tf=\left(1+\norm{x}^{1+\tau}\right)^{1/(1+\tau)}}$ where $\tau \in (0,1)$,
one can verify that Assumption~\ref{as:degen_conv}
holds for ${\decopow=2+\tau}$. By tuning this parameter with ${\tau=1/\log \left(6d\right)}$,
we obtain a convergence rate of 
${\tO\raisebox{.5ex}{\big(}d^{2+\frac{3}{\beta}}\eps^{-\frac{1}{\beta}}\raisebox{.5ex}{\big)}}$.
Putting this all together,
one can simply use $\decopow={2-\alpha\ind{\alpha \neq 1}}$,
which yields the advertised convergence rate
${\tO \raisebox{.5ex}{\big(}d^{\frac{1}{\beta} + (1+\frac{1}{\beta})\left(\frac{2}{\alpha}-\ind{\alpha \neq 1}\right)}
\eps^{-\frac{1}{\beta}}\raisebox{.5ex}{\big)}}$.
We emphasize that, in moderate dimensions where $d \ll \eps^{-1}$,
the rate only depends on the order of smoothness $\beta$,
whereas in high dimensions where $d=\mathcal{O}(\eps^{-1})$,
the tail growth rate $\alpha$ enters the rate through the dimension dependence.

The obtained rate is continuous in the domain $\alpha \in (1,2]$ and $\beta\in(0,1]$;
however, there is a discontinuous jump at $\alpha=1$ due to lack of convexity.
One can verify that $\decopow=1$ implies a tail growth of $\|x\|\log(1+\|x\|)$
which is superlinear, but in terms of Assumption~\ref{as:mild_dis}, we still
have $\alpha=1$. In this case, the tail growth cannot be explained with a polynomial in $\|x\|$;
therefore, $\decopow =1 \neq {2-\alpha\ind{\alpha \neq 1}}$ because of the
additional logarithmic factor.
One should also note that
$\alpha=1$ is the exception in the sense that introducing additional $\log$ factors
when $\alpha>1$ does not change $\decopow$, and ultimately the convergence rate stays the same.
See Section~\ref{sec:examples} for a more detailed discussion with examples demonstrating these observations.

\section{Further Implications}\label{sec:implications}
\subsection{Convex potentials}
The next proposition shows that convex potentials have at least linear growth.
\begin{proposition}\label{prop:conv-growth}
  For any differentiable convex potential ${f:\reals^d \to \reals}$,
  there exist constants $a, b >0$ such that
  $${f(x) \geq a \|x\| - b}, \ \ \text{ for all }\ \ x\in\reals^d.$$
\end{proposition}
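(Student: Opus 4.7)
The claim implicitly uses that $f$ is a potential, so $e^{-f}$ is integrable (the paper takes $\target=e^{-f}$ as a probability density); my goal is to upgrade this integrability, via convexity, to coercive linear growth. I would proceed in three stages.

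\emph{Coercivity from integrability.} Each sublevel set $A_c=\{f\leq c\}$ is convex (from convexity of $f$) and satisfies $e^{-c}|A_c|\leq\int e^{-f}<\infty$, so $|A_c|<\infty$. By a standard convex-geometry fact, a convex subset of $\reals^d$ with finite volume and nonempty interior is bounded; some $A_{c_0}$ must have interior, since otherwise every $A_c$ would lie in a hyperplane, making $\reals^d=\bigcup_{c\in\naturals}A_c$ a countable union of measure-zero sets. Hence $A_c$ is bounded for every $c\geq c_0$, and $f(x)\to\infty$ as $\|x\|\to\infty$.

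\emph{Positive asymptotic slope in each direction.} For each unit $v$, the restriction $g_v(r)=f(rv)$ is convex on $[0,\infty)$ and tends to $\infty$ by coercivity. Convexity forces $(g_v(r)-g_v(0))/r$ to be nondecreasing in $r>0$, so $s(v):=\lim_{r\to\infty}f(rv)/r$ exists in $(0,\infty]$, with strict positivity because a zero limit would keep $g_v$ bounded above. Writing $s(v)=\sup_{r>0}(f(rv)-f(0))/r$ exhibits $s$ as a pointwise supremum of continuous functions of $v$, hence lower semicontinuous on the compact sphere $S^{d-1}$, so $a:=\min_{v\in S^{d-1}}s(v)>0$.

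\emph{Uniform linear lower bound.} Let $\tilde g_R(v):=(f(Rv)-f(0))/R$, continuous in $v$ and nondecreasing in $R$. Then $m(R):=\inf_v \tilde g_R(v)$ is nondecreasing, attained at some $v_R$. Extracting a convergent subsequence $v_{R_k}\to v^*$ by compactness and using the monotonicity inequality $\tilde g_{R_k}(v_{R_k})\geq \tilde g_{R_0}(v_{R_k})\to \tilde g_{R_0}(v^*)$ for each fixed $R_0$, and then sending $R_0\to\infty$, yields $\lim m(R)\geq s(v^*)\geq a$. So for any $\epsilon\in(0,a)$ there is $R_\epsilon$ with $f(x)\geq (a-\epsilon)\|x\|+f(0)$ whenever $\|x\|\geq R_\epsilon$; the continuous (hence bounded) behaviour of $f$ on the compact ball $\{\|x\|\leq R_\epsilon\}$ can be absorbed into a constant $b>0$, yielding the claimed $f(x)\geq (a-\epsilon)\|x\|-b$ globally.

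The conceptual crux is Step~1, where integrability of $e^{-f}$ is lifted to coercivity via the convex-geometry observation that convex sets of finite positive volume in $\reals^d$ are bounded. The technical delicacy lies in Step~3: the pointwise monotone convergence $\tilde g_R\uparrow s$ on $S^{d-1}$ is not a priori uniform, and one needs the compactness-plus-lower-semicontinuity argument above to upgrade a ray-by-ray linear lower bound to a uniform-in-direction one.
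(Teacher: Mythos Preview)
Your proof is correct and reaches the same conclusion as the paper, but via a genuinely different route in both major stages.

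For coercivity, the paper first isolates a one-dimensional lemma (a convex $f:\reals\to\reals$ with $\int e^{-f}<\infty$ is bounded below), and then in $\reals^d$ argues by contradiction: assuming $f$ stays bounded along some ray $u_1$, it uses the gradient characterization of convexity to force $\inner{\grad f(x),u_1}\leq 0$ everywhere and derives a contradiction by manipulating the integral $\int e^{-f}$. Your argument is cleaner here: you go straight through the sublevel sets $A_c$, observing that convexity makes them convex, integrability makes their volume finite, and a convex body of finite volume must be bounded. This avoids gradients entirely and is the more transparent route to coercivity.

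For the linear lower bound, the trade-off reverses. The paper defines $B(u)=\inf\{t>0:f(tu)\geq 1+f(0)\}$, notes that $B$ is \emph{continuous} on the sphere (not merely LSC), and takes $t_0=\max_u B(u)$; then a single convexity interpolation between $0$ and $tu$ gives $f(tu)\geq t/t_0+f(0)$ for all $t>t_0$ and all $u$ simultaneously. This sidesteps the asymptotic-slope machinery: no $s(v)$, no lower semicontinuity, no subsequence extraction. Your Step~3 is correct but works harder than necessary, since the uniformity you carefully extract via the $m(R)$ argument is immediate from the paper's choice of $B$. In short: your coercivity step is more elegant, the paper's linear-growth step is more direct; both assemble into valid proofs.
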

The proof of the above Proposition is deferred to Section~\ref{sec:convex-proof},
and it follows from showing that any convex potential is
in fact coercive.

If a convex potential has tail growth rate $\alpha$, i.e. $f(x) \geq a \|x\|^\alpha - b$ for some $a,b>0$, then it is straightforward to show that $\alpha$-dissipativity holds using
the Taylor's theorem. Hence, one can argue that
for convex potentials,
the limiting factor for the applicability of our results is
the order of smoothness of the potential function $f$.
For example, consider ${f_1(x)=\sqrt{1+\norm{x}^2}}$ and ${f_2(x)=\norm{x}}$.
They are both convex with linear growth and satisfy Assumptions~\ref{as:degen_conv} and 
\ref{as:mild_dis} with the same $\decopow$ and $\alpha$.
While $f_1$ is smooth, $f_2$ does not satisfy Assumption~\ref{as:holder} for any $\beta$.
Therefore, our results, in particular Corollary~\ref{cor:main},
are applicable to $f_1$, but not to $f_2$.

\subsection{Non-convex potentials}
Assumptions~\ref{as:degen_conv}, \ref{as:mild_dis} and \ref{as:holder}
are robust to bounded perturbations. In other words, if these
assumptions are satisfied for a potential, then they also hold for
its finite perturbations.
The following lemma formalizes this statement.

\begin{lemma}\label{lemma:perturb}
  Let $f$ be a potential satisfying Assumptions~\ref{as:degen_conv},
  \ref{as:mild_dis} and \ref{as:holder} for ${\alpha>1}$.
  Then, for any bounded function $\phi$
  with $\beta$-\Holder continuous and bounded gradient, 
  ${f+\phi}$ can be normalized to a potential
  also satisfying Assumptions~\ref{as:degen_conv}, \ref{as:mild_dis}
  and \ref{as:holder}.
  
  Further, if we additionally have ${\sup_{x\in\reals^d}\norm{\grad \phi(x)} < a}$
  for the constant $a$ as in Assumption~\ref{as:mild_dis},
  the above result also holds for ${\alpha=1}$.
\end{lemma}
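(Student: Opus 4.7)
\begin{proof-sketch}
The plan is to verify each of the three assumptions for the normalized potential $g \defeq f + \phi - \log Z$, where $Z = \int e^{-(f+\phi)(x)}\dx$ is the normalizing constant. Since $\phi$ is bounded, $|\log Z| \leq \|\phi\|_\infty$, so $g$ is well-defined and differs from $f+\phi$ only by a finite constant; thus $\grad g = \grad f + \grad \phi$ and $\Hess g = \Hess f + \Hess \phi$. I will check the three assumptions in turn, using the same auxiliary function $\tf$ from Assumption~\ref{as:degen_conv} for $f$.

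For Assumption~\ref{as:degen_conv}, the triangle inequality gives
\[
  \|g - \tf\|_\infty \leq \|f - \tf\|_\infty + \|\phi\|_\infty + |\log Z| \leq \pert + 2\|\phi\|_\infty,
\]
so the same $\tf$ (with its unchanged Hessian lower bound) works for $g$ with the updated perturbation constant $\pert' \defeq \pert + 2\|\phi\|_\infty$. Assumption~\ref{as:holder} is equally immediate: since $\phi$ is assumed to have $\beta$-\Holder continuous gradient, say with constant $\smooth_\phi$, the triangle inequality gives
\[
  \|\grad g(x) - \grad g(y)\| \leq \smooth \|x-y\|^\beta + \smooth_\phi \|x-y\|^\beta,
\]
yielding the same order of smoothness $\beta$ with constant $\smooth + \smooth_\phi$. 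Similarly, the gradient growth bound in \eqref{eq:grad-growth} passes through with $\growth' \defeq \growth + \sup_x \|\grad \phi(x)\|$, keeping the same exponent $\zeta$.

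The main content, and the step I expect to be the only nontrivial one, is re-establishing the dissipativity condition \eqref{eq:mild-dis}. Writing $C_\phi \defeq \sup_{x} \|\grad \phi(x)\|$, Cauchy--Schwarz gives
\[
  \langle \grad g(x), x\rangle \geq a \|x\|^\alpha - b - C_\phi \|x\|.
\]
When $\alpha > 1$, Young's inequality in the form $C_\phi \|x\| \leq \tfrac{a}{2}\|x\|^\alpha + C(a,\alpha,C_\phi)$ absorbs the linear term into the leading growth, yielding
\[
  \langle \grad g(x), x\rangle \geq \tfrac{a}{2}\|x\|^\alpha - \big(b + C(a,\alpha,C_\phi)\big),
\]
so Assumption~\ref{as:mild_dis} holds with new constants $a' = a/2$ and $b'$ adjusted accordingly. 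This is precisely where the boundedness of $\grad \phi$ alone is not enough in the borderline case $\alpha = 1$: the unwanted term $-C_\phi\|x\|$ then has the same order as the leading $a\|x\|$, and cannot be absorbed by Young's inequality. However, under the additional hypothesis $C_\phi < a$, we simply bound
\[
  \langle \grad g(x), x\rangle \geq (a - C_\phi)\|x\| - b,
\]
which is \eqref{eq:mild-dis} with the strictly positive constant $a' = a - C_\phi$. Collecting the three checks completes the proof.
\end{proof-sketch}
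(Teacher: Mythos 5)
Your proposal is correct and takes essentially the same approach as the paper: verify each assumption for the perturbed potential in turn, using the triangle inequality for the $\|\cdot\|_\infty$ perturbation bound, the gradient growth, and the \Holder condition, and handling the $\alpha$-dissipativity by absorbing the $C_\phi\|x\|$ error term into the leading $a\|x\|^\alpha$ term for $\alpha>1$ (via Young's inequality, which the paper leaves implicit) and by direct subtraction when $\alpha=1$ under $C_\phi < a$. The only cosmetic differences are that the paper ignores the normalizing constant outright (noting it leaves gradients and Hessians unchanged, and it can equally be absorbed into a shift of $\tf$), while you track it explicitly and get the slightly looser but still constant perturbation bound $\pert + 2\|\phi\|_\infty$.
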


\begin{proof-of-lemma}[\ref{lemma:perturb}]
Let the bounds on $\phi$ and ${\grad \phi}$ be $\kappa_1$ and $\kappa_2$, respectively.
Since $\phi$ is bounded, ${\int e^{-f-\phi}}$ is finite,
therefore it can be normalized to be a probability distribution.
We ignore the normalizing constant since it does not change the gradient and the Hessian.

Assumption~\ref{as:degen_conv} holds for $f$, meaning that
there exists a $\tf$ such that ${\norm{f - \tf}_\infty < \pert}$,
and $\tf$ satisfies the conditions in Assumption~\ref{as:degen_conv}.
Since ${\vert \phi \vert \leq \kappa_1}$, we have
\begin{equation*}
  \norm{f + \phi - \tf}_\infty < \pert + \kappa_1,
\end{equation*}
which proves that Assumption~\ref{as:degen_conv} also holds for ${f+\phi}$.
For Assumption~\ref{as:mild_dis}, we write
\begin{equation*}
  \dotprod{\grad f(x)+ \grad \phi(x)}{x}
  \geq
  a \norm{x}^\alpha -b -\dotprod{\phi(x)}{x}
  \geq
  a \norm{x}^\alpha -b -\kappa_2 \norm{x}
  \geq
  a'\norm{x}^\alpha - b',
\end{equation*}
for some $a',b' >0$, where in the last step we used ${\alpha > 1}$.
When $\alpha=1$ this step is still correct because ${\kappa_2 < a}$.
Growth part remains true since the perturbation has bounded gradient
\begin{equation*}
  \norm{\grad f + \grad \phi} \leq \norm{\grad f} + \norm{\grad \phi}
  \leq (\kappa_2 + \growth)\left(1+\norm{x}^\zeta\right),
\end{equation*}
which implies that $g$ satisfies Assumption~\ref{as:mild_dis}.
Finally, for Assumption~\ref{as:holder}, since both ${\grad \phi}$ and ${\grad f}$ are $\beta$-\Holder continuous,
so is their summation for the same order of smoothness $\beta$.
\end{proof-of-lemma}

The previous lemma shows that Corollary~\ref{cor:main} is robust to
finite perturbations.
Moreover, investigating the proof reveals that the growth rate $\alpha$ and the order of smoothness $\beta$ do not 
change (along with $\pert$ and $\decopow$), which means that the convergence rate
of \lmc for the perturbed potential is the same as that for the original potential.

Adding a non-convex perturbation to a convex potential
might result in a non-convex potential; consider for example the potential $f(x)=\norm{x}^\alpha+5\cos{(\norm{x})}$.
The previous lemma implies that the convergence rate remains the same which means that
the main factor in the performance in high dimensions is the tail growth rate
rather than global curvature properties like (strong) convexity.
It is worth noting that the
order of smoothness (cf. Assumption~\ref{as:holder}) is a global assumption unlike the tail growth condition.
For example, $\norm{x}$ has a discontinuous
gradient at the origin; therefore, Assumption~\ref{as:holder} fails for any $\beta$.

\subsection{Other probability metrics}
In this section, we use Csisz\'ar-Kullback-Pinsker (CKP) inequalities \cite{bolley2005weighted}
as well as Talagrand's inequality to translate our result from 
KL-divergence to other measures of distance such as total variation (TV)
and $L_\alpha$-Wasserstein metrics.
The proofs are straightforward, and postponed to Section~\ref{sec:post-proof}.
In order to reach the same level of accuracy in different probability metrics, one
needs to adapt the step size accordingly.
This requires a different upper bound on the accuracy $\eps$ in each metric,
which we refer to an explicit statement in their respective results.
\begin{corollary}[Total variation distance]\label{cor:tv_conv}
  Instantiate the assumptions and notation of Theorem~\ref{thm:main}.
  For a sufficiently small $\epsilon$ satisfying $\eps \leq \sqrt{\psi/2}$ where
  $\psi$ is defined in \eqref{eq:tol-up},
  if \lmc~\eqref{eq:ULA} is initialized with ${x_0\sim \Gsn(x,\id)}$ for any $x \in \reals^d$,
  then, $${N = \tO \raisebox{.5ex}{\big(}d^{\frac{\alpha+\decopow+\beta\decopow}{\alpha\beta}}
    \eps^{-\frac{2}{\beta}}\raisebox{.5ex}{\big)}}$$
  steps are sufficient to obtain ${\tv{\rho_N}{\target} \leq \eps}$.
\end{corollary}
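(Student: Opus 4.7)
The plan is to derive the total variation bound as a direct consequence of the KL-divergence bound in Theorem~\ref{thm:main} via the classical Pinsker inequality. Recall that for any two probability densities $\rho$ and $\target$, Pinsker's inequality (a special case of the Csisz\'ar-Kullback-Pinsker family cited in the paper) gives
\begin{equation*}
\tv{\rho}{\target}^2 \leq \tfrac{1}{2}\KL{\rho}{\target}.
\end{equation*}
Hence to guarantee $\tv{\rho_N}{\target} \leq \eps$ it suffices to guarantee $\KL{\rho_N}{\target} \leq 2\eps^2$.

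The strategy then is to invoke Theorem~\ref{thm:main} (or equivalently Corollary~\ref{cor:main}) with accuracy parameter $\eps' = 2\eps^2$ in place of $\eps$. For this substitution to be legitimate we must check that $\eps'$ satisfies the hypothesis $\eps' \leq \psi$ from \eqref{eq:tol-up}; but the assumed bound $\eps \leq \sqrt{\psi/2}$ in the statement is precisely $2\eps^2 \leq \psi$, so the substitution is admissible. Choosing the step size via \eqref{eq:main_step} with $\eps$ replaced by $2\eps^2$, and the moment order as in Corollary~\ref{cor:main}, the same initialization lemma (Lemma~\ref{lemma:init_point}) controls $\Delta_0$ in terms of the fixed problem parameters, and all logarithmic factors involving $\log(\Delta_0/\eps')$ get absorbed into the $\tO$ notation.

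Substituting $\eps' = 2\eps^2$ into the step-count bound of Corollary~\ref{cor:main} yields
\begin{equation*}
N \leq \tO\!\left( d^{\frac{\alpha+\decopow+\beta\decopow}{\alpha\beta}} (2\eps^2)^{-\frac{1}{\beta}} \right) = \tO\!\left( d^{\frac{\alpha+\decopow+\beta\decopow}{\alpha\beta}} \eps^{-\frac{2}{\beta}} \right),
\end{equation*}
which is the advertised rate. The only slight care needed is bookkeeping: the factor $(1+(1-\alpha/2)\log d)^{1/\beta}$ and the powers of $\log(2\Delta_0/\eps')$ and $\lceil \log(6d/\eps') \rceil$ all grow at most polylogarithmically in $d/\eps$ after the substitution, so they are swallowed by the $\tO$ notation.

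I do not expect a serious obstacle: the argument is essentially a change of variables. The only conceptual point is to ensure that the admissibility condition on the accuracy in Theorem~\ref{thm:main} still holds after squaring, which is exactly why the hypothesis is stated as $\eps \leq \sqrt{\psi/2}$ rather than $\eps \leq \psi$.
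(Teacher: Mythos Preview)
Your proposal is correct and follows essentially the same approach as the paper: apply Pinsker's inequality to reduce the TV guarantee to a KL guarantee at accuracy $2\eps^2$, invoke Corollary~\ref{cor:main} at that accuracy, and observe that the hypothesis $\eps \leq \sqrt{\psi/2}$ is exactly the admissibility condition $2\eps^2 \leq \psi$.
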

As before, one can simply use $\decopow={2-\alpha\ind{\alpha \neq 1}}$.
In the case of strongly convex and smooth potentials, i.e. $\alpha=2$ and $\beta=1$,
the above corollary recovers the convergence rate ${\tO\left(d\eps^{-2}\right)}$ in TV distance, which was established in \cite{durmus2017nonasymptotic}.

The next corollary establishes the convergence rate in terms of $L_\alpha$-Wasserstein distance.
\begin{corollary}[$L_\alpha$-Wasserstein distance] \label{cor:wasser_conv}
  Instantiate the assumptions and notation of Theorem~\ref{thm:main}.
  For a sufficiently small $\epsilon$ satisfying \eqref{eq:tol-up-w},
  if \lmc~\eqref{eq:ULA} is initialized with ${x_0\sim \Gsn(x,\id)}$ for any $x \in \reals^d$,
  then,
  $${N = \tO \raisebox{.5ex}{\big(}d^{\frac{3\alpha+\decopow+\beta\decopow}{\alpha\beta}}
    \eps^{-\frac{2\alpha}{\beta}} \raisebox{.5ex}{\big)}}$$
  steps are sufficient to obtain
  ${\wasserpp{\rho_N}{\target}{\alpha} \leq \eps}$.
\end{corollary}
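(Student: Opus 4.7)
The plan is to deduce the $L_\alpha$-Wasserstein guarantee from the KL-divergence guarantee of Corollary~\ref{cor:main} via a weighted Csisz\'ar--Kullback--Pinsker inequality of Bolley--Villani. The pertinent bound reads
\[
\wasserpp{\rho}{\target}{\alpha} \leq C_\alpha \Big( \KL{\rho}{\target}^{1/2} + \KL{\rho}{\target}^{1/(2\alpha)} \Big),
\]
where $C_\alpha = C_\alpha(d)$ is controlled by an exponential moment of the form $\log \int e^{\lambda \|x\|^{\alpha}}\, d\target(x)$ for suitable $\lambda > 0$. This reduces the Wasserstein bound to making the KL-divergence in Corollary~\ref{cor:main} sufficiently small.

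The first key step is to pin down the dimension dependence of $C_\alpha$. The $\alpha$-dissipativity of Assumption~\ref{as:mild_dis}, combined with Taylor's theorem, implies $f(x) \geq (a/\alpha)\|x\|^{\alpha} - b'$ for some constant $b'$ depending on $a,b,\growth$; consequently, $\int e^{\lambda\|x\|^{\alpha}}\, d\target(x) = \int e^{\lambda \|x\|^{\alpha} - f(x)}\,dx$ is finite for every $\lambda < a/\alpha$. Fixing such $\lambda$ and evaluating the resulting sub-exponential integral in a fashion analogous to Lemma~\ref{lemma:init_point}, one obtains the estimate $\log \int e^{\lambda \|x\|^{\alpha}}\, d\target \lesssim d \log d$ up to constants depending on $a, b, \pert$. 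Substituting this into Bolley--Villani yields $C_\alpha \lesssim d^{1/\alpha}$ modulo logarithmic factors that will be absorbed by the $\tO$ notation.

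Next I would invert the inequality. Since $\alpha \geq 1$, we have $1/(2\alpha) \leq 1/2$, so the $\KL{\rho}{\target}^{1/(2\alpha)}$ summand dominates in the regime of small KL-divergence. Hence $\wasserpp{\rho_N}{\target}{\alpha} \leq \eps$ is guaranteed if we demand $\KL{\rho_N}{\target} \leq \eps' \defeq (\eps / (2 C_\alpha))^{2\alpha}$. Feeding $\eps'$ into Corollary~\ref{cor:main} and inserting $C_\alpha \lesssim d^{1/\alpha}$ gives
\[
N \;\lesssim\; d^{\frac{\alpha + \decopow + \beta\decopow}{\alpha\beta}} \, (\eps')^{-1/\beta} \;\lesssim\; d^{\frac{\alpha + \decopow + \beta\decopow}{\alpha\beta} + \frac{2}{\beta}} \eps^{-\frac{2\alpha}{\beta}} \;=\; d^{\frac{3\alpha + \decopow + \beta\decopow}{\alpha\beta}} \eps^{-\frac{2\alpha}{\beta}},
\]
which is precisely the claimed rate. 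The explicit threshold \eqref{eq:tol-up-w} on $\eps$ is then recovered by pushing the hypothesis $\eps' \leq \psi$ of Theorem~\ref{thm:main} through the map $\eps \mapsto (\eps/(2C_\alpha))^{2\alpha}$, producing the dimension-dependent upper bound on the admissible accuracy stated in the corollary.

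The hard part will be two-fold. First, computing the integral $\int e^{\lambda\|x\|^{\alpha} - f(x)}\,dx$ with sharp dimension dependence, while carrying along the multiplicative constants so that the $d^{1/\alpha}$ scaling of $C_\alpha$ is correct — this is where both Assumption~\ref{as:degen_conv} and Assumption~\ref{as:mild_dis} enter, since the perturbation constant $\pert$ contributes through a factor of $e^{\pert}$ in the comparison to $\tf$, and one must be careful that the bound on $\log \int e^{\lambda\|x\|^\alpha} d\target$ does not accrue extra polynomial factors in $d$. Second, writing out \eqref{eq:tol-up-w} explicitly requires bookkeeping against the constants in Theorem~\ref{thm:main}; however, since those thresholds enter $N$ only through logarithms, mild crudeness is affordable at this step without degrading the stated rate.
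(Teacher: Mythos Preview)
Your proposal is correct and follows essentially the same route as the paper: apply the Bolley--Villani weighted CKP inequality (the paper's Lemma~\ref{lemma:WCKP}), control the exponential moment constant via $\alpha$-dissipativity (the paper's Lemma~\ref{lemma:exp_mom} gives exactly $C_\alpha = \tO(d^{1/\alpha})$), identify $\KL{\rho}{\target}^{1/(2\alpha)}$ as the dominant term, and feed the resulting KL target $(\eps/C_\alpha)^{2\alpha}$ into Corollary~\ref{cor:main}. Two small inaccuracies to flag: the first exponent in the paper's version of Bolley--Villani is $\KL{\rho}{\target}^{1/\alpha}$ rather than $\KL{\rho}{\target}^{1/2}$ (this does not affect your argument since $1/(2\alpha)\le 1/\alpha$ and the second term still dominates), and the exponential moment bound relies only on Assumption~\ref{as:mild_dis}---Assumption~\ref{as:degen_conv} and the perturbation constant $\pert$ do not enter that step, so your anticipated ``hard part'' involving $e^\pert$ is not actually needed here.
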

For functions that have quadratic growth, the 
above corollary is not optimal, because our result relies on the CKP inequality (cf. Lemma~\ref{lemma:WCKP}) which does not
recover Talagrand's inequality when ${\alpha=2}$ \cite{bolley2005weighted}. Therefore,
the case $\alpha=2$ is handled separately, where Talagrand's inequality is available,
which ultimately yields the following improved rate.
\begin{corollary}[$L_2$-Wasserstein distance]\label{cor:wasser_2}
  Suppose $f$ is a smooth potential with quadratic growth i.e. a potential 
  satisfying Assumption~\ref{as:degen_conv} with ${\decopow=0}$,
  Assumption~\ref{as:mild_dis} with $\alpha=2$ and
  Assumption~\ref{as:holder} with $\beta=1$.
  For a sufficiently small $\epsilon$ satisfying~\eqref{eq:w2-tol-up},
  and for a step size satisfying~\eqref{eq:main_step},
  if the \lmc algorithm~\eqref{eq:ULA} is initialized with an error upper bounded by $\Delta_0$,
  i.e. ${\KL{\rho_0}{\target} \leq \Delta_0}$,
  then,
  $${N = \tO \raisebox{.2ex}{\big(}d\eps^{-2} \raisebox{.2ex}{\big)}}$$
  steps are sufficient to obtain
  ${\wasserpp{\rho_N}{\target}{2} \leq \eps}$.
\end{corollary}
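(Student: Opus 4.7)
The plan is to chain Theorem~\ref{thm:main} with Talagrand's $T_2$ transport inequality, which follows from the genuine log-Sobolev inequality via the Otto--Villani theorem. In the regime $\decopow = 0$, $\alpha = 2$, $\beta = 1$, Theorem~\ref{thm:MLSI} degenerates (any $s>2$ yields $\delta = 0$) to the standard LSI
$$\KL{\rho}{\target} \leq \lambda \I{\rho}{\target}, \qquad \lambda = 4 e^{2\pert}/\decof,$$
with no moment-dependent correction. Hence the target $\target$ satisfies a classical LSI with an explicit constant that is independent of $s$.

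Next, I would invoke Otto--Villani: an LSI with constant $\lambda$ implies the Talagrand $T_2$ inequality
$$\wassersq{\rho}{\target} \leq 2 \lambda\, \KL{\rho}{\target}$$
for every probability density $\rho$ absolutely continuous with respect to $\target$. Consequently, to guarantee $\wasserpp{\rho_N}{\target}{2} \leq \eps$ it suffices to produce $\KL{\rho_N}{\target} \leq \eps'$ with $\eps' \defeq \eps^2/(2\lambda)$.

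Finally, I would apply Theorem~\ref{thm:main} with accuracy parameter $\eps'$ in place of $\eps$. Because $\decopow = 0$, the parameter $\gamma$ defined in \eqref{eq:gamma} vanishes for every admissible $s$, so no tuning of the moment order is required. Plugging $\alpha = 2$, $\beta = 1$, $\decopow = 0$, $\gamma = 0$ into the step count \eqref{eq:step_count} yields
$$N = \tO(d\, \eps'^{-1}) = \tO\bigl(d\, (2\lambda)/\eps^2\bigr) = \tO(d\, \eps^{-2}),$$
as claimed. The admissibility condition $\eps' \leq \psi$ of Theorem~\ref{thm:main}, together with the substitution $\eps \leadsto \eps^2/(2\lambda)$ inside the step-size formula \eqref{eq:main_step}, translates directly into the upper bound on $\eps$ recorded in \eqref{eq:w2-tol-up}; this affects the constants but not the advertised rate.

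There is no genuine obstacle here. The only subtle point is recognizing that once $\decopow = 0$ restores the full LSI, the $L_2$-Wasserstein guarantee comes essentially for free through Otto--Villani, bypassing the weaker CKP-based route (Lemma~\ref{lemma:WCKP}) that drove Corollary~\ref{cor:wasser_conv}. The improvement from $\tO(d\,\eps^{-4})$ (what CKP would give by squaring the TV rate from Corollary~\ref{cor:tv_conv}) to $\tO(d\,\eps^{-2})$ is precisely the gap between Pinsker's inequality and Talagrand's $T_2$, so the whole proof amounts to identifying that Talagrand is available exactly in this regime.
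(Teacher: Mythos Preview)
Your proposal is correct and matches the paper's own proof essentially line for line: the paper also observes that $\decopow=0$ collapses Theorem~\ref{thm:MLSI} to a genuine LSI with constant $4e^{2\pert}/\decof$, invokes Otto--Villani to obtain Talagrand's inequality, and then applies Theorem~\ref{thm:main} at KL-accuracy $\eps^2\decof/(16e^{2\pert})$ with $\gamma=0$. The only discrepancy is an immaterial factor of $2$ in the Talagrand constant (a matter of normalization convention), which does not affect the $\tO(d\eps^{-2})$ rate.
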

We emphasize that ${\decopow=0}$ corresponds to potentials with tail growth rate ${\alpha=2}$.
Since in this case $\gamma=0$, there is no need to tune $s$ to a specific moment.
The above
corollary only covers smooth potentials, because Assumption~\ref{as:holder} implies
that the gradient of the potential has a tail growth rate upper bounded by $\beta$,
which in turn upper bounds the tail growth of $f$ with $\beta+1$.
Thus, the only feasible value for $\beta$ is $1$.

\section{Applications}\label{sec:examples}
In this section, we apply the results of Sections~\ref{sec:main-res} and \ref{sec:implications}
to various illustrative potential functions.
We begin with a few basic examples in order to demonstrate the effect
of tail growth and order of smoothness on the convergence of \lmc.
\subsection{Pedagogical examples}
\textbf{Example 1 ( Weakly smooth potential with subquadratic tails)}
Consider the potential function ${f(x) = \norm{x}^\alpha}$ for ${\alpha \in (1,2)}$.
This potential is not smooth with an unbounded Hessian near the origin, and
its tails are subquadratic which means the tails of the target ${\target \propto e^{-f}}$ are 
heavier than those of the Gaussian distribution.
In the following, we show that
$f(x)$ satisfies Assumptions~\ref{as:degen_conv}, \ref{as:mild_dis} and \ref{as:holder};
hence, the results of Section~\ref{sec:main-res} can provide a convergence rate for
sampling from this particular potential using the \lmc algorithm. For Assumption~\ref{as:degen_conv}, consider the following function ${\tf (x) = \left(1+\norm{x}^2\right)^{\alpha/2}}$.
Since
${\Hess \tf (x) \succeq \frac{\alpha(\alpha-1)}{2}(1+\norm{x}^2/4)^{\alpha/2 - 1} \id}$
and ${\|f-\tf\|_\infty \leq 1}$, the function $f$ satisfies Assumption~\ref{as:degen_conv} with ${\decopow=2-\alpha}$.
Simple calculation shows that \eqref{eq:mild-dis} in Assumption~\ref{as:mild_dis} is also satisfied.
For Assumption~\ref{as:holder}, we have
${\grad f(x) = \alpha x \norm{x}^{\alpha-2}}$, which
is ${\beta}$-\Holder with ${\beta=\alpha-1}$ (cf. Lemma~\ref{lemma:holder_potential}),
which also
implies that the growth condition in Assumption~\ref{as:mild_dis} holds with ${\zeta=\alpha-1}$.
Therefore, Corollary~\ref{cor:main} implies that we can reach $\eps$ accuracy in KL-divergence
after taking
${\tO \raisebox{.5ex}{\big(} d^{\frac{3-\alpha}{\alpha-1}}\eps^{-\frac{1}{\alpha-1}} \raisebox{.5ex}{\big)}}$ steps.

It is important to highlight the impact of
the order of smoothness on the convergence rate. More specifically, if one has the smooth potential
${f(x)=(1+\norm{x}^2)^{\alpha/2}}$ which has the same tail growth as $\|x\|^\alpha$,
then the convergence rate becomes 
${\tO \raisebox{.5ex}{\big(}d^{\frac{4-\alpha}{\alpha}}\eps^{-1} \raisebox{.5ex}{\big)}}$.

A finite and bounded perturbation should not change
the above convergence rate. Indeed, consider the function ${\phi(x)=\cos{(\norm{x})}}$, 
which is bounded with bounded first derivative.
Its gradient is given by
$\grad \phi(x) = -\frac{x}{\norm{x}}\sin{(\norm{x})}$
which is Lipschitz continuous; hence,
Lemma~\ref{lemma:bounded_diff_holder} implies that it is also $\beta$-\Holder continuous.
By Lemma~\ref{lemma:perturb}, the rate obtained from Corollary~\ref{cor:main}
is applicable to ${g(x) = \norm{x}^\alpha + 10\cos{(\norm{x})} + \pert}$,
and the convergence rate
${\tO \raisebox{.5ex}{\big(} d^{\frac{3-\alpha}{\alpha-1}}\eps^{-\frac{1}{\alpha-1}} \raisebox{.5ex}{\big)}}$ still holds.
Figure~\ref{fig:toy_eg} demonstrates the behavior of this potential and its gradient when ${\alpha=1.5}$,
in 1 dimension.

\begin{figure}[t]
  \begin{subfigure}{.5\textwidth}
    \centering
    \includegraphics[width=.99\linewidth]{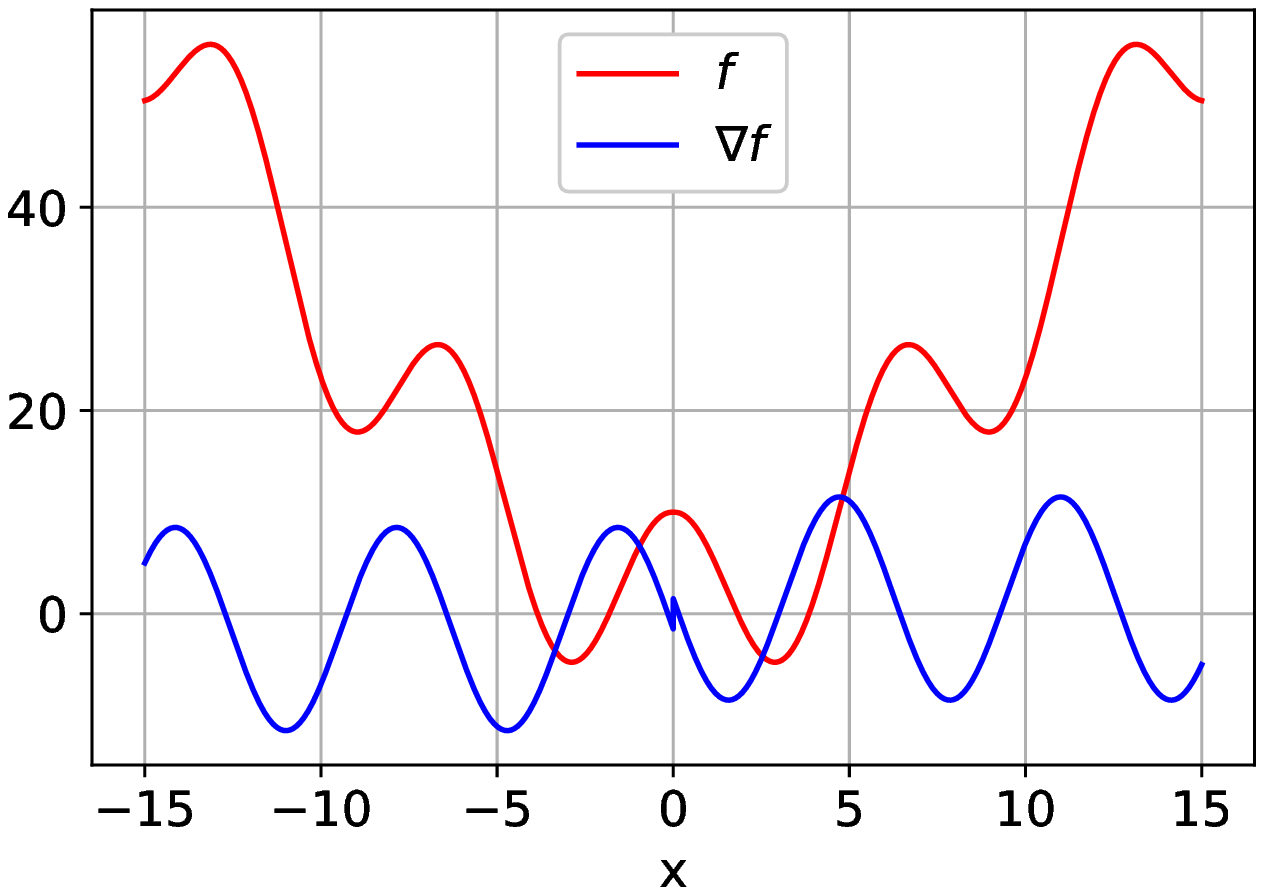}
    \caption{}
    \label{fig:toy_eg}
  \end{subfigure}%
  \begin{subfigure}{.5\textwidth}
    \centering
    \includegraphics[width=.99\linewidth]{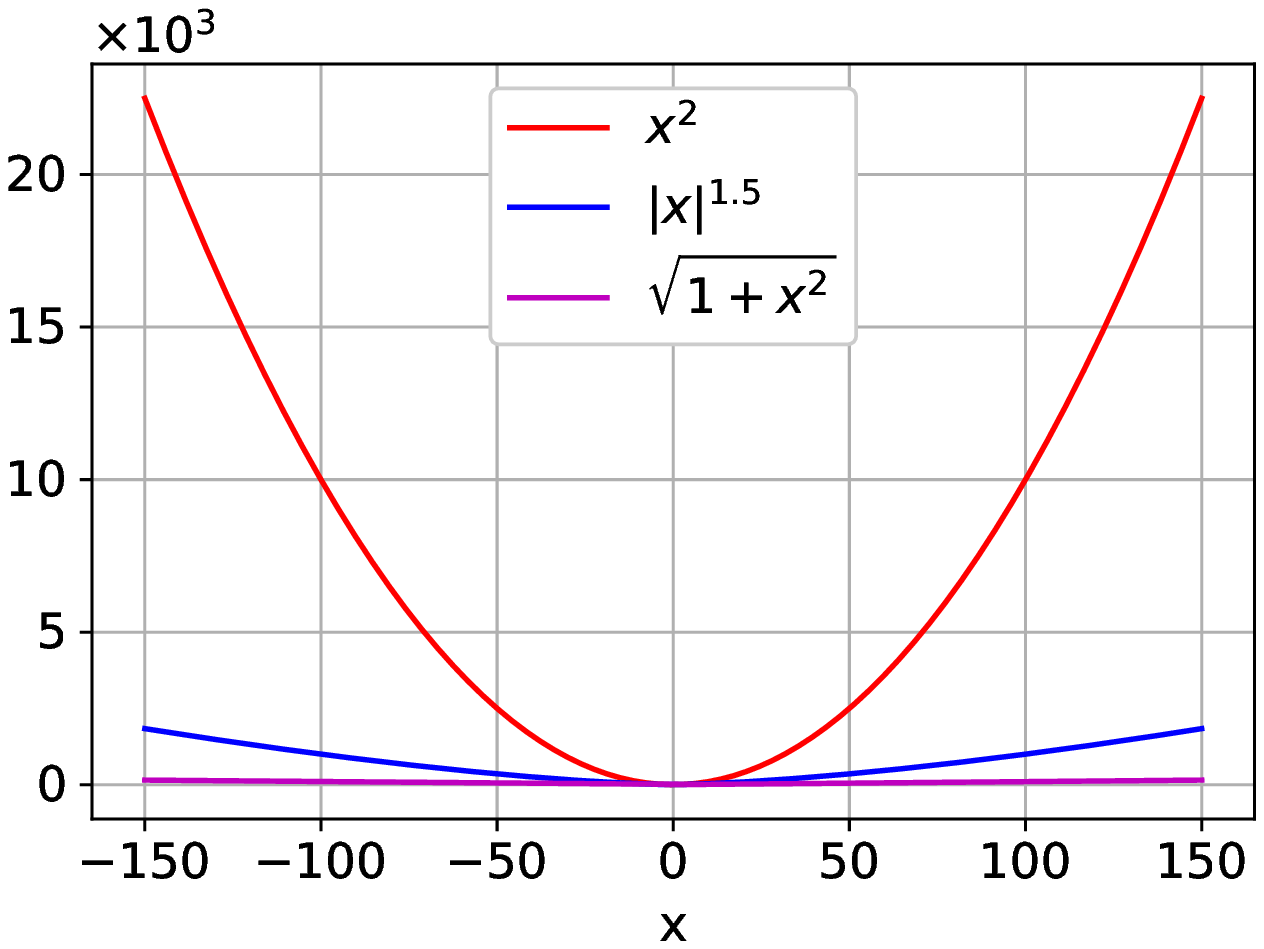}
    \caption{}
    \label{fig:almost_lin}
  \end{subfigure}
  \caption{(a) Potential ${f(x) = \abs{x}^{1.5} + 10\cos{(x)}}$ and its gradient.
  The normalizing constant is ignored.
  (b) Comparison of Growth needed for LSI and this work's 
  setting, both settings can work with perturbations which are ignored
  here for clarity.
  }
  \label{fig:fig}
  \end{figure}

\noindent\textbf{Example 2 (Smooth potential with linear tails):}
Since $\norm{x}$ has discontinuous gradient at the origin,
we consider ${f(x)=\sqrt{1+\norm{x}^2}}$
as an example of a smooth potential with linear growth.
The Hessian of this potential can be computed as
$$\nabla^2f(x)={{\left(1 + \norm{x}^2\right)}^{-3/2}\left[\left(1+\norm{x}^2\right)\id - xx^\top\right]}.$$ 
If we let ${\tf=f}$ in Assumption~\ref{as:degen_conv},
one can easily verify that the assumption holds for ${\decopow=3}$.
Further, computing its gradient reveals that
Assumption~\ref{as:mild_dis} is satisfied with 
${\zeta=0}$ and ${\alpha=1}$.
Finally, since its Hessian is bounded, Assumption~\ref{as:holder} is satisfied with ${\beta=1}$.
Plugging these parameters in Corollary~\ref{cor:main},
we obtain the convergence rate ${\tO \left(d^7\eps^{-1} \right)}$ in KL-divergence.

The dimension dependency in the previous convergence rate can be improved
by changing $\tf$ to a function that is different than $f$.
Observe that the difference between $\sqrt{1+\norm{x}^2}$ and
${\left(1 + \norm{x}^{1+\tau}\right)^{1/(1+\tau)}}$ is bounded
for any ${\tau \in (0,1)}$.
Thus, if we set ${\tf(x) = {\left(1 + \norm{x}^{1+\tau}\right)^{1/(1+\tau)}}}$, Assumption~\ref{as:degen_conv}
is satisfied with ${\decopow = 2+\tau}$ and ${\decof = \*O(\tau)}$. Setting 
${\tau = \mathcal{O}({\log\left(6d \right)}^{-1}})$
and invoking Corollary~\ref{cor:main} implies a convergence rate of ${\tO(d^5\eps^{-1})}$
in KL-divergence,
for a potential like ${f(x)=\sqrt{1+\norm{x}^2}+0.5\cos(\norm{x})}$.
We note that in this case, the norm of the perturbation needs to be strictly smaller than $1$,
otherwise Assumption~\ref{as:mild_dis} is no longer satisfied. 

\noindent\textbf{Example 3 (Smooth potential with linear tails up to log factor):}
We consider the potential function ${f(x) = \norm{x} \log{\left(1+\norm{x}^2\right)}}$
which is smooth and has a superlinear tail growth.
By choosing ${\tf(x)=\log{\left(1+\norm{x}^2 \right)}\sqrt{1+\norm{x}^2}}$,
one can verify Assumption~\ref{as:degen_conv} for ${\decopow=1}$.
Simply calculating the gradient of $f$,
one can also verify $\alpha$-dissipativity
\eqref{eq:mild-dis} for $\alpha=1$.
Moreover, ${\grad f}$ has logarithmic growth, which means that for any ${\zeta>0}$,
gradient growth condition \eqref{eq:grad-growth} holds; thus, Assumption~\ref{as:mild_dis} is satisfied.
Since $\grad f$ is Lipschitz, Assumption~\ref{as:holder} is satisfied with $\beta=1$.
Finally, invoking Corollary~\ref{cor:main},
one obtains a convergence rate of ${\tO \left(d^3\eps^{-1} \right)}$ in KL-divergence.

Comparing the above rate to that in the previous example, we observe that the extra log factor
improves the dimension dependence from 
${\tO \left(d^5 \right)}$ to 
${\tO \left(d^3 \right)}$, 
while the $\eps$-dependence remains the same. Note that in here $\alpha=1$,
same as in the previous example;
yet, the convergence in this case is faster than the advertised rate of
${\tO \raisebox{.5ex}{\big(}d^{\frac{1}{\beta} + (1+\frac{1}{\beta})\left(\frac{2}{\alpha}-\ind{\alpha \neq 1}\right)}
\eps^{-\frac{1}{\beta}}\raisebox{.5ex}{\big)}}$ because the potential is not growing like
$\norm{x}^\alpha$ for any $\alpha$ due to the additional $\log$ factor. We note that
including an extra
log factor will only improve 
the convergence
when $\alpha=1$; when $\alpha>1$, the convergence rate simply stays the same.
One intuitive reason for this phenomenon is that for $\alpha>1$
the tails are already convex but when $\alpha=1$, the tails are linear, in which case the Hessian
is almost equal
to zero, and the extra $\log$ injects the much needed convexity to the tails.

Figure~\ref{fig:almost_lin} compares the growth rate of these examples with that of
the quadratic function,
in which case the standard LSI \eqref{eq:lsi} holds.
Perturbations are ignored to make the figure clear.
\subsection{Bayesian regression}
In this section, the fixed problem parameters such as $\growth$ and $\smooth$ depend on the data, 
and the rates are obtained assuming these constants are $\*O(1)$. Depending on the data scaling,
these parameters may depend
on the dimension as well as the number of samples $n$,
in which case the rates can be still obtained by
using the explicit formulas presented in Theorem~\ref{thm:main} and Corollary~\ref{cor:main}.

\noindent\textbf{Example~4 (Bridge regression):}
Our analysis shows that the LMC algorithm can handle potentials that are weakly smooth,
which comes up frequently in Bayesian statistics. For example,
denoting the matrix of covariates with ${V=\{v_i\}_{i=1}^{n} \in \reals^{n\times d}}$, 
and the response vector with ${Y=\{y_i\}_{i=1}^{n} \in \reals^n}$,
in Bridge linear regression \cite{fu1998penalized, frank1993statistical},
one assumes the following linear model
$${Y=Vx + \varepsilon}\quad \text{ where }\quad\varepsilon \sim\Gsn(0,I_n),$$
and a prior proportional to ${\exp(\sum_{i=1}^{d} \abs{x_i}^q)}$.
Therefore, sampling from the resulting posterior is equivalent to sampling from the following potential
function
\eqn{
  f(x) = \norm{Y-Vx}^2 + \sum_{i=1}^{d} \abs{x_i}^q.
}
Assume that we have ${V^{\top}V \succ 0}$ and $q \in (1,2)$.
Then, the above potential has quadratic growth which, in our framework, translates
to setting ${\decopow=0}$ in Assumption~\ref{as:degen_conv},
and setting ${\alpha=2}$ and ${\zeta=1}$ in Assumption~\ref{as:mild_dis}.
This potential lacks smoothness and in the close neighborhood of the origin,
Assumption~\ref{as:holder} holds with $\beta=q-1$. On the other hand,
$\grad f$ has linear growth and when $\|x-y\|\geq 1$ Assumption~\ref{as:holder} holds with $\beta=1$.
In other words, Assumptions~\ref{as:holder} does not hold for a single $\beta$ for all $x$.
Initially, it might seem that
our results are non-applicable to this potential but by adapting Assumption~\ref{as:holder}
to this setting as
$$
\norm{\grad f(x) - \grad f(y)} \leq \smooth 
\left(\norm{x-y}^{\beta_1} + \norm{x-y}^{\beta_2} \right)
\ \ \text{ for all }\ \ x,y\in\reals^d,
$$
where $\beta_1<\beta_2$, and
some minor changes to Lemma~\ref{lemma:kl_deriv_bound}, our results can be applied to this potential.
In this example, we need to set $\beta_1=q-1$, $\beta_2=1$, and $\zeta \leq \beta_2$ which
yields the convergence rate
${\tO \raisebox{.5ex}{\big(} d^{\frac{1}{q-1}} \eps^{-\frac{1}{q-1}}\raisebox{.5ex}{\big)}}$ in KL-divergence.

This form of Assumption~\ref{as:holder} enables us to deal with composite potentials that are sum of two
components each of which is \Holder continuous but with a different exponent.

\noindent\textbf{Example~5 (Bayesian logistic regression):} In Bayesian logistic regression,
we are given $n$ samples
${V=\{v_i\}_{i=1}^{n} \in \reals^{n\times d}}$, ${Y=\{y_i\}_{i=1}^{n} \in \{0,1\}^n}$
according to the logistic regression model ${\P(y_i=1 \vert v_i)=1/(1+ \exp(-\inner{x, v_i}))}$
for some parameter $x\in\reals^d$, and
we would like to make inference about 
the parameter $x$ under some prior distribution $p(x)$.
In this framework, it is common to use \lmc to generate samples from the posterior
distribution ${p(x\vert V,Y)}$
with the following potential function
\eq{
  f(x) = -\log{p(x)} - \inner{Y, V x} + \sum_{i=1}^{n} \log(1+\exp(\inner{x,v_i})).
}
In practice, the prior distribution can be arbitrary whereas most theoretical results
require the prior to be the Gaussian distribution in order to ensure that
the posterior is smooth and has quadratic growth \cite{dalalyan2017theoretical,li2019stochastic}.
The framework in this paper allows for priors that 
have heavier tails than a Gaussian and/or
have potentials that do not have Lipschitz gradients.
For example, consider a pseudo-Huber prior
${p(x) \propto \exp \big(1-\sqrt{1+\norm{x}^2}\big)}$
\cite{gorham2019measuring, hartley2003multiple},
which results in a similar setting as in
Example~2 in the sense that
simply choosing $\tf$ as the potential
yields an ${\tO \left(d^7 \eps^{-1}\right)}$ convergence rate in KL-divergence.
As before, the dimension dependency can be improved by a more careful choice of $\tf$, i.e.,
\eqn{
  \tf(x) = {\left(1 + \norm{x}^{1+\tau}\right)^{1/(1+\tau)}} - 
  \inner{Y, V x} + 
  \sum_{i=1}^{n} \log(1+\exp(\inner{x,v_i})),
}
which yields a convergence rate of ${\tO \left(d^5 \eps^{-1} \right)}$.
In here the parameters are set as follows $\theta=2+\tau$, $\alpha=1$, $\zeta=0$ and $\beta=1$.

Next, consider the prior 
${p(x) \propto \exp \big(\sum_{i=1}^{d} \abs{x_i}^q \big)}$ for $q\in (1,2)$
which is similar to the Bridge linear regression setting.
The resulting potential is not smooth in this case,
and the potential lacks
quadratic growth. Our analysis shows that \lmc reaches $\eps$-accuracy in KL-divergence
after
${\tO \raisebox{.5ex}{\big(} d^{\frac{3-q}{q-1}} \eps^{-\frac{1}{q-1}} \raisebox{.5ex}{\big)}}$
steps.
Here, the parameters are given as $\theta=2-q$, $\alpha=q$, $\zeta=q-1$, and $\beta=q-1$.

\noindent\textbf{Example~6 (Huberized Regression with log-linear prior):}
Let $V$ and $Y$ denote the matrix of covariates and the response vector respectively, as
in Example~4. We consider the Bayesian analog of Huberized loss \cite{park2008bayesian}
with a log-linear prior, which corresponds to sampling from the posterior
$p(x)\propto e^{-f(x)}$ where
\eqn{
  f(x)= \sum_{i=1}^{n} \sqrt{1+ \left(y_i - \inner{x, v_i}\right)^2 } + \sqrt{1+\norm{x}^2}.
}
By considering 
\eqn{
  \tf(x)= \sum_{i=1}^{n} \sqrt{1+\left(y_i - \inner{x, v_i}\right)^2} + \left(1+\norm{x}^{1+\tau}\right)^\frac{1}{1+\tau},
}
for $\tau \in (0,1)$ and applying an argument similar to the one in Example~2,
one can obtain the convergence
rate ${\tO \left(d^5\eps^{-1}\right)}$ for the \lmc algorithm in KL-divergence.
In here, the parameters are given as $\theta=2+\tau$, $\alpha=1$, $\zeta=0$ and $\beta=1$.

\section{Proof of Modified Log-Sobolev Inequality}\label{sec:proof-mlsi}
We start with a lemma that allows us to construct a finite perturbation of the potential
function 
that has polynomially decaying Hessian which is unbounded at 0.
This will allow us to optimize the final bound.
\begin{lemma}\label{lem:inf_at_origin}
  Suppose Assumption~\ref{as:degen_conv} holds.
  Then for sufficiently small ${\varepsilon>0}$, there exist a function ${\tf_\varepsilon : \reals^d \to \reals}$ such that
  \begin{equation*}
    \big\|f-\tf_\varepsilon\big\|_\infty \leq \pert + \varepsilon/2,
  \end{equation*}
  where $\tf_\varepsilon$ satisfies, 
  \eq{
    \grad^2 \tf_\varepsilon (x) \succeq  \decon (\|x\|) \id,
  }
  where  ${\decon:\reals_+ \to \reals_+}$ is a monotonically decreasing and onto function satisfying
  $${m(r) \geq \frac{\decof - \alpha_\decopow \varepsilon}{(1+r^2/4)^{\decopow/2}}},$$ where 
  ${\alpha_\decopow  <\infty}$ is a constant depending only on $\decopow$.
\end{lemma}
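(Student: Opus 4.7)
The plan is to take a bounded, radial, explicit perturbation of $\tf$ whose Hessian diverges near the origin but decays at infinity fast enough to degrade the bound in Assumption~\ref{as:degen_conv} only through its leading multiplicative constant. Specifically, I would set
\[
\tf_\varepsilon(x) \;=\; \tf(x) + \tfrac{\varepsilon}{2}\bigl(1 - e^{-\|x\|^a}\bigr)
\]
for a fixed exponent $a\in(1,2)$ (for concreteness, $a=3/2$). The added term takes values in $[0,\varepsilon/2]$, so the triangle inequality together with $\|f-\tf\|_\infty\leq\pert$ immediately yields $\|f-\tf_\varepsilon\|_\infty \leq \pert + \varepsilon/2$, the first claim.

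The next step is to compute the Hessian of the radial bump. For $\varphi(r)=\tfrac{1}{2}(1-e^{-r^a})$ the standard decomposition of the Hessian of a radial function gives a radial eigenvalue $\varphi''(r)$ along $x/\|x\|$ and a tangential eigenvalue $\varphi'(r)/r$ with multiplicity $d-1$. Direct calculation yields
\[
\varphi'(r) = \tfrac{a}{2}\,r^{a-1} e^{-r^a}, \qquad \varphi''(r) = \tfrac{a}{2}\,e^{-r^a}\bigl[(a-1)r^{a-2} - a\, r^{2a-2}\bigr].
\]
The tangential eigenvalue $\tfrac{a}{2}e^{-r^a}r^{a-2}$ is non-negative everywhere and diverges to $+\infty$ as $r\downarrow 0$ (because $a<2$); this is the ingredient that produces the required $\|x\|\to0$ blow-up of $\decon$. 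The radial eigenvalue also diverges at $0$ but may become slightly negative in the intermediate annulus where $(a-1)r^{a-2} < a\,r^{2a-2}$, i.e. when $r^{a} > (a-1)/a$.

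Controlling this negative piece is the crux. The super-polynomial decay of $e^{-r^a}$ guarantees that $K_{a,\decopow} := \sup_{r\geq 0}\, e^{-r^a}\, r^{2a-2}\,(1+r^2/4)^{\decopow/2}$ is finite; once $a$ is fixed as an explicit function of $\decopow$, the constant $K_{a,\decopow}$ depends on $\decopow$ alone. It then follows that the worst-case negative contribution of $\tfrac{\varepsilon}{2}\nabla^2\varphi(\|x\|)$ is pointwise at least $-\alpha_\decopow\,\varepsilon/(1+\|x\|^2/4)^{\decopow/2}$, with $\alpha_\decopow = \tfrac{a^2}{4}K_{a,\decopow}$. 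Combining with the hypothesis $\nabla^2 \tf(x) \succeq \decof(1+\|x\|^2/4)^{-\decopow/2}\,\id$ gives
\[
\nabla^2 \tf_\varepsilon(x) \;\succeq\; \frac{\decof - \alpha_\decopow\,\varepsilon}{(1+\|x\|^2/4)^{\decopow/2}}\, \id \;+\; \varepsilon\, B(\|x\|)\, \id,
\]
where $B(r)\geq 0$ collects the remaining non-negative tangential contribution and diverges as $r\downarrow 0$.

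Finally, the target function $\decon$ is obtained by a standard envelope construction: take $\decon(r)$ to be any continuous, strictly decreasing bijection $(0,\infty)\to(0,\infty)$ dominated pointwise by the minimum eigenvalue of $\nabla^2\tf_\varepsilon(x)$ at radius $\|x\|=r$. Such $\decon$ exists because the minimum eigenvalue is at least $(\decof-\alpha_\decopow\,\varepsilon)/(1+r^2/4)^{\decopow/2}$ globally (giving both the required lower bound and the $r\to\infty$ decay) and tends to $+\infty$ as $r\downarrow 0$ (from the tangential term). The main obstacle throughout is ensuring that the intermediate-range negativity of the radial eigenvalue can be absorbed by a correction of the form $-\alpha_\decopow\varepsilon\,(1+\|x\|^2/4)^{-\decopow/2}$ with a constant $\alpha_\decopow$ independent of $\varepsilon$; this is precisely where the super-polynomial decay of $e^{-r^a}$ is essential, and it also explains the ``$\varepsilon$ sufficiently small'' qualifier, namely $\varepsilon<\decof/\alpha_\decopow$ so that the leading coefficient in the Hessian bound remains strictly positive.
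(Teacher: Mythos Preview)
Your proposal is correct and takes essentially the same approach as the paper: add to $\tf$ a bounded, radial, non-negative bump whose Hessian blows up at the origin and whose negative Hessian contribution decays super-polynomially, so that it only shifts the constant in the $(1+\|x\|^2/4)^{-\decopow/2}$ lower bound by $O(\varepsilon)$. The paper happens to use $\varepsilon\,\|x\|^{3/2}e^{-\|x\|^2}$ rather than your $\tfrac{\varepsilon}{2}(1-e^{-\|x\|^{3/2}})$, and defines $m$ explicitly as the pointwise maximum of the two lower bounds; the mechanics are otherwise identical.
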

\begin{proof-of-lemma}[\ref{lem:inf_at_origin}]
  Let $\tf_\varepsilon(x) = \tf(x) + \varepsilon\norm{x}^{3/2}e^{-\norm{x}^2}$, and notice that $f$ satisfies
  $${\norm{f - \tf_\varepsilon}_\infty \leq \pert + \varepsilon/2}.$$
  For its Hessian, we write
  \eqn{
    \Hess \tf_\varepsilon(x) = \Hess \tf(x) + \varepsilon e^{-\norm{x}^2}
    \left\{
      \left(\frac{3}{2}\norm{x}^{-1/2} -2\norm{x}^{3/2}\right) \id
      - \left( 6\norm{x}^{-1/2}-4\norm{x}^{3/2}  +\frac{3}{4}\norm{x}^{-5/2} \right)x x^\top
    \right\}
  }
  and choosing ${\alpha_\decopow = 8\sup_{r \geq 0} r^{1.5}(1+r^2/4)^{\decopow/2}e^{-r^2}}$, we observe that
  $${\Hess \tf_\varepsilon(x) \succeq \frac{\decof - \alpha_\decopow \varepsilon}{(1+\norm{x}^2/4)^{\decopow/2}}}\id.$$
  Also,
  ${\Hess \tf_\varepsilon(x) > \varepsilon\norm{x}^{-1/2}/2}$ when ${x\leq 0.1}$. Now by selecting
  $${\decon(r) = (\decof - \alpha_\decopow \varepsilon) (1+r^2/4)^{-\decopow/2} \vee \frac{\varepsilon}{2}r^{-1/2}\ind{r \leq 0.1}},$$
  the lemma follows. Note that ${m:\reals_+ \to \reals}$ is both monotone and onto for ${\varepsilon< {1}/{(\alpha_\decopow + 2)}}$.
\end{proof-of-lemma}

\begin{proof-of-theorem}[\ref{thm:MLSI}]
    We follow a  similar construction developed in \cite{toscani2000trend},
    and define the functions $h$ and $\tilh_\varepsilon$ as
\begin{align}\label{eq:weak-lsi-pert}
  &h(x) = f(x) +  \decon(2r)\big(\norm{x} - r\big)^2 \ind{\norm{x} \geq r} +C_r \ \text{ and }\\
  &\tilh_\varepsilon(x) = \tf_\varepsilon (x) + \decon(2r)\big(\norm{x} - r\big)^2 \ind{\norm{x} \geq r} +C_r,
\end{align}
where $C_r$ is the normalizing constant for the unnormalized potential $h$ satisfying
\begin{equation}\label{eq:norm-constant}
  e^{C_r} = \int_{\norm{x} < r} e^{-f(x)}dx
  +  \int_{\norm{x} \geq r} e^{-f(x)}e^{-\decon(2r)(\norm{x} - r)^2}dx.
\end{equation}
Using Assumption~\ref{as:degen_conv} and Lemma~\ref{lem:inf_at_origin},
we notice that $h$ and $\tilh_\varepsilon$ satisfy ${\big|h(x) - \tilh_\varepsilon(x)\big| \leq \pert + \varepsilon/2}$,
and also
the growth of the function $\tilh_\varepsilon$ can be characterized in the following three regions.
\begin{itemize}
\item For ${\norm{x} < r}$, we have
\begin{align*}
  \grad^2 \tilh_\varepsilon(x)  = \grad^2 \tf_\varepsilon(x) \succeq& \decon (\norm{x}) \id\\
  \succeq&  \decon(2r) \id,
\end{align*}
where in the last step we used the monotonicity of $\decon$.
\item For ${r \leq\norm{x} < 2r}$, we have
\begin{equation*}
\begin{split}
  \grad^2 \tilh_\varepsilon(x)  =& \grad^2 \tf_\varepsilon(x) + \decon(2r) \left\{2\id - \frac{2r}{\norm{x}}\id +2r\frac{xx^\top}{\norm{x}^3}\right\}\\
  \succeq & \decon (\norm{x})\id + {\decon(2r)} \left\{2\id - 2\id + 0\right\}\\
  \succeq &  \decon(2r) \id,
\end{split}
\end{equation*}
where again the last step follows from the monotonicity of $\decon$.
\item For ${2r \leq \norm{x}}$, we have
\begin{equation*}
\begin{split}
  \grad^2 \tilh_\varepsilon(x)  =& \grad^2 \tf_\varepsilon(x) + { \decon(2r)} \left\{2\id - \frac{2r}{\norm{x}}\id +2r\frac{xx^\top}{\norm{x}^3}\right\}\\
  \succeq& 0 + { \decon(2r)} \left\{2\id - \id + 0\right\}\\
  \succeq& { \decon(2r)}\id.
\end{split}
\end{equation*}
\end{itemize}
In all three cases, we obtain that the function $\tilh_\varepsilon$ has a positive definite Hessian
which is lower bounded by ${\decon(2r)}$ which implies, by the Bakry-Emery's LSI result
on strongly convex potentials~\cite{bakry1985LSI} that the distribution $e^{-\tilh_\varepsilon}$
satisfies \eqref{eq:lsi}. Combining this with the Holley-Stroock perturbation
lemma~\cite{holley1987LSI}, we obtain
\begin{equation}\label{eq:lsi-for-perturbed}
 \forall \rho,\ \  \KL{\rho}{e^{-h}} \leq \frac{e^{2\pert+\varepsilon}}{2 \decon(2r)} \I{\rho}{e^{-h}}.
\end{equation}

We will convert the above inequality on the perturbed potential $h$ to an inequality
on the potential function $f$.
Using the definition in \eqref{eq:weak-lsi-pert},
we can obtain an upper bound on the relative entropy for all ${r>0}$,
\begin{align}\label{eq:rel-ent-bound}
  \KL{\rho}{e^{-f}}=& \KL{\rho}{e^{-h}} +\int \rho(x) (f(x)-h(x))dx\\
  =& \KL{\rho}{e^{-h}} -{\decon(2r)}\int_{\norm{x} \geq r} \rho(x) (\|x\|-r)^2dx - C_r.
\end{align}

For the normalizing constant $C_r$
explicitly given in \eqref{eq:norm-constant}, one can obtain a lower bound using the Jensen's inequality,
\begin{align}\label{eq:norm-constant-bound}
  C_r =& \log \int e^{-f(x)}e^{-{\decon(2r)}(\norm{x} - r)^2\ind{\|x\|\geq r}}dx\\
  \geq &-{\decon(2r)} \int_{\|x\|\geq r} e^{-f(x)} (\norm{x} - r)^2dx.
\end{align}
Combining this with \eqref{eq:rel-ent-bound}, we obtain
\eq{
  \KL{\rho}{e^{-f}} \leq& \KL{\rho}{e^{-h}}
  +{\decon(2r)}\int_{\norm{x} \geq r}\big(e^{-f(x)} -\rho(x)\big) (\|x\|-r)^2dx\\
  \leq& \KL{\rho}{e^{-h}}
  +{\decon(2r)}\int_{\norm{x} \geq r}e^{-f(x)} \|x\|^2dx\\
  \leq& \KL{\rho}{e^{-h}}
  +{\decon(2r)}\frac{\M_s(\target)}{(1+r^2)^{s/2-1}}\label{eq:perturbed-entro-bound}
}
where the second step follows since ${\rho \geq 0}$, and ${\|x\|^2 \geq (\|x\| - r)^2}$
in the domain of integration, and the last step follows from Lemma~\ref{lem:tail-moment-bound} below.

\begin{lemma}\label{lem:tail-moment-bound}
  For a given distribution $\rho$ and for a constant ${r >0}$, we have
  \eq{
    \int_{\|x\| \geq r} \rho(x) \|x\|^2dx \leq \frac{\M_s(\rho)}{(1+r^2)^{s/2-1}}.
  }
\end{lemma}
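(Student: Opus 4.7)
The plan is to reduce the claim to a pointwise inequality that can be integrated directly, in the spirit of a Chebyshev/Markov-type tail bound against the weighted moment $\M_s$. Since $s \ge 2$ in the context in which the lemma is invoked (the modified LSI proof uses $s \ge 2$), the exponent $s/2 - 1$ is nonnegative, and this is the only structural fact I need.

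First I would write $\|x\|^2 \le 1 + \|x\|^2$ pointwise, which is obviously true. Next I would factor the right-hand side as
\begin{equation*}
1 + \|x\|^2 = (1+\|x\|^2)^{s/2}\,(1+\|x\|^2)^{1 - s/2}.
\end{equation*}
On the integration domain $\{\|x\| \ge r\}$ we have $1 + \|x\|^2 \ge 1 + r^2$, and because $1 - s/2 \le 0$ the map $t \mapsto t^{1-s/2}$ is nonincreasing on $(0,\infty)$, giving
\begin{equation*}
(1+\|x\|^2)^{1-s/2} \le (1+r^2)^{1-s/2} = (1+r^2)^{-(s/2 - 1)}
\end{equation*}
on this domain.

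Combining, for $\|x\| \ge r$ we have the pointwise bound $\|x\|^2 \le (1+r^2)^{-(s/2-1)}(1+\|x\|^2)^{s/2}$. Multiplying by the nonnegative density $\rho(x)$ and integrating over $\{\|x\| \ge r\}$ yields
\begin{equation*}
\int_{\|x\| \ge r} \rho(x)\,\|x\|^2\,dx \le \frac{1}{(1+r^2)^{s/2 - 1}}\int_{\|x\| \ge r} \rho(x)\,(1+\|x\|^2)^{s/2}\,dx.
\end{equation*}
Finally, since the integrand on the right is nonnegative, extending the region of integration from $\{\|x\| \ge r\}$ to all of $\reals^d$ only increases the integral, and the resulting full-space integral is exactly $\M_s(\rho)$ by definition. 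This gives the claimed inequality.

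There is no real obstacle here; the lemma is essentially a one-line weighted Chebyshev estimate, and the only condition that must be respected is $s \ge 2$, which is guaranteed by the hypothesis of Theorem~\ref{thm:MLSI} in which this lemma is applied. No integrability assumption on $\rho$ beyond finiteness of $\M_s(\rho)$ is needed, and the bound is vacuously true when $\M_s(\rho) = \infty$.
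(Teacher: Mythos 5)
Your proof is correct, and it takes a genuinely different and in fact more economical route than the paper's. The paper establishes the same estimate via \Holder's inequality on the measure $\rho$ (splitting $\|x\|^2\ind{\|x\|\geq r}$ with exponents $p,q$), followed by Markov's inequality applied to the event $\{(1+\|x\|^2)^{s/2}\geq(1+r^2)^{s/2}\}$; this produces a bound of the form $\M_{2p}(\rho)^{1/p}\M_s(\rho)^{1/q}(1+r^2)^{-s/(2q)}$, and only after the specific choice $p=s/2$ do the two moments collapse into a single $\M_s(\rho)$ and the exponent of $(1+r^2)$ become $s/2-1$. Your argument short-circuits all of that: on the domain $\{\|x\|\geq r\}$ you bound the integrand pointwise by $\|x\|^2\leq(1+\|x\|^2)\leq(1+r^2)^{1-s/2}(1+\|x\|^2)^{s/2}$ (using $s\geq 2$ so the exponent $1-s/2$ is nonpositive and $t\mapsto t^{1-s/2}$ is nonincreasing), integrate, and enlarge the domain. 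This avoids \Holder and Markov entirely, avoids introducing the auxiliary moment $\M_{2p}$, and makes the role of the condition $s\geq 2$ transparent, while producing exactly the same constant. Both proofs are valid; yours is the more elementary of the two.
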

\begin{proof-of-lemma}[\ref{lem:tail-moment-bound}]
  For positive constants ${p,q,s>0}$ satisfying ${1/p+1/q = 1}$, we apply the H\"older's inequality and get
  \eq{
    \int_{\|x\| \geq r} \rho(x) \|x\|^2dx = &\int \rho(x) \|x\|^2 \ind{\|x\| \geq r}dx\\
    \leq &\left( \int \rho(x) \|x\|^{2p}dx\right)^{{1}/{p}}\P\left(\left(1+\|x\|^2\right)^{s/2} \geq {(1+r^2)^{s/2}}\right)^{1/q}\\
    \leq& \frac{\M_{2p}(\rho)^{{1}/{p}}\M_{s}(\rho)^{1/q}}{ (1+r^2)^{s/2q}},
  }
  where the last step follows from Markov's inequality. Final result follows by choosing ${p=s/2}$.
\end{proof-of-lemma}

Similarly for the Fisher information, we write
\begin{align}\label{eq:info-unpack}
  \I{\rho}{e^{-h}} \leq 2 \I{\rho}{e^{-f}} + 2\int \rho(x) \norm{\grad h(x) - \grad f(x) }^2dx.
\end{align}
For the second term on the right hand side, we write
\begin{align}\label{eq:markov-bound}
  \int \rho(x) \norm{\grad h(x) - \grad f(x)}^2dx =& 4\decon(2r)^2\int\rho(x) (\norm{x} - r)^2\ind{\norm{x} \geq r}dx\\
  \leq&  4\decon(2r)^2\int\rho(x) \norm{x}^2\ind{\norm{x} \geq r}dx\\
    \leq&  \frac{4 \decon(2r)^2}{(1+r^2)^{s/2-1}}\M_s(\rho),
\end{align}
where in the last step we applied Lemma~\ref{lem:tail-moment-bound}. Plugging this back in \eqref{eq:info-unpack},
we get
\eq{\label{eq:perturb-info-bound}
  \I{\rho}{e^{-h}} \leq 2 \I{\rho}{e^{-f}} +
  \frac{8 \decon(2r)^2}{(1+r^2)^{s/2-1}}\M_s(\rho).
}
Combining the inequalities \eqref{eq:lsi-for-perturbed}, \eqref{eq:perturbed-entro-bound}, and \eqref{eq:perturb-info-bound},
we obtain
\eq{
  \forall \rho,\ \  \KL{\rho}{e^{-f}} \leq&
  \KL{\rho}{e^{-h}}
  +\frac{\decon(2r)\M_s(\target)}{(1+ r^2)^{s/2-1}}\\
  \leq& \frac{e^{2\pert + \varepsilon}}{ 2\decon(2r)} \I{\rho}{e^{-h}}
  +\frac{\decon(2r)\M_s(\target)}{(1+ r^2)^{s/2-1}}\\
  \leq & \frac{e^{2\pert+ \varepsilon}}{ \decon(2r)}  \I{\rho}{e^{-f}} +
  \frac{m(2r)}{(1+r^2)^{s/2-1}} \left(4e^{2\pert + \varepsilon}\M_s(\rho) + \M_s(\target) \right)\\
  \leq & \frac{e^{2\pert+ \varepsilon}}{ \decon(2r)}  \I{\rho}{e^{-f}} +
  4e^{2\pert + \varepsilon}\frac{m(2r)}{(1+r^2)^{s/2-1}}  \M_s(\rho+\target).
}
Finally, using the Lemma~\ref{lemma:power_rec_max} and optimizing over ${\decon(2r)}$, we get
\eq{
  \forall \rho,\ \  \KL{\rho}{e^{-f}} \leq& 
  \lambda_\varepsilon \I{\rho}{e^{-f}}^{\frac{s-2+\decopow}{s-2+2\decopow}} \M_s(\rho + \target)^{\frac{\decopow}{s-2+2\decopow}},
}
where
\eq{
  \lambda_\varepsilon = \frac{4 e^{2\pert+\varepsilon}}{(\decof-\alpha_\decopow \varepsilon)^{\frac{s-2}{s-2+2\decopow}}},
}
for all sufficiently small ${\varepsilon>0}$.
Taking the limit of ${\varepsilon \downarrow 0}$ concludes the proof.
\end{proof-of-theorem}

\section{Moment Bounds on the LMC Iterates}\label{sec:proof-moment}
\begin{proof-of-proposition}[\ref{prop:disc_mom_bound}]
Similar to the continuous-time case, it suffices to prove
\eqn{
  \M_s(\rho_k) \leq\M_s(\rho_0) + C_s k \eta.
}

\noindent\textbf{Part 1.} We prove a linear bound on the second moment of $\xtild_{k, t}$ conditioned on  $x_k$. Consider the distribution ${\rho(\xtild_{k,t} \rvert x_k)}$ which is the distribution of $\xtild_{k,t}$ given $x_k$.
\begin{equation*}
\begin{split}
  \EE{\norm{\xtild_{k,t}}^2 \rvert x_k} &= 
  \norm{x_k}^2 - 2t\dotprod{\grad f(x_k)}{x_k} + 
  t^2 \norm{\grad{f(x_k)}}^2 + 2dt\\
  &\stackrel{1}{\leq}  \norm{x_k}^2 - 2t(a\norm{x_k}^\alpha - b) + 
  2 t^2 \growth^2(1+\norm{x_k}^{2\zeta}) + 2dt\\
  & = \norm{x_k}^2 + 2t\left(
      -a\left(1+\norm{x_k}^\alpha\right) + \eta \growth^2 \norm{x_k}^{2\zeta}
      + a + b + d+ \eta \growth^2
  \right) \\
  &\stackrel{2}{\leq} \norm{x_k}^2 + 
  2 \left(
      a+ b + d + \eta \growth^2
  \right)t\\
  &\leq
  \norm{x_k}^2 + C_2 t,
\end{split}
\end{equation*}
for any $C_2$ satisfying
\begin{equation}\label{eq:C_2}
    C_2 \geq 3a + 2b+ 2d.
\end{equation}
Step 1 is obtained using Assumptions~\ref{as:mild_dis},
and step 2 is because ${4\eta \growth^2 \leq a}$.
Adding one to both sides, we get the following equation
\begin{equation}
    \M_2(\rhotild_{k,t}\rvert x_k) \leq g_2(x_k) + C_2 t,
\end{equation}
where ${g_s(x) = \left(1 + x^2 \right)^{s/2}}$ and ${\M_s(\rhotild_{k,t}\rvert x_k)}$ denotes the $s$-moment of $\xtild_{k,t}$ conditioned on $x_k$.

\noindent\textbf{Part 2.} We upper bound a term which will become useful in the proof of the induction step.
(In below, $Z$ denotes a standard Gaussian vector that is independent of $x_k$.)
\begin{equation*}
\begin{split}
  \EE{-\dotprod{\grad f(x_k)}{Z} g_2(\xtild_{k, t})\rvert x_k}
  & \stackrel{1}{=} 2\sqrt{2t} \EE{-\dotprod{\grad f(x_k)}{Z}\dotprod{x_k}{Z} +
  t\dotprod{\grad f(x_k)}{Z} \dotprod{\grad f(x_k)}{Z} \rvert x_k}\\
  & = 2\sqrt{2t}\left(
      -\dotprod{\grad f(x_k)}{x_k}+t\norm{\grad{f(x_k)}}^2
  \right)\\
  & \leq
  2\sqrt{2\eta}\left(
      -a\norm{x_k}^\alpha + b + 2 \eta \growth^2(1+\norm{x_k}^{2\zeta})
  \right)\\
  & \leq 2\sqrt{2\eta}\left(
      -a\left(\norm{x_k}^\alpha + 1\right) + 2 \eta \growth^2 \norm{x_k}^{2\zeta}
      + a + b + 2 \eta \growth^2
  \right)\\
  & \stackrel{2}{\leq} 2\sqrt{2\eta}\left(
      a + b + 2 \eta \growth^2
  \right)
  \leq N_2,
\end{split}
\end{equation*}
where 
\begin{equation}\label{eq:N-2}
    N_2 \defeq 2\sqrt{2\eta}\left(1.5a + b\right).
\end{equation}
Step 1 follows from odd Gaussian moments being zero, and step 2 uses ${4\eta\growth^2 < a}$.
Note that $Z$ is independent of $x_k$, with zero mean.

\noindent\textbf{Part 3.} Now we use induction to prove the linear bound for even moments of the conditional distribution. The base case (${s=2}$) is already proved.
Hence we can assume ${s\geq 4}$ which implies ${(s-4)}$ is an even non-negative integer. For the proof to work, we need to strengthen the induction hypothesis for which part 2 in the proof will be useful. For all even $s$, we have
\begin{enumerate}
\item
  ${\M_s(\rhotild_{k,t} \rvert x_k) \leq g_s(x_k) + C_s t}.$
\item
  ${\EE{-\dotprod{\grad f(x_k)}{Z}g_s(\xtild_{k, t}) \rvert x_k} 
  \leq N_s}.$
\end{enumerate}

For the first inequality above, we will bound the time derivative of ${\M_s(\rhotild_{k,t} \rvert x_k)}$ as follows.

\begin{equation*}
\begin{split}
  \frac{\partial}{\partial t} \M_s(&\rhotild_{k,t} \rvert x_k)\\
  & = \EE{-s\dotprod{\grad f(x_k)}{\xtild_{k, t}}g_{s-2}(\xtild_{k, t})+ 
  s(d+s-2)g_{s-2}(\xtild_{k, t}) - s(s-2)g_{s-4}(\xtild_{k, t})\rvert x_k}\\
  & \leq s\EE{\left( -\dotprod{\grad f(x_k)}{x_k - t \grad f(x_k) + \sqrt{2t}Z}+
  (d+s-2)\right)g_{s-2}(\xtild_{k, t})\rvert x_k}\\
  & \leq s \left( 
      - \dotprod{\grad f(x_k)}{x_k} 
      + t \norm{\grad f(x_k)}^2   
      + (d+s-2)
  \right)\EE{g_{s-2}(\xtild_{k, t})\rvert x_k}\\
  &\qquad+ s\sqrt{2t}\EE{\dotprod{-\grad f(x_k)}{Z}g_{s-2}(\xtild_{k, t})\rvert x_k}\\
  & \leq 
  s\left[
      -a \norm{x_k}^\alpha + b + 2 \eta \growth^2(1+\norm{x_k}^{2\zeta}) + (d+s-2)
  \right]_+(g_{s-2}(x_k)+C_{s-2}t)
  +s\sqrt{2\eta} N_{s-2} \\
  & \leq
  s\left[ 
      -a (1+\norm{x_k}^2)^\frac{\alpha}{2}
      + 2 \eta \growth^2 (1 + \norm{x_k}^2)^{\zeta}
      +(2 \eta \growth^2 + a + b + d + s - 2)
  \right]_+ (g_{s-2}(x_k)+C_{s-2}t)\\
  &\qquad +s\sqrt{2\eta} N_{s-2}\\
  & \stackrel{1}{\leq} \max_{u \geq 1} s\left( 
      -a u^{\alpha}
      + 2 \eta \growth^2 u^{2\zeta}
      +(2 \eta \growth^2 + a + b + d + s - 2)
  \right) (u^{s-2}+C_{s-2}t)
  +s\sqrt{2\eta} N_{s-2} \\
  & \leq s\max_{u \geq 1} \left(
      -\frac{a}{2} u^{\alpha+s-2}
      + 2 \eta \growth^2 u^{2\zeta+s-2}
  \right)\\
  & \qquad + s\max_{u \geq 1} \left(
      -\frac{a}{2} u^{\alpha+s-2}
      + (2 \eta \growth^2 + a + b + d + s - 2) u^{s-2}
  \right) \\
  & \qquad + s C_{s-2} \eta \max_{u \geq 1} \left(
      -a u^{\alpha}
      + 2 \eta \growth^2 u^{2\zeta}
      +(2 \eta \growth^2 + a + b + d + s - 2)
  \right)
  +s\sqrt{2\eta} N_{s-2}\\
  &\stackrel{2}{\leq }
  s \bigg[
  (2 \eta \growth^2 + a + b + d + s - 2) \left( \frac{2(2 \eta \growth^2 + a + b + d + s - 2) (s-2)}{a (\alpha+s-2)} \right)^\frac{s-2}{\alpha}\\
  & \qquad +C_{s-2}\eta(2 \eta \growth^2 + a + b + d + s - 2)
  + \sqrt{2\eta} N_{s-2}
  \bigg],
\end{split}
\end{equation*}

\noindent in which substitution  ${u = \sqrt{1+\norm{x_k}^2}}$ is used in step 1 and Lemma~\ref{lemma:poly_max} is used in step 2.
The above inequality shows ${\M_s(\rhotild_{k,t} \rvert x_k) \leq \M_s(x_k) + C_s t}$ for any $C_s$ satisfying

\begin{equation}\label{eq:C_s_rec}
\begin{split}
  \frac{C_s}{s} &\geq
  (2 \eta \growth^2 + a + b + d + s - 2) \left(\frac{2(2 \eta \growth^2 + a + b + d + s - 2)}{a} \right)^\frac{s-2}{\alpha}\\
  & \qquad + C_{s-2}\eta(2 \eta \growth^2 + a + b + d + s - 2) + \sqrt{2\eta} N_{s-2}.
\end{split}
\end{equation}
For proving the second part of the induction step, we use Stein's lemma \cite{stein1981estimation} (the version we use is stated in Lemma~\ref{lemma:stein}) in the first equality below.

\begin{equation*}
\begin{split}
  \mathbb{E}[-\dotprod{\grad f(x_k)}{Z} & g_s(\xtild_{k, t}) \rvert x_k]\\
  & = \EE{-s\sqrt{2t}\dotprod{\grad f(x_k)}{
      g_{s-2}(\xtild_{k, t})\left(
          x_k - t \grad f(x_k) + \sqrt{2t}Z
        \right)
  } \rvert x_k}\\
  & \leq s\sqrt{2t}\left(  -\dotprod{\grad f(x_k)}{x_k}
  + t \norm{\grad f(x_k)} ^2 \right)
  \M_{s-2}(\rhotild_{k,t} \rvert x_k)
  + 2 s t N_{s-2}\\
  & \leq s \sqrt{2\eta} \left[
      - a\norm{x_k}^\alpha +b
      + 2 \eta \growth^2(1+\norm{x_k}^{2\zeta})
  \right]_+(g_{s-2}(x_k) + C_{s-2}\eta)
  + 2 s \eta N_{s-2}\\
  & \leq s \sqrt{2\eta}
  \left[
    - a(1+\norm{x_k}^2)^\frac{\alpha}{2}
    + 2 \eta \growth^2(1+\norm{x_k}^2)^{\zeta}
    + (b + a + 2 \eta \growth^2)
    \right]_+ \left(g_{s-2}(x_k) + C_{s-2}\eta\right)\\
   &\qquad + 2 s \eta N_{s-2}\\
  & \leq s \sqrt{2\eta} \max_{u \geq 1}\left(
      - a u^\alpha
      + 2 \eta \growth^2 u^{2\zeta}
      + (b + a + 2\eta \growth^2)
  \right) (u^{s-2} + C_{s-2}\eta)
  + 2 s \eta N_{s-2}\\
  & \leq s \sqrt{2\eta} \max_{u \geq 1} \left(
      - \frac{a}{2} u^{\alpha+s-2}
      + 2 \eta \growth^2 u^{2\zeta+s-2}
  \right)\\
  & \qquad + s \sqrt{2\eta} \max_{u \geq 1} \left(
      - \frac{a}{2} u^{\alpha+s-2}
      + (b+a+ 2 \eta \growth^2)u^{s-2}
  \right)\\
  & \qquad + s \sqrt{2\eta} C_{s-2}\eta \max_{u} \left(
      - a u^\alpha
      + 2 \eta \growth^2 u^{2\zeta}
      + (b + a + 2 \eta \growth^2)
  \right)
  + 2 s \eta N_{s-2}\\
  &\leq
  s\bigg[
 (b + a + 2\eta \growth^2)\sqrt{2\eta} \left(\frac{2(b + a + 2 \eta \growth^2) (s-2)}{a (\alpha+s-2)} \right)^\frac{s-2}{\alpha} \\
  & \qquad + C_{s-2}\eta\sqrt{2\eta} (b + a + 2 \eta \growth^2) + 2\eta N_{s-2} \bigg]\leq N_s,
\end{split}
\end{equation*}
where
\begin{equation}\label{eq:N_s_rec}
\begin{split}
  \frac{N_s}{s} = (b+a+ 2\eta \growth^2)\sqrt{2\eta} \left(\frac{2(b+a+ 2\eta \growth^2)}{a} \right)^\frac{s-2}{\alpha} 
  + C_{s-2}\eta\sqrt{2\eta} (b+a+2\eta \growth^2) + 2\eta N_{s-2}.
\end{split}
\end{equation}
Again, the substitution ${u=\sqrt{1+\norm{x}^2}}$ is used here. This completes the induction. 

The previous induction showed ${\M_s(\rhotild_{k,t} \rvert x_k) \leq g_s(x_k) + C_s t}$
when $s$ is a positive even integer.
We take expectation with respect to $x_k$ in order to get
\begin{equation}
  \M_s(\rhotild_{k,t}) \leq \M_s(\rho_k) + C_s t,
\end{equation}
setting ${t=\eta}$ yields
\begin{equation*}
  \M_s(\rho_{k+1}) \leq \M_s(\rho_k) + C_s \eta,
\end{equation*}
and finally, induction on $k$ gives
\begin{equation*}
  \M_s(\rho_k) \leq \M_s(\rho_0) + C_s k \eta.
\end{equation*}

\noindent\textbf{Part 4.} In this part, we establish a non-recursive formula for $C_s$.
Note that the theorem holds for a larger $C_s$,
this helps us to derive a closed-form formula for $C_s$.
We combine \eqref{eq:C_2} and \eqref{eq:N-2} to get ${N_2 \leq C_2 \sqrt{2\eta}}$, then we 
use \eqref{eq:C_s_rec} and \eqref{eq:N_s_rec} inductively, to establish ${N_s \leq C_s \sqrt{2 \eta}}$.
By combining the previous inequality with ${4\eta \growth^2 \leq a}$, we can strengthen the bound \eqref{eq:C_s_rec} to
\begin{equation*}
\begin{split}
  C_s \geq
  \left(\frac{3a + 2b + 2d + 2s}{1 \wedge a} \right)^{\frac{s-2}{\alpha}+1}s
  +\frac{3a + 2b + 2d + 2s}{1 \wedge a} \times C_{s-2} s \eta.
\end{split}
\end{equation*}
$C_s$, as defined in \eqref{eq:C_s_closed}, satisfies the previous inequality and \eqref{eq:C_2}, which in turn implies that it also satisfies \eqref{eq:C_s_rec} and \eqref{eq:C_2}.
\end{proof-of-proposition}

The next proposition is the analog moment bound for the continuous-time process, and is adapted from \cite{toscani2000trend} 
for the sake of comparison with the bound for the discrete time process.
\begin{lemma}\label{lemma:moment_bound}
  Let $f$ satisfy Assumption~\ref{as:mild_dis} and $p_t$ be the distribution of $Z_t$, then
  \begin{equation*}
    \M_s(p_t + \target) \leq \M_s(p_0 + \target) + K_s t,
  \end{equation*}
  where ${K_s = (b+d+a+s-2)\left( \frac{b+d+a+s-2}{a}\right)^\frac{s-2}{\alpha}s}$.
\end{lemma}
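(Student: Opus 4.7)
The plan is to apply the infinitesimal generator (or, equivalently, Itô's formula) of the Langevin diffusion~\eqref{eq:overdamped} to the test function $g_s(x)=(1+\|x\|^2)^{s/2}$, and then bound $\mathcal{L}g_s(x)$ pointwise by the constant $K_s/s$. Since $\target$ is invariant for the diffusion, $\M_s(\target)$ is constant in $t$, so by linearity of $\M_s$ it suffices to prove $\M_s(p_t)\le \M_s(p_0)+K_s t$.

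First I would compute, for $u=\sqrt{1+\|x\|^2}\ge 1$,
\begin{align*}
\grad g_s(x) &= s\, u^{s-2}\, x, \qquad \Delta g_s(x)=s(d+s-2)\,g_{s-2}(x)-s(s-2)\,g_{s-4}(x).
\end{align*}
The generator of~\eqref{eq:overdamped} is $\mathcal{L}\phi=-\langle\grad f,\grad \phi\rangle+\Delta\phi$, hence
\begin{align*}
\mathcal{L}g_s(x) &= -s\,g_{s-2}(x)\,\langle\grad f(x),x\rangle+s(d+s-2)\,g_{s-2}(x)-s(s-2)\,g_{s-4}(x)\\
&\le s\,g_{s-2}(x)\bigl(d+s-2-\langle\grad f(x),x\rangle\bigr),
\end{align*}
where I dropped the nonpositive term $-s(s-2)g_{s-4}$. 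Using $\alpha$-dissipativity from Assumption~\ref{as:mild_dis} and the elementary inequality $(1+t)^{\alpha/2}\le 1+t^{\alpha/2}$ valid for $\alpha\in[1,2]$ and $t\ge 0$, which gives $-a\|x\|^\alpha\le -a\,u^\alpha+a$, one obtains
\begin{align*}
\mathcal{L}g_s(x)\le s\,u^{s-2}\bigl(C-a\,u^{\alpha}\bigr)=s\bigl(-a\,u^{s-2+\alpha}+C\,u^{s-2}\bigr),\qquad C\defeq b+d+a+s-2.
\end{align*}

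Next I maximize the right-hand side over $u\ge 1$ via Lemma~\ref{lemma:poly_max} (the same lemma the author uses in Proposition~\ref{prop:disc_mom_bound}). Setting the derivative to zero yields $u_\star^{\alpha}=\tfrac{C(s-2)}{a(\alpha+s-2)}$, and substituting back gives
\begin{align*}
\sup_{u\ge 1}\bigl(-a\,u^{s-2+\alpha}+C\,u^{s-2}\bigr)=\tfrac{C\alpha}{\alpha+s-2}\Bigl(\tfrac{C(s-2)}{a(\alpha+s-2)}\Bigr)^{(s-2)/\alpha}\le C\,(C/a)^{(s-2)/\alpha},
\end{align*}
so $\mathcal{L}g_s(x)\le K_s$ with $K_s=sC(C/a)^{(s-2)/\alpha}$ as claimed. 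Integrating $\frac{d}{dt}\E[g_s(Z_t)]=\E[\mathcal{L}g_s(Z_t)]\le K_s$ in time then yields $\M_s(p_t)\le\M_s(p_0)+K_s t$, and adding the (constant) quantity $\M_s(\target)$ to both sides finishes the proof.

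The step I expect to require the most care is justifying the exchange of differentiation and expectation, since $g_s$ has polynomial growth and Itô's formula in its basic form assumes boundedness; the standard fix is a localization argument with stopping times $\tau_R=\inf\{t:\|Z_t\|\ge R\}$, using that the drift is dissipative to show $\tau_R\to\infty$ almost surely and that $\E[g_s(Z_{t\wedge\tau_R})]$ stays finite uniformly, then passing to the limit via monotone/Fatou. This is standard for Langevin diffusions with Assumption~\ref{as:mild_dis}, and only the algebraic content above is really new to the lemma.
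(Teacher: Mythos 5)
Your proposal is correct and follows essentially the same route as the paper: differentiate $\M_s(p_t)$ via the Fokker--Planck equation, apply $\alpha$-dissipativity, and invoke Lemma~\ref{lemma:poly_max}. The one minor difference is that you bound $\mathcal{L}g_s(x)$ pointwise by $K_s$ (using $\|x\|^\alpha \geq u^\alpha-1$), whereas the paper first integrates to moment quantities and then uses $\M_{s-2}\leq\M_{s+\alpha-2}^{(s-2)/(s+\alpha-2)}$ together with the looser estimate $\|x\|^\alpha \geq \tfrac12 u^\alpha-1$; your pointwise version actually recovers the stated constant $K_s$ exactly, while the paper's own chain of inequalities ends with an extra factor $2^{(s-2)/\alpha}$ that does not match the lemma statement. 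Your remark about localization to justify the interchange of derivative and expectation is also a good point that the paper's proof leaves implicit.
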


\begin{proof}
Because of linearity of integral, it is sufficient to prove
\eqn{
  \M_s(p_t) \leq \M_s(p_0) + K_s t.
}
Note that if ${s<s'}$ then
${\M_s(p_t) = \int p_t(x) (1+\norm{x}^2)^{\tfrac{s}{2}} \leq \int p_t(x) (1+\norm{x}^2)^{\tfrac{s'}{2}} = \M_{s'}(p_t)}$.
We differentiate ${\M_s(p_t)}$ with respect to time.
\begin{equation*}
\begin{split}
    \frac{d}{dt} \M_s(p_t) &=
    \int p_t(x)
    \left[ 
        \Delta (1+\norm{x}^2)^{\tfrac{s}{2}} - \dotprod{\grad f(x)}{\grad (1+\norm{x}^2)^{\tfrac{s}{2}}}
    \right]\\
    &= \left(ds+s(s-2)\right)\M_{s-2}(p_t) - s(s-2) \M_{s-4}(p_t)
    -s\int p_t(x) \dotprod{\grad f(x)}{x}(1+\norm{x}^2)^{\tfrac{s-2}{2}}\\
    &\leq (b+d+s-2)s \M_{s-2}(p_t) -  s\int p_t(x) a\norm{x}^{\alpha}(1+\norm{x}^2)^{\tfrac{s-2}{2}}\\
    &\leq (b+d+a+s-2)s \M_{s-2}(p_t) - \frac{as}{2} \M_{s+\alpha-2}(p_t)\\
    &\leq  (b+d+a+s-2)s \M_{s+\alpha-2}(p_t)^{\tfrac{s-2}{s+\alpha-2}} - \frac{as}{2} \M_{s+\alpha-2}(p_t)\\
    &\stackrel{1}{\leq} (b+d+a+s-2)s\left( \frac{2(b+d+a+s-2)(s-2)}{a(s+\alpha-2)}\right)^\frac{s-2}{\alpha}\\
    &\leq (b+d+a+s-2)\left( \frac{b+d+a+s-2}{a/2}\right)^\frac{s-2}{\alpha}s,
\end{split}
\end{equation*}
where step 1 follows from Lemma~\ref{lemma:poly_max}.
\end{proof}

We utilize a method similar to the previous proof in order to bound the moments of target

\begin{proof-of-lemma}[\ref{lemma:target-moment-bound}]
  From the proof of Lemma~\ref{lemma:moment_bound}, we have the following inequality for ${s\geq 2}$.
  \eqn{
    \frac{d}{dt} \M_s(p_t) \leq
    (b+d+a+s-2)s \M_{s-2}(p_t) - \frac{as}{2} \M_{s+\alpha-2}(p_t).
  }
  If we let ${p_0=\target}$, then ${p_t=\target}$ which means that 
  the left hand side of the above inequality is zero.
  The derivative is well defined because Lemma~\ref{lemma:f_bound} shows that ${\M_s(\target)}$ is finite.
  By rearranging the previous inequality, we get
  \eqn{
    \M_{s+\alpha-2}(p_t) \leq \frac{2(b+d+a+s-2)}{a}\M_{s-2}(p_t).
  }
  Using the above inequality inductively from ${s=2}$, we get
  \eqn{
    \M_{k\alpha}(p_t) \leq \left( \frac{2}{a} \right)^{k}(a+b+d+(k-1)\alpha)^k.
  }
  For every $s$ there is an integer $k$ such that ${k\alpha \leq s < (k+1)\alpha}$. 
  We have the following bound
  \eqn{
    \M_{s}(p_t) \leq {\M_{(k+1)\alpha}(p_t)}^{\frac{s}{(k+1)\alpha}} 
    \leq \left( \frac{2}{a} \right)^{s/\alpha}(a+b+d+k\alpha)^{s/\alpha}
    \leq \left( \frac{a+b+3}{a} \right)^{s/\alpha} s^{s/\alpha} d^{s/\alpha}.
  }
\end{proof-of-lemma}

\section{Proof of The Main Theorem}\label{sec:proof-main}
The proof will be done in three parts. In the first part,
we bound the $\alpha$-th moment of a given distribution with 
its KL-divergence from the $\target$. In the second part,
the bound derived in the first part will be used to
construct a differential inequality on the interpolation diffusion.
Next, using a comparison theorem on the differential inequality, we
will derive a single step bound on the \lmc iterates.
Finally in the last part, we will iterate the single step bound to obtain
a non-asymptotic convergence rate.

\subsection{Bounding \lmc moments with KL-divergence}
The behavior of the discrete-time process is different from that of the continuous-time diffusion in that,
a step size dependent bias term appears in the
differential inequality that governs its evolution.
The results in this section will help us handle the bias term.
First, using Assumption~\ref{as:mild_dis}, we
prove that the potential grows at least like $\norm{x}^\alpha$ in Lemma~\ref{lemma:f_bound}.
Using this growth, we bound the $\alpha$-th exponential moment
of the target $\target$ in Lemma~\ref{lemma:exp_mom}.
Finally, using the exponential moment bound, in Lemma~\ref{lemma:moment_kl_bound},
we upper bound the $\alpha$-th moment of a given
distribution with its KL-divergence from the $\target$.
Although this step can be handled easily by Talagrand's inequality in the case of $\alpha=2$,
it is more challenging for $\alpha\in[1,2)$.

\begin{lemma}\label{lemma:f_bound} 
  If $f$ satisfies Assumption~\ref{as:mild_dis}, then
  \begin{equation*}
      f(x) \geq \frac{a}{2\alpha} \norm{x}^\alpha + f(0) 
      - \growth \left(\frac{2a+2b}{a}\right)^2 - b.
  \end{equation*}
  \end{lemma}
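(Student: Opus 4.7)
The plan is to use the fundamental theorem of calculus along the radial ray from the origin to $x$, combining the $\alpha$-dissipativity bound of Assumption~\ref{as:mild_dis} on the outer part of the ray (where it dominates the nuisance term $b$) with the gradient-growth bound on the inner part where dissipativity degenerates.

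For $x\neq 0$ (the case $x=0$ is trivial), set $\hat{x}=x/\|x\|$ and $\phi(s)=f(s\hat{x})$, so $\phi'(s)=\langle \nabla f(s\hat{x}),\hat{x}\rangle$ and $f(x)-f(0)=\int_0^{\|x\|}\phi'(s)\,ds$. For $s>0$, dividing $\alpha$-dissipativity by $s$ gives
\[
\phi'(s)=\frac{1}{s}\langle \nabla f(s\hat{x}),\,s\hat{x}\rangle \;\geq\; a\,s^{\alpha-1}-\frac{b}{s},
\]
which is sharp for large $s$ but singular at $s=0$. To isolate this singularity, I would introduce the threshold $S\defeq (2b/a)^{1/\alpha}$ chosen so that $as^{\alpha-1}-b/s\geq \tfrac{a}{2}s^{\alpha-1}$ for all $s\geq S$ (the inequality $\tfrac{a}{2}s^{\alpha-1}\geq b/s$ is equivalent to $s^\alpha\geq 2b/a$). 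On $[0,S]$ one instead uses the growth bound $|\phi'(s)|\leq \|\nabla f(s\hat{x})\|\leq M(1+s^\zeta)$ from Assumption~\ref{as:mild_dis}.

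Split $\int_0^{\|x\|}\phi'(s)\,ds$ at $s=S$ (treating $\|x\|\leq S$ as a degenerate case where the inner estimate is used throughout and the claimed leading term $\tfrac{a}{2\alpha}\|x\|^\alpha$ is already bounded by $\tfrac{a}{2\alpha}S^\alpha$). The inner piece contributes at least $-MS-MS^{\zeta+1}/(\zeta+1)$, while the outer piece yields $\int_S^{\|x\|}\tfrac{a}{2}s^{\alpha-1}\,ds=\tfrac{a}{2\alpha}(\|x\|^\alpha-S^\alpha)$. Collecting, in both cases
\[
f(x)\;\geq\; f(0)+\frac{a}{2\alpha}\|x\|^\alpha - \frac{aS^\alpha}{2\alpha} - MS - \frac{MS^{\zeta+1}}{\zeta+1}.
\]
It remains to express the additive constant in the announced form. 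The choice of $S$ gives $\tfrac{a}{2\alpha}S^\alpha=b/\alpha\leq b$. Verifying $S\leq R\defeq(2a+2b)/a$ by case analysis on whether $2b/a\gtrless 1$ (using $\alpha\geq 1$), together with $R\geq 1$ and $\zeta+1\leq \alpha/2+1\leq 2$, yields $MS\leq MR$ and $MS^{\zeta+1}/(\zeta+1)\leq MR^2$, and these combine with the $b$-term to give the claimed bound $-M(\tfrac{2a+2b}{a})^2-b$ after routine absorption.

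The main obstacle is the $-b/s$ singularity produced by inverting the dissipativity inequality: a useful polynomial lower bound on $\phi'$ only becomes available above the threshold $s\sim(b/a)^{1/\alpha}$, so the gradient-growth bound must be invoked on a bounded segment near the origin to produce an honest (not $-\infty$) integral contribution. A secondary subtlety is that $\zeta\leq \alpha/2$ does \emph{not} force $\zeta+1\leq \alpha$, so residual powers like $S^{\zeta+1}$ cannot be absorbed directly into $S^\alpha$; they are instead bounded uniformly by $R^2$ using $\zeta+1\leq 2$ and $R\geq 1$.
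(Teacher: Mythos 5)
Your strategy is the same as the paper's: integrate $\langle\nabla f, \hat x\rangle$ along the radial ray, split at the threshold $S=(2b/a)^{1/\alpha}$ so that for $s\geq S$ dissipativity gives $as^{\alpha-1}-b/s\geq \tfrac{a}{2}s^{\alpha-1}$, and use the gradient-growth bound on $[0,S]$. The only structural difference is that the paper first bounds $\|\nabla f\|$ uniformly on the inner ball by $M(1+S^\zeta)\leq M(2a+2b)/a$ and then multiplies by the segment length $S$, whereas you integrate $M(1+s^\zeta)$ term by term to get $MS+MS^{\zeta+1}/(\zeta+1)$. Both are fine; your explicit treatment of the degenerate case $\|x\|\leq S$ is also sound.

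The gap is in the final constant collection. From $MS\leq MR$ and $MS^{\zeta+1}/(\zeta+1)\leq MR^2$ (with $R=(2a+2b)/a$) you can only conclude $MS+MS^{\zeta+1}/(\zeta+1)\leq MR+MR^2$, and the extra $MR$ has no slack to be absorbed into: the $-b$ in the target is already used up entirely by $\tfrac{a}{2\alpha}S^\alpha=b/\alpha\leq b$. As stated, your chain proves the weaker bound with additive constant $-b-MR-MR^2$. Two small patches close it. Either follow the paper's grouping and show $1+S^\zeta\leq (2a+2b)/a$ directly by the $2b/a\gtrless 1$ case analysis (using $\zeta/\alpha\leq 1/2$), so that $M(1+S^\zeta)S\leq MR\cdot R=MR^2$; or exploit $R=2+2b/a\geq 2$ together with the convexity inequality $2^\zeta\leq 1+\zeta$ on $[0,1]$, which gives $R+R^{\zeta+1}/(\zeta+1)\leq R^2$ and hence $S+S^{\zeta+1}/(\zeta+1)\leq R^2$. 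With either fix, the rest of the argument is correct.
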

  
  \begin{proof}
  For notational ease, let ${R = \left( \frac{2b}{a} \right)^\frac{1}{\alpha}}$.
  First, using the gradient growth condition in Assumption~\ref{as:mild_dis}, we upper bound $\norm{\grad f(x)}$ when ${x \leq R}$.
  \begin{align}
    \norm{\grad f(x)}
    \leq
    \max_{\norm{x} \leq R} \growth(1+\norm{x}^\zeta)
    \leq 
    \growth\left(1+\left( \frac{2b}{a} \right)^{{\zeta}/{\alpha}}\right)
    \leq
    \frac{\growth(2a+2b)}{a}.
  \end{align}
  Now using Assumption \ref{as:mild_dis} we lower bound $f$.
  \begin{equation*}
  \begin{split}
      f(x)
      &= f(0) + \int_{0}^{\frac{R}{\norm{x}}} \dotprod{\grad f(tx)}{x} dt
      + \int_{\frac{R}{\norm{x}}}^{1} \dotprod{\grad f(tx)}{x} dt\\
      &\geq f(0) - \int_{0}^{\frac{R}{\norm{x}}} \norm{\grad f(tx)}\norm{x} dt
      + \int_{\frac{R}{\norm{x}}}^{1} \frac{1}{t} \dotprod{\grad f(tx)}{tx} dt\\
      &\geq f(0) - \left( \frac{\growth(2a+2b)}{a}\right) R + \int_{\frac{R}{\norm{x}}}^{1} \frac{1}{t} \left( a\norm{tx}^\alpha -b \right) dt\\
      &\stackrel{1}{\geq} f(0) - \growth \left(\frac{2a+2b}{a}\right)^2 + \frac{a}{2}\norm{x}^\alpha \int_{\frac{R}{\norm{x}}}^{1} t^{\alpha-1} dt\\
      & \geq f(0) - \growth \left(\frac{2a+2b}{a}\right)^2 + \frac{a}{2\alpha} \norm{x}^\alpha \left(1 - \frac{R^\alpha}{\norm{x}^\alpha} \right)\\
      & \geq \frac{a}{2\alpha} \norm{x}^\alpha + f(0) - \growth \left(\frac{2a+2b}{a}\right)^2 - b.
  \end{split}
  \end{equation*}
  where step 1 uses the fact that if ${t \geq \frac{R}{\norm{x}}}$ then ${a\norm{tx}^\alpha - b \geq \frac{a}{2} \norm{tx}^\alpha}$.
  \end{proof}

We use Lemma~\ref{lemma:f_bound} to prove that the $\alpha$-th exponential moment of the target $\target$ is bounded.

\begin{lemma}\label{lemma:exp_mom}
  If $f$ satisfies Assumption~\ref{as:mild_dis}, then $$0 < \log{\left( \int e^{\frac{a}{4\alpha} \norm{x}^\alpha -f(x)} \right)} \leq  \td\tmu,$$
  where,
  \eq{\label{eq:tdtmu}
  \left\{
  \begin{array}{ll}
    \tmu &=  \log\left(\frac{16\pi}{a}\right) +\growth \left(\frac{2a+2b}{a}\right)^2 + b +\abs{f(0)},\\
    \td &= d\left(1+ (1-\alpha/2)\log(d) \right).
  \end{array}
  \right.
}
\end{lemma}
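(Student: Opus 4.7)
The plan is to establish the two bounds separately, with the lower bound being essentially immediate and the upper bound following a standard ``kill the potential, evaluate the stretched Gaussian, apply Stirling'' chain.

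For the lower bound, observe that $\frac{a}{4\alpha}\|x\|^\alpha \geq 0$ with strict inequality off the origin, so the integrand pointwise dominates $e^{-f(x)}$ and is strictly larger on a set of positive Lebesgue measure. Since $\target = e^{-f}$ is a probability density, $\int e^{-f(x)}\,dx = 1$, which gives $\int e^{\frac{a}{4\alpha}\|x\|^\alpha - f(x)}\,dx > 1$ and hence the logarithm is strictly positive.

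For the upper bound, the first step is to invoke Lemma~\ref{lemma:f_bound}, which yields $-f(x) \leq -\frac{a}{2\alpha}\|x\|^\alpha + C_0$ with $C_0 \defeq -f(0) + \smooth\big(\tfrac{2a+2b}{a}\big)^2 + b$. Adding $\frac{a}{4\alpha}\|x\|^\alpha$ leaves only a residual $-\frac{a}{4\alpha}\|x\|^\alpha$ plus constants, so
\[
\int e^{\frac{a}{4\alpha}\|x\|^\alpha - f(x)}\,dx \leq e^{C_0}\int_{\R^d} e^{-\frac{a}{4\alpha}\|x\|^\alpha}\,dx,
\]
and crucially $C_0 \leq |f(0)| + \smooth\big(\tfrac{2a+2b}{a}\big)^2 + b = \tmu - \log(16\pi/a)$, leaving a ``budget'' of $\log(16\pi/a)$ per unit of $\tmu$. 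The remaining stretched Gaussian is evaluated by spherical coordinates with the substitution $u = \frac{a}{4\alpha}r^\alpha$, giving the standard identity
\[
\int_{\R^d} e^{-\frac{a}{4\alpha}\|x\|^\alpha}\,dx = \frac{2\pi^{d/2}\,\Gamma(d/\alpha)}{\alpha\,\Gamma(d/2)}\Big(\tfrac{4\alpha}{a}\Big)^{d/\alpha}.
\]

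The hard part is matching the log of this expression to $\td\tmu$ with the stated $\td = d(1 + (1-\alpha/2)\log d)$. I would bound the Gamma ratio by integrating the digamma bound $\psi(y) \leq \log y$:
\[
\log\Gamma(d/\alpha) - \log\Gamma(d/2) = \int_{d/2}^{d/\alpha}\psi(y)\,dy \leq \Big(\tfrac{d}{\alpha} - \tfrac{d}{2}\Big)\log(d/\alpha),
\]
and then use the elementary inequality $\tfrac{1}{\alpha} - \tfrac{1}{2} \leq 1 - \tfrac{\alpha}{2}$ on $[1,2]$ (equivalent to $(\alpha-1)(\alpha-2)\leq 0$) to conclude that this is at most $d(1-\alpha/2)\log d$, which is exactly the $(1-\alpha/2)\log d$ factor embedded in $\td$. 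The surviving dimension-linear pieces $(d/2)\log\pi$, $(d/\alpha)\log(4\alpha/a)$, plus $\log 2$ and $C_0$, would then be grouped, using $d/\alpha \leq d$ and $4\alpha\pi \leq 16\pi$ for $\alpha \in [1,2]$, into a term bounded by $d\log(16\pi/a)$ plus $C_0$, which is in turn at most $d\tmu$. Combining these two contributions yields $\td\tmu$. The main obstacle is exactly this final bookkeeping: the coefficient of $d\log d$ after bounding the Gamma ratio must be shown to be at most $\tmu$ (which holds because $\tmu \geq \log(16\pi/a)$ together with the other nonnegative terms absorbs the $1/\alpha$ scaling), and the dimension-linear pieces must be arranged so that the constants hidden in $\log \pi$, $\log\alpha$, and similar factors do not exceed the $\log(16\pi/a)$ budget granted by the definition of $\tmu$.
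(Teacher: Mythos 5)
Your proposal follows the same architecture as the paper's proof: invoke Lemma~\ref{lemma:f_bound} to strip off half the exponent, evaluate the residual stretched-Gaussian integral $\int e^{-\frac{a}{4\alpha}\norm{x}^\alpha}\,dx$ in spherical coordinates, bound the ratio $\Gamma(d/\alpha)/\Gamma(d/2)$, and then account for the remaining constants against $\td\tmu$. The one genuine difference lies in how you control the Gamma ratio. The paper quotes the Ke\v{c}li\'c--Vasi\'c inequality
\[
\frac{\Gamma(d/\alpha)}{\Gamma(d/2)} \;\leq\; \frac{(d/\alpha)^{d/\alpha - 1/2}}{(d/2)^{d/2-1/2}}\,e^{d/2 - d/\alpha},
\]
from a cited reference and then rearranges the resulting logarithm into the $(1-\alpha/2)\log(d/2e)$ form. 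You instead integrate the digamma bound $\psi(y) < \log y$ over $[d/2, d/\alpha]$, getting the cruder but self-contained estimate $\log\Gamma(d/\alpha) - \log\Gamma(d/2) \leq (d/\alpha - d/2)\log(d/\alpha)$, and close with the algebraic observation $(1/\alpha - 1/2) \leq (1-\alpha/2)$ on $[1,2]$. Both deliver the needed $d(1-\alpha/2)\log d$ coefficient with a constant of $1$, and your route avoids an external citation at the cost of throwing away the $-(d/\alpha - d/2)$ and $\pm\tfrac{1}{2}\log(\cdot)$ corrections the paper retains (which only tighten lower-order terms). The final bookkeeping you flag as the remaining obstacle does go through in the way you sketch — the same implicit slack ($\tmu \geq 1$ from the nonnegative terms, $d/\alpha \leq d$, $\alpha \geq 1$) is what the paper also relies on to absorb the constants — so the proposal is sound, just not written out to the last inequality.
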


\begin{proof}
Using Lemma~\ref{lemma:f_bound} we get
\begin{equation*}
\begin{split}
    \int e^{\frac{a}{4\alpha} \norm{x}^\alpha -f(x)} dx
    &\leq
    e^{-f(0) + \growth \left(\frac{2a+2b}{a}\right)^2 + b}\int e^{-\frac{a}{4\alpha} \norm{x}^\alpha} dx\\
    &= \frac{2\pi^{d/2}}{\alpha} \left(\frac{4\alpha}{a}\right)^{d/\alpha}
    e^{-f(0) + \growth \left(\frac{2a+2b}{a}\right)^2 + b} 
     \frac{\Gamma(d/\alpha)}{\Gamma(d/2)}.
\end{split}
\end{equation*}
Next, using an inequality for the ratio of Gamma functions \cite{JovanD1971}, 
we obtain
\eq{
  \frac{\Gamma(d/\alpha)}{\Gamma(d/2)}
  \leq \frac{(d/\alpha)^{\frac{d}{\alpha} - \frac{1}{2}}}{(d/2)^{\frac{d}{2}-\frac{1}{2}}}
  e^{\frac{d}{2} - \frac{d}{\alpha}}.
}
Plugging this back into the previous bound and taking logs, we obtain
\eq{
  \log \left(\int e^{\frac{a}{4\alpha} \norm{x}^\alpha -f(x)} dx\right)
  \leq&
  \frac{d}{2}  \log\left(\pi \right)
  +\frac{d}{\alpha}  \log\left( \frac{4\alpha}{a}\right)
  +\left(\frac{d}{\alpha} -\frac{d}{2}\right)  \log\left(\frac{d}{2e} \right)\\
  & \qquad +\left(\frac{d}{\alpha} +\frac{1}{2}\right)  \log\left(\frac{2}{\alpha} \right)
  +\growth \left(\frac{2a+2b}{a}\right)^2 + b +\abs{f(0)}\\
  \leq& \frac{d}{\alpha}\left( \log\left(\frac{16 \pi}{a}\right)
    + \left(1-\frac{\alpha}{2} \right)\log\left( \frac{d}{2e}\right)
  \right)+\growth \left(\frac{2a+2b}{a}\right)^2 + b +\abs{f(0)}\\
  \leq& \td \tmu.
}
\end{proof}

Finally, using the previous lemma, we will bound the $\alpha$-th moment of any distribution $\rho$ using its KL-divergence from the target $\target$.

\begin{lemma}\label{lemma:moment_kl_bound}
If the potential $f$ satisfies Assumption~\ref{as:mild_dis}, then for ${\target = e^{-f}}$ and any distribution $\rho$, we have
\begin{equation}
    \frac{4\alpha}{a}\left[ 
        \KL{\rho}{\target} + \td \tmu
    \right]
    \geq
    \Esub{\norm{x}^\alpha}{\rho}.
\end{equation}
\end{lemma}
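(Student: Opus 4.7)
The plan is to derive the bound as a direct consequence of the Donsker--Varadhan (Gibbs) variational representation of the KL-divergence, combined with the exponential moment bound already established in Lemma~\ref{lemma:exp_mom}. Recall that for any measurable function $g$ and any probability density $\rho$, one has the change-of-measure inequality
\[
\Esub{g(x)}{\rho} \leq \KL{\rho}{\target} + \log \int e^{g(x)}\target(x)\,dx,
\]
which follows from the nonnegativity of relative entropy applied to $\rho$ versus the tilted measure proportional to $e^g\target$.

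I would apply this inequality with the explicit choice $g(x) = \tfrac{a}{4\alpha}\norm{x}^{\alpha}$. Since $\target(x) = e^{-f(x)}$, the log-moment-generating term becomes exactly
\[
\log \int e^{\frac{a}{4\alpha}\norm{x}^{\alpha}-f(x)}\,dx,
\]
which is precisely the quantity controlled by Lemma~\ref{lemma:exp_mom}, giving an upper bound of $\td\tmu$. Plugging this back in yields
\[
\tfrac{a}{4\alpha}\Esub{\norm{x}^{\alpha}}{\rho} \;\leq\; \KL{\rho}{\target} + \td\tmu,
\]
and multiplying both sides by $4\alpha/a$ delivers the claimed inequality.

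There is no real obstacle here: the key technical ingredient, namely the finiteness of the exponential moment $\int e^{(a/4\alpha)\norm{x}^\alpha}\target(x)dx$, has already been secured by the dissipativity-based lower bound on $f$ in Lemma~\ref{lemma:f_bound} and the resulting estimate in Lemma~\ref{lemma:exp_mom}. The only subtlety worth flagging is the standard justification of the variational bound when $\Esub{g(x)}{\rho}$ may be infinite: if $\Esub{\norm{x}^\alpha}{\rho}=\infty$ one can argue by truncation, applying the inequality to $g_R(x) = \tfrac{a}{4\alpha}(\norm{x}^\alpha\wedge R)$ and letting $R\to\infty$ by monotone convergence, so that either the bound is vacuous or the limit gives the desired inequality. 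Apart from this minor measure-theoretic check, the proof is essentially immediate from Lemma~\ref{lemma:exp_mom} and the Gibbs variational principle.
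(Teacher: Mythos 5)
Your proof is correct and is essentially identical to the paper's: the paper explicitly constructs the tilted density $q/z \propto e^{(a/4\alpha)\norm{x}^\alpha}\target$, decomposes $\KL{\rho}{\target} = \KL{\rho}{q/z} + \Esub{\log(q/(z\target))}{\rho}$, and drops the nonnegative $\KL{\rho}{q/z}$ term — which is precisely the derivation of the Donsker--Varadhan inequality you invoke. Your truncation remark to handle a possibly infinite left-hand side is a small extra bit of rigor the paper elides, but the core argument and the use of Lemma~\ref{lemma:exp_mom} are the same.
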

\begin{proof}
Let  ${q(x) = e^{\tfrac{a}{4\alpha} \norm{x}^\alpha - f(x)}}$. Let $z$ be number 
such that $q(x)/z$ be a probability distribution.
Lemma~\ref{lemma:exp_mom} implies $\log{z} \leq \td\tmu$. Using this bound on $z$ we get
\begin{equation*}
\begin{split}
    \KL{\rho}{\target}
    = \int \rho \log{\frac{\rho}{q/z}} + \int \rho \log{\frac{q/z}{\target}}
    = \KL{\rho}{q/z} + \Esub{\log{\frac{q/z}{e^{-f}}}}{\rho}
    \geq \frac{a}{4\alpha} \Esub{\norm{x}^\alpha}{\rho} - \td \tmu.
\end{split}
\end{equation*}
Rearranging this yields the desired inequality.
\end{proof}

\subsection{Single step bound}
The proof strategy is to consider the continuous-time interpolation of a single LMC iteration
\begin{equation}\label{eq:ULA_inter}
  d\xtild_{k, t} = -\grad f(x_k) d t + \sqrt{2} d B_t \ \ \text{ with }\ \ \xtild_{k, 0} =x_k,
\end{equation}
where $x_k$ is the $k$-th iterate of the \lmc algorithm \eqref{eq:ULA}. Denoting the distributions of $x_k$
and $\xtild_{k, t}$ with $\rho_k$ and $\rhotild_{k, t}$, respectively,
we notice that ${\rhotild_{k, 0} = \rho_k}$ and $\xtild_{k, \eta}\sim \rho_{k+1}$.
In the following, we construct a differential inequality for the KL-divergence between $\rhotild_{k, t}$ and the target.
This inequality will be used together with
the modified log-Sobolev inequality Theorem~\ref{thm:MLSI} and the linear moment bounds Proposition~\ref{prop:disc_mom_bound} to obtain a single step bound.

The time derivative of the KL-divergence between $\rhotild_{k, t}$ and the target $\target$
has an additional bias term compared to the diffusion process~\eqref{eq:overdamped}.
The next lemma characterizes this bias and is adapted from \cite{vempala2019rapid}.

\begin{lemma}[\textbf{\cite{vempala2019rapid}}]
  \label{lemma:dis_fok_plan}
  Suppose $\xtild_{k, t}$ is the interpolation of the discretized process \eqref{eq:ULA_inter}. Let $\rhotild_{k, t}$
  denote its distribution. Then
\begin{equation}\label{eq:dis_fok_plan}
\begin{gathered}
  \frac{d}{d t} \KL{\rhotild_{k, t}}{\target} = 
  -\I{\rhotild_{k, t}}{\target} + 
  \EE{\left< \grad f(\xtild_{k, t}) - \grad f(x_k) , \grad \log\left(\frac{\rhotild_{k, t}(\xtild_{k,t})}{\target(\xtild_{k,t})}\right) \right>} \\
  \leq -\frac{3}{4}\I{\rhotild_{k, t}}{\target}
  + \EE{\norm{\grad f(\xtild_{k,t}) - \grad f(x_k)}^2}.
\end{gathered}
\end{equation}
\end{lemma}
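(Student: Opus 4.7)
The plan is to derive a Fokker--Planck equation for the marginal density $\rhotild_{k,t}$, differentiate the relative entropy, and apply integration by parts to extract the Fisher information term plus a discretization bias that records the mismatch between the drift $-\grad f(x_k)$ used in \eqref{eq:ULA_inter} and the ``correct'' drift $-\grad f(\xtild_{k,t})$ that would appear in the continuous-time diffusion \eqref{eq:overdamped}.

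The key observation is that because the drift of \eqref{eq:ULA_inter} is random through $x_k$, one must first write the Fokker--Planck for the \emph{joint} law of $(x_k,\xtild_{k,t})$ and then marginalize over $x_k$. The tower property yields
$$\partial_t \rhotild_{k,t}(\tilde x) = \grad\cdot\big(\rhotild_{k,t}(\tilde x)\, \EE{\grad f(x_k)\mid \xtild_{k,t}=\tilde x}\big) + \Delta \rhotild_{k,t}(\tilde x),$$
so the effective marginal drift is the conditional expectation of $\grad f(x_k)$ given $\xtild_{k,t}$. Then
$$\frac{d}{dt}\KL{\rhotild_{k,t}}{\target} = \int \partial_t \rhotild_{k,t}(\tilde x)\,\log\frac{\rhotild_{k,t}(\tilde x)}{\target(\tilde x)}\,d\tilde x,$$
since the additive constant from differentiating $\rhotild_{k,t}\log\rhotild_{k,t}$ integrates to zero by conservation of probability. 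Integration by parts, together with the identities $\grad\log\target=-\grad f$ and $\grad\log\rhotild_{k,t}=\grad\log(\rhotild_{k,t}/\target)-\grad f$, splits the result into the Fisher information piece $-\I{\rhotild_{k,t}}{\target}$ plus an inner product against $\grad f(\tilde x) - \EE{\grad f(x_k)\mid \xtild_{k,t}=\tilde x}$. Reapplying the tower property rewrites that inner product as the unconditional expectation displayed in \eqref{eq:dis_fok_plan}, yielding the claimed equality.

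The second line is then a one-step Young / AM--GM bound: applying $\langle u,v\rangle \leq \|u\|^2 + \tfrac14\|v\|^2$ pointwise with $u = \grad f(\xtild_{k,t}) - \grad f(x_k)$ and $v = \grad\log(\rhotild_{k,t}/\target)$ absorbs $\tfrac14 \I{\rhotild_{k,t}}{\target}$ into the negative Fisher term, leaving exactly $-\tfrac34\I{\rhotild_{k,t}}{\target} + \EE{\|\grad f(\xtild_{k,t}) - \grad f(x_k)\|^2}$. The main obstacle is not conceptual but technical: justifying differentiation under the integral sign and the integration by parts (boundary terms vanish, the marginal is smooth and has rapidly decaying tails). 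These facts are standard once one notes that $\rhotild_{k,t}$ is a convolution with the Gaussian heat kernel for every $t>0$, so no additional regularity of $\rho_k$ or $f$ beyond what is already assumed is required. Aside from this regularity check, the argument is essentially the classical de Bruijn identity argument adapted to the interpolated LMC process.
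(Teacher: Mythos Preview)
Your proposal is correct and follows essentially the same route as the paper's own proof: derive the marginal Fokker--Planck equation with conditional-expectation drift, differentiate the KL divergence and integrate by parts to extract $-\I{\rhotild_{k,t}}{\target}$ plus the bias inner product, then apply $\langle u,v\rangle \le \|u\|^2+\tfrac14\|v\|^2$. Your added remark on regularity (smoothness via convolution with the heat kernel) is a nice touch that the paper leaves implicit.
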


\begin{proof}
  The following proof is included for reader's convenience.
  For further notational convenience, we denote
  with ${\rhotild_{kt}(x_k, \xtild_{k, t})}$, the joint distribution of random variables $x_k$ and $\xtild_{k, t}$,
  and similarly, we denote with ${\rhotild_{k \vert t}(x_k)}$ and $\rhotild_{t \vert k}(\xtild_{k, t})$, the conditional distributions of $x_k$ conditioned on $\xtild_{k,t}$, and
  $\xtild_{k,t}$ conditioned on $x_k$, respectively.

The distribution of $\xtild_{k, t}$ conditioned on $x_k$ can be described by the following Fokker-Planck equation.
\begin{equation*}
  \frac{\partial \rhotild_{t \vert k}(\xtild_{k, t})}{\partial t  }
  = \grad \cdot
  \left(\rhotild_{t \vert k}(\xtild_{k, t}) \grad f(x_k) \right)
    + \Delta \rhotild_{t \vert k}(\xtild_{k, t}).
\end{equation*}
Taking expectation with respect to $x_k$ yields
\begin{equation*}
\begin{split}
  \frac{\partial \rhotild_{k, t}(x)}{\partial t  }
  &= \int
    \frac{\partial \rhotild_{t \vert k}(x)}{\partial t  } \rho(x_k) d x_k\\
  &= \int \left(
    \grad \cdot 
  \left(\rhotild_{kt}(x_k, x) \grad f(x_k) \right)
    + \Delta \rhotild_{kt}(x_k, x)
  \right) d x_k \\
  &= \grad \cdot \left(
    \rho(x_k) \int 
    \rhotild_{k\vert t}(x_k) \grad f(x_k) d x_k
  \right) + \Delta \rhotild_{k, t}(x)\\
  &= \grad \cdot \left(
    \rhotild_{k,t}(x)
    \Esub{\grad f(x_k)\vert \xtild_{k, t}=x}{\rhotild_{k\vert t}}
  \right) + \Delta \rhotild_{k, t}(x).
\end{split}
\end{equation*}

This equality is combined with the time derivative of KL-divergence to prove the claim.
\begin{equation*}
\begin{split}
  \frac{d}{d t} \KL{\rhotild_{k, t}}{\target} 
  &= \frac{d}{d t} \int \rhotild_{k,t}(x) \log{\left( \frac{\rhotild_{k,t}(x)}{\target(x)} \right)} dx\\
  &= \int \frac{\partial \rhotild_{k, t}}{\partial t}(x) \log{\left( \frac{\rhotild_{k,t}(x)}{\target(x)} \right)} dx\\
  &= \int \left( 
      \grad \cdot \left(
      \rhotild_{k,t}(x)
      \Esub{\grad f(x_k) \vert \xtild_{k, t}=x}{\rhotild_{k\vert t}}
  \right) + \Delta \rhotild_{k, t}(x)
  \right)\log{\left( \frac{\rhotild_{k,t}(x)}{\target(x)} \right)} dx\\
  &\stackrel{1}{=} \int \left( 
      \grad \cdot \left(
      \rhotild_{k,t}(x)
      \Esub{\grad f(x_k) \vert \xtild_{k, t}=x}{\rhotild_{k\vert t}}
      + \grad \log \left( \frac{\rhotild_{k,t}(x)}{\target(x)} \right) - \grad f(x)
  \right)
  \right)\log{\left( \frac{\rhotild_{k,t}(x)}{\target(x)} \right)} dx\\
  &\stackrel{2}{=} -\int \rhotild_{k, t}(x)
  \dotprod{
      \Esub{\grad f(x_k) \vert \xtild_{k, t}=x}{\rhotild_{k\vert t}}
      + \grad \log \left( \frac{\rhotild_{k,t}(x)}{\target(x)} \right) - \grad f(x)
  }{
      \grad \log{\left( \frac{\rhotild_{k,t}(x)}{\target(x)} \right)}
  }dx\\
  &= -\I{\rhotild_{k, t}}{\target}
  + \int \rhotild_{k, t}(x) \dotprod{
      \grad f(x)-\Esub{\grad f(x_k) \vert \xtild_{k, t}=x}{\rhotild_{k\vert t}}
  }{
      \grad \log{\left( \frac{\rhotild_{k,t}(x)}{\target(x)} \right)}
  }dx\\
  &= -\I{\rhotild_{k, t}}{\target}
  + \Esub{\dotprod{\grad f(\xtild_{k, t}) - \grad f(x_k)}
  {\grad \log{\left( \frac{\rhotild_{k,t}(\xtild_{k, t})}{\target(\xtild_{k, t})} \right)}}}{\rhotild_{kt}}\\
  &\stackrel{3}{\leq} - \I{\rhotild_{k, t}}{\target}
  + \Esub{\norm{\grad f(\xtild_{k,t})- f(x_k)}^2}{\rhotild_{kt}}
  + \frac{1}{4}\Esub{\norm{\grad \log{\left(\frac{\rhotild_{k,t}(\xtild_{k, t})}{\target(\xtild_{k, t})}\right)}}^2}{\rhotild_{kt}}\\
  &= -\frac{3}{4} \I{\rhotild_{k, t}}{\target}
  + \EE{\norm{\grad f(\xtild_{k,t})- f(x_k)   }^2},
\end{split}
\end{equation*}
in which equality 1 is follows from ${\Delta \rhotild_{k,t} = \grad \cdot (\grad \rhotild_{k,t})}$,
equality 2 follows from the divergence theorem, inequality 3 follows from ${\dotprod{u}{v} \leq \norm{u}^2 + \frac{1}{4} \norm{v}^2}$, and in the last step, the subscript of the expectation is removed and indicates that the expectation is taken with respect to both $x_k$ and $\xtild_{k, t}$.
\end{proof}

Next, using Lemma~\ref{lemma:dis_fok_plan}, we bound the time derivative of the KL-divergence ${\frac{d}{d t} \KL{\rhotild_{k, t}}{\target}}$, and obtain a useful differential inequality.
\begin{lemma} \label{lemma:kl_deriv_bound}
If the potential $f$ satisfies Assumptions~\ref{as:degen_conv}, \ref{as:mild_dis} and \ref{as:holder}, then 
\begin{equation}\label{eq:kl_deriv_bound}
\begin{split}
  \frac{d}{d t} \KL{\rhotild_{k, t}}{\target}
  & \leq 
  -\frac{3}{4}\lambda^{-\frac{1}{1-\delta}}
  \left(\M_s(\rho_0 + \target) + C_s(k+1)\eta\right)^{-\frac{\delta}{1-\delta}}
  \KL{\rhotild_{k, t}}{\target}^{\frac{1}{1-\delta}}\\
  & \qquad + \frac{16 \alpha \smooth^2 \growth^{2\beta}}{a}\KL{\rho_k}{\target}\eta^{2\beta}
  + 4 \smooth^2 \left(
  1 + \growth^{2\beta}\left(1 +
  \frac{2\alpha\tmu}{a} \right)
  \right) \td \eta^{\beta},
\end{split}
\end{equation}
when ${t\leq\eta\leq \frac{1}{2}\left(1 \wedge \frac{a}{2\growth^2}\right)}$. The constants $\td$ and $\tmu$ are defined in \eqref{eq:tdtmu}.
\end{lemma}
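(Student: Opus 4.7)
The proof combines three ingredients already developed in the paper: the discrete Fokker--Planck identity from Lemma~\ref{lemma:dis_fok_plan}, the modified log-Sobolev inequality from Theorem~\ref{thm:MLSI}, and the linear moment estimate from Proposition~\ref{prop:disc_mom_bound}. The main new work is a careful estimate of the expected squared gradient increment along the interpolating diffusion, which hinges on bounding the $2\beta$-th moment of the displacement $\xtild_{k,t}-x_k$ and then converting moments of $x_k$ into $\KL{\rho_k}{\target}$ via Lemma~\ref{lemma:moment_kl_bound}.

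First I would start from Lemma~\ref{lemma:dis_fok_plan}, which gives $\frac{d}{dt}\KL{\rhotild_{k,t}}{\target}\leq -\tfrac34\I{\rhotild_{k,t}}{\target}+\EE{\|\grad f(\xtild_{k,t})-\grad f(x_k)\|^2}$. Applying Theorem~\ref{thm:MLSI} with $\rho=\rhotild_{k,t}$ and rearranging yields $\I{\rhotild_{k,t}}{\target}\geq \lambda^{-1/(1-\delta)}\KL{\rhotild_{k,t}}{\target}^{1/(1-\delta)}\M_s(\rhotild_{k,t}+\target)^{-\delta/(1-\delta)}$. The intermediate bound $\M_s(\rhotild_{k,t}\mid x_k)\leq g_s(x_k)+C_s t$ derived inside the proof of Proposition~\ref{prop:disc_mom_bound}, together with induction on $k$, gives $\M_s(\rhotild_{k,t}+\target)\leq \M_s(\rho_0+\target)+C_s(k+1)\eta$ for every $t\in[0,\eta]$, thereby producing the first summand on the right-hand side of \eqref{eq:kl_deriv_bound}.

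The main obstacle is estimating the discretization error. By Assumption~\ref{as:holder}, this is dominated by $\smooth^2\EE{\|\xtild_{k,t}-x_k\|^{2\beta}}$; writing $\xtild_{k,t}-x_k=-t\grad f(x_k)+\sqrt{2}B_t$ and using a uniform power-sum inequality $(a+b)^{2\beta}\lesssim a^{2\beta}+b^{2\beta}$ valid for $2\beta\in(0,2]$ splits this into a drift piece $t^{2\beta}\EE{\|\grad f(x_k)\|^{2\beta}}$ and a diffusion piece $t^{\beta}\EE{\|B_1\|^{2\beta}}$. For the diffusion piece, Jensen's inequality gives $\EE{\|B_1\|^{2\beta}}\leq d^\beta$, and since $\td\geq d\geq d^\beta$ this produces the $4\smooth^2\td\eta^\beta$ contribution. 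For the drift piece, the gradient growth condition $\|\grad f(x_k)\|\leq \growth(1+\|x_k\|^\zeta)$ together with $2\beta\zeta\leq \alpha$ (which follows from $\zeta\leq \alpha/2$ and $\beta\leq 1$) and the elementary inequality $u^q\leq 1+u^r$ for $0\leq q\leq r$ yields $\EE{\|\grad f(x_k)\|^{2\beta}}\lesssim \growth^{2\beta}(1+\EE{\|x_k\|^\alpha})$. Lemma~\ref{lemma:moment_kl_bound} then converts $\EE{\|x_k\|^\alpha}$ into $\tfrac{4\alpha}{a}(\KL{\rho_k}{\target}+\td\tmu)$, providing a $\KL{\rho_k}{\target}$-proportional contribution of the form $\tfrac{16\alpha\smooth^2\growth^{2\beta}}{a}\KL{\rho_k}{\target}\eta^{2\beta}$ and a residual constant of order $\growth^{2\beta}(1+\tfrac{2\alpha\tmu}{a})\td\eta^{2\beta}$.

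Finally, the step-size hypothesis $\eta\leq \tfrac12(1\wedge \tfrac{a}{2\growth^2})$ in particular gives $\eta\leq 1$, so that $\eta^{2\beta}\leq \eta^{\beta}$ for every $\beta\in(0,1]$, allowing me to absorb the constant part of the drift contribution and the diffusion contribution into a single $\eta^\beta$ term of the advertised form $4\smooth^2(1+\growth^{2\beta}(1+\tfrac{2\alpha\tmu}{a}))\td\eta^\beta$, while keeping the sharper $\eta^{2\beta}$ factor in front of the KL-proportional term (which is essential for the final rate in Corollary~\ref{cor:main}). The delicate bookkeeping will be matching the numerical constants exactly, since the $(a+b)^{2\beta}$ inequality has a prefactor that switches at $2\beta=1$; using the uniform but slightly loose bound with constant $2$ throughout gives the explicit constants appearing in \eqref{eq:kl_deriv_bound}, and combining all pieces completes the proof.
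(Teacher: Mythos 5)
Your proposal follows essentially the same route as the paper's proof: start from Lemma~\ref{lemma:dis_fok_plan}, invoke Theorem~\ref{thm:MLSI} together with the linear moment estimate $\M_s(\rhotild_{k,t}+\target)\le \M_s(\rho_0+\target)+C_s(k+1)\eta$, and then bound the discretization error by splitting $\xtild_{k,t}-x_k$ into drift and diffusion parts via Lemma~\ref{lemma:power_triang}, controlling the Gaussian part by Jensen and the drift part through the gradient-growth condition, $2\beta\zeta\le\alpha$, and Lemma~\ref{lemma:moment_kl_bound}. The reasoning is correct; the only caveat is a small imprecision in the final numerical constant of the $\eta^\beta$ term (the paper's own derivation further uses $\eta\growth^2\le a/4$ to replace $(\eta\growth^2)^\beta$ by $a^\beta$, matching the constant $\sigma$ used downstream), but this does not affect the structure or validity of the argument.
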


\begin{proof}
We bound ${\EE{\norm{\grad f(\xtild_{k,t}) - \grad f(x_k)}^2}}$ using Assumption~\ref{as:holder}
\begin{equation*}
\begin{split}
  \EE{\norm{\grad f(\xtild_{k,t}) - \grad f(x_k)}^2} & \leq \smooth^2\EE{\norm{\xtild_{k, t} - x_k}^{2\beta}}
  = \smooth^2\EE{\norm{-t \grad f(x_k) + \sqrt{2t}Z}^{2\beta}}\\
  & \stackrel{1}{\leq} 
  2\smooth^2t^{2\beta}\EE{\norm{\grad f(x_k)}^{2\beta}} 
  + 4\smooth^2 t^{\beta}\EE{\norm{Z}^{2\beta}} \\
  & \stackrel{2}{\leq} 2\smooth^2 t^{2\beta} \EE{\left(2\growth^2(1+\norm{x_k}^{2\zeta}) \right)^{\beta}} 
  + 4\smooth^2 t^{\beta}\EE{\norm{Z}^2}^{\beta}\\
  & \leq 4t^{2\beta}\smooth^2\growth^{2\beta}\EE{1+\norm{x_k}^{2\beta\zeta}}
  + 4\smooth^2 d^{\beta} t^{\beta}\\
  & \stackrel{3}{\leq} 4t^{2\beta}\smooth^2\growth^{2\beta}\EE{2+\norm{x_k}^{\alpha}}
  + 4 \smooth^2 d^{\beta} t^{\beta} \\
  & \stackrel{4}{\leq} \frac{16 \alpha\smooth^2\growth^{2\beta}}{a}\KL{\rho_k}{\target} \eta^{2\beta}
  + 4 \eta^{\beta} \smooth^2\left(
  d^{\beta} + 2 \left(\eta\growth^2\right)^\beta\left(1 + 
  \frac{2\alpha\tmu \td}{a} \right)
  \right)\\
  & \leq
  \frac{16 \alpha\smooth^2\growth^{2\beta}}{a}\KL{\rho_k}{\target} \eta^{2\beta}
  + 4 \td \smooth^2\left(
  1 + 2 a^\beta\left(1 + 
  \frac{2\alpha\tmu}{a} \right)
  \right)\eta^{\beta},
\end{split}
\end{equation*}
where step 1 follows from Lemma~\ref{lemma:power_triang},
step 2 from Assumption~\ref{as:mild_dis},
step 3 from the fact that ${2\zeta\beta \leq \alpha}$, and step 4 from Lemma~\ref{lemma:moment_kl_bound} and ${\eta < 1}$.
Plugging the above inequality back in \eqref{eq:dis_fok_plan} and using Theorem~\ref{thm:MLSI} and Proposition~\ref{prop:disc_mom_bound} results in \eqref{eq:kl_deriv_bound}.
\end{proof}
Finally, using a differential comparison argument, a single step bound is obtained on the KL-divergence of steps of LMC~\eqref{eq:ULA} from the target.

\begin{lemma}\label{lemma:single_step_bound}
Suppose $f$ satisfies Assumptions~\ref{as:degen_conv}, \ref{as:mild_dis} and \ref{as:holder}, then
\begin{equation}\label{eq:single_step_bound}
\begin{split}
  \KL{\rho_{k+1}}{\target}
  \leq&
  \KL{\rho_k}{\target}
  \left(
      1 - \frac{3\eta}{8 \lambda^\frac{1}{1-\delta}} \left(\frac{\KL{\rho_k}{\target}}{\M_s(\rho_0 + \target) + C_s(k+1)}\right)^\frac{\delta}{1-\delta}
      + \frac{16 \alpha \smooth^2 \growth^{2\beta}\eta^{2\beta+1}}{a}
  \right)\\
  &  + \sigma \td \eta^{\beta+1},
\end{split}
\end{equation}
where
${\sigma = 4 \smooth^2 \left(
  1 + 2a^\beta\left(1 + 
  \frac{2 \alpha \tmu}{a} \right)
  \right)}$.
The step size needs to be sufficiently small, satisfying
$${\eta \leq \frac{1}{2}\left( 1 \wedge \frac{a}{2\growth^2}\right) \wedge \left(\frac{4 \lambda^\frac{1}{1-\delta} }{3} \left(\frac{\M_s(\rho_0 + \target) + C_s(k+1)\eta}{\KL{\rho_k}{\target}}\right)^\frac{\delta}{1-\delta} \right)}.$$
\end{lemma}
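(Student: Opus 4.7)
The plan is to integrate the differential inequality of Lemma~\ref{lemma:kl_deriv_bound} over $[0,\eta]$. Writing $u(t):=\KL{\rhotild_{k,t}}{\target}$, $p:=1/(1-\delta)\in[1,2)$, and
\[
A:=\tfrac{3}{4}\lambda^{-p}\bigl(\M_s(\rho_0+\target)+C_s(k+1)\eta\bigr)^{-\delta/(1-\delta)},\qquad B:=\tfrac{16\alpha\smooth^2\growth^{2\beta}}{a}u(0)\,\eta^{2\beta}+\sigma\td\,\eta^{\beta},
\]
Lemma~\ref{lemma:kl_deriv_bound} reads $u'(t)\le-Au(t)^{p}+B$ on $[0,\eta]$ with $u(0)=\KL{\rho_k}{\target}$. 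The step-size hypothesis of the present lemma, once $A$ is expanded, unwraps to the clean condition $A\,u(0)^{p-1}\eta\le 1$, which will drive every estimate below.

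First I would compare $u$ with the autonomous ODE $v'=-Av^{p}+B$, $v(0)=u(0)$: since $x\mapsto -Ax^{p}+B$ is nonincreasing in $x>0$, standard scalar comparison yields $u(t)\le v(t)$ throughout $[0,\eta]$. I would then decouple the nonlinear drift from the additive bias by introducing the pure-drift solution $w(t)=u(0)(1+(p-1)A\,u(0)^{p-1}t)^{-1/(p-1)}$ of $w'=-Aw^{p}$, $w(0)=u(0)$. Subtracting the two ODEs, one has $v'(0)-w'(0)=B>0$, and on the forward-invariant region $\{v\ge w\}$ the identity $(v-w)'=A(w^{p}-v^{p})+B\le B$ gives $v(t)\le w(t)+Bt$, hence
\[
u(\eta)\le w(\eta)+B\eta.
\]

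The key quantitative step is then the elementary inequality, valid whenever $A\,u(0)^{p-1}\eta\le 1$,
\[
w(\eta)\le u(0)\Bigl(1-\tfrac12 A\,u(0)^{p-1}\eta\Bigr).
\]
Substituting $y:=(p-1)A\,u(0)^{p-1}\eta\in[0,p-1]$ reduces this to $(1+y)^{-1/(p-1)}\le 1-y/(2(p-1))$ on $[0,p-1]$. I would verify it by showing that $\log\varphi$, for $\varphi(y):=(1+y)^{1/(p-1)}(1-y/(2(p-1)))$, is concave on $[0,p-1]$ (both summands are concave by direct second-derivative computation), and that $\varphi(0)=1$ and $\varphi(p-1)=p^{1/(p-1)}/2\ge 1$ (since $q\mapsto(1+q)^{1/q}$ is monotone decreasing on $(0,\infty)$ with value $2$ at $q=1$); concavity plus nonnegativity at both endpoints then forces $\log\varphi\ge 0$ on the whole interval. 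The degenerate case $p=1$ is handled by the limiting form $w(t)=u(0)e^{-At}$ and the one-line fact that $e^{-x}\le 1-x/2$ for $x\in[0,1]$.

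Combining the two displays and unpacking $A$ and $B$ returns exactly the claimed bound, with the coefficient $3/8=A/2\cdot \lambda^{p}(\M_s(\rho_0+\target)+C_s(k+1)\eta)^{\delta/(1-\delta)}$ coming from the $w(\eta)$ estimate and the $B\eta$ contribution regrouping as $\tfrac{16\alpha\smooth^2\growth^{2\beta}}{a}u(0)\eta^{2\beta+1}+\sigma\td\,\eta^{\beta+1}$. The main obstacle is exactly the $w(\eta)$-bound: a crude pointwise argument like ``$u(t)\ge u(0)/2$ on $[0,\eta]$'' would yield only the suboptimal coefficient $1/2^{p}$, whereas exploiting the explicit shape of the pure-drift ODE produces the uniform factor $1/2$ (hence $3/8$) for every $p\in[1,2)$.
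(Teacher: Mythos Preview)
Your proof is correct and follows essentially the same route as the paper: both compare $u(t)$ with the explicit supersolution $w(t)+Bt$, where $w$ solves the pure-drift ODE $w'=-Aw^{p}$, and then linearize $w(\eta)$ under the step-size constraint $Au(0)^{p-1}\eta\le 1$ to extract the $3/8$ coefficient. The only cosmetic differences are that the paper builds the supersolution $h_{\teps}(t)=w(t)+Bt$ directly (with an $\teps$-shift to obtain strict inequalities for its comparison lemma) rather than routing through the intermediate ODE solution $v$, and it obtains the linearization by the two-step chain $(1+z)^{(1-\delta)/\delta}\ge 1+\tfrac{1-\delta}{\delta}z$ followed by $\tfrac{1}{1+x}\le 1-\tfrac{x}{2}$ for $x\le 1$, instead of your single log-concavity argument for $\varphi$.
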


\begin{proof}
Let
\begin{equation*}
\begin{split}
  \kappa_1 &= 
  \frac{3}{4}\lambda^{-\frac{1}{1-\delta}}
  \left(\M_s(\rho_0+\target) + C_s(k+1)\eta\right)^{-\frac{\delta}{1-\delta}},\\
  \kappa_2 &= 
  \frac{16 \alpha \smooth^2 \growth^{2\beta}}{a}\KL{\rho_k}{\target}\eta^{2\beta}
  + \sigma \td \eta^{\beta},\\
  \psi(t, x) &= - \kappa_1 x^{\frac{1}{1-\delta}} + \kappa_2,
\end{split}
\end{equation*}
where $\kappa_1$ and $\kappa_2$ are constants independent of $t$.
We can rewrite \eqref{eq:kl_deriv_bound} as
\begin{equation*}
  \frac{d}{d t} \KL{\rhotild_{k, t}}{\target}
  \leq
  \psi \left(t, \KL{\rhotild_{k, t}}{\target} \right).
\end{equation*}
For positive and sufficiently small $\teps$ (less than $\KL{\rho_k}{\target}^{-\frac{\delta}{1-\delta}}$), consider the function
\begin{equation*}
  h_{\teps} (t) = \left(
    \KL{\rho_k}{\target}^{-\frac{\delta}{1-\delta}}
    +
    \kappa_1 \frac{\delta}{1-\delta}t - \teps
  \right)^{-\frac{1-\delta}{\delta}} + \kappa_2 t.
\end{equation*}
We will use the following basic comparison lemma for differential inequalities; see, for example \cite{mcnabb1986comparison} for a simple proof.
\begin{lemma}\label{lemma:comparison}
  Suppose $u(t)$ and $v(t)$ are continuous on interval ${[a, b]}$ and differentiable on $(a,b]$, $f:\reals \times \reals\to\reals$ is a continuous mapping and
  \begin{equation*}
    u(a) < v(a)
    \qquad \textrm{and} \qquad
    \frac{d u}{d t} - f(t,u) < \frac{d v}{d t} - f(t,v),
    \qquad \textrm{on} \quad (a,b \, ].
  \end{equation*}
  Then ${u<v}$ on ${[a,b]}$.
  \end{lemma}
For positive $t$, we have
\begin{equation*}
  \frac{d}{d t} h_{\teps}(t) - \psi(t, h_{\teps}(t))
  >
  0
  \geq
  \frac{d}{d t}\KL{\rhotild_{k, t}}{\target}
  -\psi \left(t, \KL{\rhotild_{k, t}}{\target} \right).
\end{equation*}
Since ${h_{\teps}(0) > \KL{\rhotild_{k, 0}}{\target}}$, the previous comparison lemma implies
\begin{equation*}
  h_{\teps}(\eta) > \KL{\rhotild_{k, \eta}}{\target} =  \KL{\rho_{k+1}}{\target}.
\end{equation*}
Taking the limit of ${\teps \downarrow 0}$ gives
\begin{equation*}
  \KL{\rho_{k+1}}{\target}
  \leq
  \left(
  \KL{\rho_k}{\target}^{-\frac{\delta}{1-\delta}}
      +
      \kappa_1 \frac{\delta}{1-\delta}\eta
  \right)^{-\frac{1-\delta}{\delta}} + \kappa_2 \eta.
\end{equation*}
Plugging the values for $\kappa_1$ and $\kappa_2$ back in the previous inequality reads
\begin{equation*}
\begin{split}
  \KL{\rho_{k+1}}{\target}
  & \leq
  \left(
  \KL{\rho_k}{\target}^{-\frac{\delta}{1-\delta}}
      +
      \frac{3 \lambda^{-\frac{1}{1-\delta}} \delta}{4(1-\delta)}
      \left(\M_s(\rho_0+\target) + C_s(k+1)\eta\right)^{-\frac{\delta}{1-\delta}} 
      \eta
  \right)^{-\frac{1-\delta}{\delta}}\\
  & \qquad + \frac{16 \alpha \smooth^2\growth^{2\beta}}{a}\KL{\rho_k}{\target}\eta^{2\beta+1}
  + \sigma \td \eta^{\beta+1}.
\end{split}
\end{equation*}
We rewrite the previous inequality to get
\begin{equation*}
\begin{split}
  \KL{\rho_{k+1}}{\target}
  & \leq
  \frac{\KL{\rho_k}{\target}}{
    \left(
        1
        +
        \frac{3 \lambda^{-\frac{1}{1-\delta}} \delta}{4(1-\delta)}
        \left(\frac{\KL{\rho_k}{\target}}{\M_s(\rho_0 +\target) + C_s(k+1)\eta}\right)^\frac{\delta}{1-\delta}
        \eta
    \right)^\frac{1-\delta}{\delta}
    }
    + \frac{16 \alpha \smooth^2 \growth^{2\beta}}{a}\KL{\rho_k}{\target}\eta^{2\beta+1}\\
    &\qquad + \sigma \td \eta^{\beta+1}.
\end{split}
\end{equation*}
Using the fact that ${(1+x)^\frac{1-\delta}{\delta} \geq 1+ \frac{1-\delta}{\delta} x}$, in the denominator, yields
\begin{equation*}
\begin{split}
  \KL{\rho_{k+1}}{\target}
  & \leq
  \frac{\KL{\rho_k}{\target}}{
      1
      +
      \frac{3}{4 \lambda^\frac{1}{1-\delta}}
      \left(\frac{\KL{\rho_k}{\target}}{\M_s(\rho_0+\target) + C_s(k+1)\eta}\right)^\frac{\delta}{1-\delta}
      \eta
  }
  + \frac{16\alpha\smooth^2\growth^{2\beta}}{a}\KL{\rho_k}{\target}\eta^{2\beta+1}
  + \sigma \td \eta^{\beta+1}.
\end{split}
\end{equation*}
Since ${\frac{1}{1+x} < 1-\frac{x}{2}}$, when ${x \leq 1}$, and 
${\frac{3}{4 \lambda^\frac{1}{1-\delta}}
\left(\frac{\KL{\rho_k}{\target}}{\M_s(\rho_0+\target) + C_s(k+1)\eta}\right)^\frac{\delta}{1-\delta}
\eta < 1}$,
we have
\begin{equation*}
\begin{split}
  \KL{\rho_{k+1}}{\target}
  & \leq
  \KL{\rho_k}{\target}\left(
      1
      -
      \frac{3}{8 \lambda^\frac{1}{1-\delta}}
      \left(\frac{\KL{\rho_k}{\target}}{\M_s(\rho_0+\target) + C_s(k+1)\eta}\right)^\frac{\delta}{1-\delta}
      \eta
  \right)\\ 
  & \qquad + \frac{16 \alpha \smooth^2 \growth^{2\beta}}{a}\KL{\rho_k}{\target}\eta^{2\beta+1}
  + \sigma \td \eta^{\beta+1}.
\end{split}
\end{equation*}
Rearranging the above inequality yields the desired result.
\end{proof}

\subsection{Proof of the main theorem}
In this section, we prove the convergence of the \lmc algorithm by iterating
the single step bound obtained in the previous section.
More specifically, we establish that the algorithm reaches the desired accuracy $\eps$ after $N$ steps,
for which our argument relies on two steps. 
In the first step, we prove that if an iterate of LMC reaches the desired accuracy before 
$N$ steps, then it will remain below that accuracy level until the step $N$.
In the second step,
we show that if LMC does not reach $\eps$ accuracy before $N$ steps,
it is guaranteed to reach that accuracy at the step $N$.
Since the single step bound obtained in Lemma~\ref{lemma:single_step_bound} is quite convoluted,
we first simplify it to a manageable recursive formula, and iterate the resulting expression.
Special care is taken
to determine the upper bound on the accuracy for the aforementioned claims to hold.
The bound on $\eps$ is independent of the moment order $s$,
which is crucial for tuning this parameter to obtain the final convergence rate
leading to the main corollary.

\begin{proof-of-theorem}[\ref{thm:main}]
  We simplify the recurrence relation for the single step bound in \eqref{eq:single_step_bound}.
  For notational convenience, let 
  ${A = \frac{\lam^{-1/(1-\delta)}}{16}
    \left(\frac{\sigma \td}{\M_s(\rho_0+\target) \vee C_s}\right)^{\delta/(1-\delta)}}.$
  We remind that $\td$ is defined as ${\td = d\left(1+ (1-\alpha/2)\log(d) \right)}$.
  We will show that under the conditions and notations of Lemma~\ref{lemma:single_step_bound},
  if ${k<N}$ and ${\KL{\rho_k}{\target} \geq \eps/2}$, then
  \begin{equation}\label{eq:single_step_bound_clean}
    \KL{\rho_{k+1}}{\target} \leq \left(1-\frac{A\eta^{\delta\beta/(1-\delta)+1}}{\log{\left(\frac{2 \Delta_0}{\eps}\right)}^{\delta/(1-\delta)}}\right)
    \KL{\rho_{k}}{\target} +
    \sigma \td \eta^{\beta+1}.
  \end{equation}
  The above expression depends on the choice of step size $\eta$ and number of steps $N$; thus,
  given \eqref{eq:single_step_bound}, we verify the inequality \eqref{eq:single_step_bound_clean} for
  \begin{equation}\label{eq:conv_rate_raw}
  \begin{split}
    \eta^{-1}
    & = 
    (\sigma \td)^{\frac{1}{\beta}}
    (16 \lam)^{\frac{1}{\beta(1-2\delta)}}
    \left(\frac{\M_s(\rho_0 + \target) \vee C_s}{16}\right)^{\frac{\delta}{\beta(1-2\delta)} }
    \log{\left(\frac{2 \Delta_0}{\eps}\right)}^{\frac{\delta}{\beta(1-2\delta)}}
    \left(\frac{2}{\eps}\right)^{\frac{1-\delta}{\beta(1-2\delta)}},\\
    N 
    &= 
    (\sigma \td)^{\frac{1}{\beta}}
    (16\lam)^\frac{1+\beta}{\beta(1-2\delta)}
    \left(\frac{\M_s(\rho_0 + \target) \vee C_s}{16}\right)^{\frac{(1+\beta)\delta}{\beta(1-2\delta)}}
    \log{\left(\frac{2\Delta_0}{\eps}\right)}^{1+ \frac{(\beta+1)\delta}{\beta(1-2\delta)}}
    \left(\frac{2}{\eps}\right)^\frac{1-\delta(1-\beta)}{\beta(1-2\delta)}.
  \end{split}
\end{equation}
For the above choices of $\eta$ and $N$,
using \eqref{eq:single_step_bound} together with the fact that ${k<N}$ and ${\KL{\rho_k}{\target} \geq \eps/2}$,
in order for \eqref{eq:single_step_bound_clean} to hold,
it suffices to prove the following inequality
  \begin{equation*}
    \frac{3\lam^{-1/(1-\delta)}}{8} \left(\frac{\eps/2}{\M_s(\rho_0+\target) + C_s(N+1)\eta}\right)^{\delta/(1-\delta)}\eta
    - \frac{16 \alpha \smooth^2 \growth^{2\beta}}{a}\eta^{2\beta+1}
    \geq
    \frac{A\eta^{\delta\beta/(1-\delta)+1}}{\log{\left(\frac{2 \Delta_0}{\eps}\right)}^{\delta/(1-\delta)}}.
  \end{equation*}
  We will prove this inequality by showing that the following two inequalities hold,
  \begin{equation}\label{eq:two-inequalities}
  \left\{
  \begin{array}{ll}
    \frac{3\lam^{-1/(1-\delta)}}{8} \left(\frac{\eps/2}{\M_s(\rho_0+\target) + C_s(N+1)\eta}\right)^{\delta/(1-\delta)}
    \geq \frac{2A\eta^{\delta\beta/(1-\delta)}}{\log{\left(\frac{2 \Delta_0}{\eps}\right)}^{\delta/(1-\delta)}},\\
    \frac{A\eta^{\delta\beta/(1-\delta)+1}}{\log{\left(\frac{2 \Delta_0}{\eps}\right)}^{\delta/(1-\delta)}} \geq \frac{16 \alpha \smooth^2 \growth^{2\beta}}{a}\eta^{2\beta+1}.
  \end{array}
  \right.
  \end{equation}
  For the second inequality, we simply plug in the values for $\eta$ and $A$.
  Then, by using ${\eps < 2\Delta_0/e}$ and ${\M_s(\rho_0 + \target) \geq 1}$,
  this inequality holds if the following is satisfied,
  \begin{equation*}
  \begin{split}
    \left(\frac{2}{\eps}\right)^\frac{2-3\delta}{1-2\delta}
    &\geq
    \frac{16 \alpha \smooth^2 \growth^{2\beta}}{a}
    \left(
        \frac{1}{16 \lam^\frac{1}{1-\delta}}
    \right)^\frac{1-\delta}{1-2\delta}
    (\sigma \td)^{-2}.
  \end{split}
  \end{equation*}
  This yields an upper bound on the accuracy. In order to simplify this bound
  and make it independent of $s$, we
  define ${\tlam = \frac{4e^{2\pert}}{1 \vee \decof} \leq \lam}$.
  Also using ${4 \smooth^2 < \sigma}$ and ${d \leq \td}$, the bound can be simplified to
  \begin{equation*}
  \begin{split}
    \eps
    \leq
    2\left(\tlam^{0.5} \wedge \tlam^2\right)
    \left(1 \wedge \frac{2a\sigma d^2}{\growth^{2\beta}} \right)^{0.5},
  \end{split}
\end{equation*}
under which the second inequality in \eqref{eq:two-inequalities} holds.

For the first inequality in \eqref{eq:two-inequalities}, we consider two cases.
In the first case ${N \eta \geq 1}$, since we have
$\M_s(\rho_0+\target) + C_s(N+1)\eta \leq 3(\M_s(\rho_0+\target) \vee C_s)N\eta$,
the following condition implies the desired inequality
  $$
  \frac{3\lam^{-1/(1-\delta)}}{8} \left(\frac{\eps/2}{3(\M_s(\rho_0+\target) \vee C_s)N\eta}\right)^{\delta/(1-\delta)}
    \geq \frac{2A\eta^{\delta\beta/(1-\delta)}}{\log{\left(\frac{2 \Delta_0}{\eps}\right)}^{\delta/(1-\delta)}}.
  $$
  This inequality can be verified by plugging in the values of $A$, $\eta$ and $N$. In the other case  
  ${N\eta<1}$, we simply drop $N\eta$ since we have $\M_s(\rho_0+\target) + C_s(N+1)\eta \leq 3(\M_s(\rho_0+\target) \vee C_s)$;
  hence, the following condition suffices
  $$
  \frac{3\lam^{-1/(1-\delta)}}{8} \left(\frac{\eps/2}{3(\M_s(\rho_0+\target) \vee C_s)}\right)^{\delta/(1-\delta)}
    \geq \frac{2A\eta^{\delta\beta/(1-\delta)}}{\log{\left(\frac{2 \Delta_0}{\eps}\right)}^{\delta/(1-\delta)}}.
  $$
  For this to hold, it is sufficient if ${\eps < 2}$ and 
  $$
  \log{\left( \frac{2\Delta_0}{\eps}\right)} \geq \frac{1}{16}\lam^{-\frac{1}{1-\delta}}.
  $$
  which can be further strengthened to
  $$
  \eps \leq 2 \Delta_0 e^{\frac{-1}{16(\tlam \wedge \tlam^2)}}.
  $$
  Hence, the simplified single step bound \eqref{eq:single_step_bound_clean} holds when KL-divergence is not too small,
  i.e. when it is greater than ${\eps/2}$. For handling the case where KL-divergence is small,
  we need to show that once LMC reaches $\eps$-accuracy, it remains below that threshold until the last step. In other words
  \begin{equation}\label{eq:under_eps}
    \KL{\rho_k}{\target} \leq \eps \implies \KL{\rho_{k+1}}{\target} \leq \eps, \ \text{for ${k<N}$}.
  \end{equation}
  We split this into two cases. First, consider the case ${\eps/2 \leq \KL{\rho_k}{\target} \leq \eps}$.
  In this case, using \eqref{eq:single_step_bound_clean} and ${\KL{\rho_k}{\target} \leq \eps}$,
  it suffices to show
  \begin{equation*}
    \sigma \td \eta^{\beta+1}
    \leq\eps \frac{A\eta^{\delta\beta/(1-\delta)+1}}{\log{\left(\frac{2 \Delta_0}{\eps}\right)}^{\delta/(1-\delta)}} ,
  \end{equation*} 
  which can be verified by plugging in the values for $A$, $\eta$ and $\td$.
  The second case is when ${\KL{\rho_k}{\target} \leq \eps/2}$. Using Lemma~\ref{lemma:single_step_bound}, we need to show 
  \begin{equation*}
  \begin{split}
    \frac{16 \alpha \smooth^2 \growth^{2\beta}}{a}\eta^{2\beta+1}\frac{\eps}{2}
    + \sigma \td \eta^{\beta+1}
    \leq 
    \frac{\eps}{2}.
  \end{split}
  \end{equation*}
  We bound each of the terms on the left hand side with ${\eps/4}$.
  By simplifying the expressions and further 
  using ${\eps < 2\Delta_0/e}$ and ${\M_s \geq 1}$, we obtain the following two conditions
  on the accuracy $\eps$ to be combined together later,
  \begin{equation*}
  \begin{split}
    \eps &\leq
    8 (\tlam \wedge \tlam^2)
    \left(1 \wedge \frac{a}{\alpha \smooth^2 \growth^{2\beta}}\right)^\frac{1}{3}
    (1 \wedge \sigma d)
    \leq
    2^{5-\frac{5\beta(1-2\delta)}{(1+2\beta)(1-\delta )}}
    \left(\frac{a}{\alpha \smooth^2 \growth^{2\beta}}\right)^{\frac{\beta(1-2\delta)}{(1+2\beta)(1-\delta)}}
    \tlam^\frac{1}{1-\delta} (\sigma \td)^\frac{1-2\delta}{1-\delta},\\
    \eps
    &\leq
    32(\tlam \wedge \tlam^2)  (1 \wedge \sigma d)
    \leq
    2^{5+\frac{3\beta(1-2\delta)}{1-\delta+\delta\beta}}
    \lam^{\frac{1+\beta}{1-\delta+\delta\beta}}
    (\sigma \td)^{\frac{1-2\delta}{1-\delta+\delta\beta}}.
  \end{split}
  \end{equation*}
  
  Next, our analysis continues with considering the following two cases.
  \begin{enumerate}
  \item \lmc reaches $\eps$ accuracy at a step $k < N$.
  \item \lmc does not reach $\eps$ accuracy at a step $k < N$.
  \end{enumerate}
  For the first case above,
  if at any step ${k<N}$, we have ${\KL{\rho_{k}}{\target} \leq \eps}$,
  then \eqref{eq:under_eps} shows ${\KL{\rho_{N}}{\target} \leq \eps}$ and there is nothing to prove.
  For the second case, we have ${\KL{\rho_{k}}{\target}>\eps}$ for all ${k<N}$;
  therefore, \eqref{eq:single_step_bound_clean} combined with Lemma~\ref{lemma:rec_bound} and the fact that ${\KL{\rho_0}{\target} \leq \Delta_0}$ imply
  \begin{equation*}
    \KL{\rho_{N}}{\target}
    \leq
    \exp{\left(\frac{-A\eta^{\delta \beta /(1-\delta) + 1}}{\log{\left(\frac{2 \Delta_0}{\eps}\right)}^{\delta/(1-\delta)}}N\right)} \Delta_0
    + \frac{\sigma \td \log{\left(\frac{2 \Delta_0}{\eps}\right)}^{\delta/(1-\delta)} \eta^{\beta(1-2\delta)/(1-\delta)}}{A}.
  \end{equation*}
  Notice that to reach $\eps$ accuracy at step $N$,
  it is sufficient that each of the above terms on the right hand side is upper bounded by ${\eps/2}$.
  Simplifying these bounds, we obtain
  \begin{equation*}
  \begin{split}
    \log{\left(\frac{2\Delta_0}{\eps}\right)}
    &\leq
    \frac{A\eta^{\delta \beta /(1-\delta) + 1}}{\log{\left(\frac{2 \Delta_0}{\eps}\right)}^{\delta/(1-\delta)}}N,\\
    \eta
    &\leq
    A^\frac{1-\delta}{\beta(1-2\delta)} (\sigma \td)^{-\frac{1-\delta}{\beta(1-2\delta)}} 
    \left(\frac{\eps}{2}\right)^\frac{1-\delta}{\beta(1-2\delta)}
    \log{\left(\frac{2 \Delta_0}{\eps}\right)}^{-\frac{\delta}{\beta(1-2\delta)}}.
  \end{split}
  \end{equation*}
  The second inequality holds with the selection of $\eta$. Plugging the value for $\eta$ in the first inequality yields
  \begin{equation*}
    (\sigma \td)^\frac{1-\delta+\delta\beta}{\beta(1-2\delta)}
    A^{-\frac{(1+\beta)(1-\delta)}{\beta(1-2\delta)}}
    \log{\left(\frac{2\Delta_0}{\eps}\right)}^{\frac{\beta(1-\delta)+\delta}{\beta(1-2\delta)}}
    \left(\frac{2}{\eps}\right)^\frac{1-\delta+\delta\beta}{\beta(1-2\delta)}
    \leq
    N,
  \end{equation*}
  which is true because of the value of $N$.

  Finally, we translate the bound on the step size in Lemma~\ref{lemma:single_step_bound},
  to a condition on the accuracy $\eps$. That is, we have
  $${\eta \leq \frac{1}{2}\left(1 \wedge \frac{a}{2\growth^2} \right) \wedge \frac{4 \lam^\frac{1}{1-\delta} }{3} \left(\frac{\M_s(\rho_0 + \target) + C_s(k+1)\eta}{\KL{\rho_k}{\target}}\right)^\frac{\delta}{1-\delta}.}$$
  By plugging the value of $\eta$, in ${\eta \leq \frac{1}{2}\left(1 \wedge \frac{a}{2\growth^2} \right)}$, we get
  \begin{equation*}
  \begin{split}
    \left(\frac{\eps}{2}\right)
    \log{\left(\frac{2 \Delta_0}{\eps}\right)}^{-\frac{\delta}{1-\delta}}
    \leq
    32 \left(\frac{1}{2}\left(1 \wedge \frac{a}{2\growth^2} \right)\right)^{\beta(\frac{1-2\delta}{1-\delta})}
    \lam^\frac{1}{1-\delta}
    \left(\M_s(\rho_0 + \target) \vee C_s\right)^{\frac{\delta}{1-\delta}}
    (\sigma \td)^\frac{1-2\delta}{1-\delta},
  \end{split}
  \end{equation*}
  but since ${\eps < \frac{2\Delta_0}{e}}$ and ${\M_s\geq 1}$
  and ${\beta(\frac{1-2\delta}{1-\delta}) \leq 1}$ , it suffices to have
  \begin{equation*}
    \eps
    \leq
    16 (\tlam \wedge \tlam^2) (1 \wedge \sigma d) \left(1 \wedge \frac{a}{2\growth^2} \right)
    \leq
    16 \left(1 \wedge \frac{a}{2\growth^2} \right) \lam^\frac{1}{1-\delta}
    (\sigma \td)^{\frac{1-2\delta}{1-\delta}}.
  \end{equation*}
  For the other constraint on $\eta$,
  if we show
  ${\eta \leq \frac{4 \lam^\frac{1}{1-\delta} }{3} \left(\frac{\M_s(\rho_0 + \target)}{\Delta_0}\right)^\frac{\delta}{1-\delta}}$,
  Lemma~\ref{lemma:single_step_bound} shows that the first step is decreasing and ${\KL{\rho_1}{\target} \leq \Delta_0}$.
  Continuing inductively from there, we get either $\KL{\rho_k}{\target}$ is decreasing or it is less than $\eps$,
  in both of the cases, we have ${\KL{\rho_k}{\target} \leq \Delta_0}$.
  This in turn shows that the constraint on $\eta$ is getting looser,
  so all we need to consider is  
  $$\eta \leq \frac{4 \lam^\frac{1}{1-\delta} }{3} \left(\frac{\M_s(\rho_0 + \target)}{\Delta_0}\right)^\frac{\delta}{1-\delta},$$
  which holds whenever
  \begin{equation*}
  \begin{split}
    \eps
    \log{\left(\frac{2 \Delta_0}{\eps}\right)}^{-\frac{\delta}{1-\delta}} 
    \leq
    32 
    \lam^\frac{1-\delta+\beta(1-2\delta)}{(1-\delta)^2}
    \left(\M_s(\rho_0 + \target) \vee C_s \right)^{\frac{\delta}{1-\delta}}
    \left(\frac{\M_s(\rho_0 + \target)}{\Delta_0}\right)^\frac{\delta(1-2\delta)\beta}{(1-\delta)^2}
    (\sigma \td)^{\frac{1-2\delta}{1-\delta}}.
  \end{split}
  \end{equation*}
  Once again, since ${\eps < \frac{2\Delta_0}{e}}$ and ${\M_s \geq 1}$, all we need is
  \begin{equation*}
    \eps
    \leq
    32
    (1 \wedge \sigma d)
    (\tlam \wedge \tlam^3 )
    (1 \wedge \Delta_0^{-1})^\frac{\beta}{4}
    \leq
    32
    \lam^\frac{1-\delta+\beta(1-2\delta)}{(1-\delta)^2}
    \Delta_0^\frac{-\delta(1-2\delta)\beta}{(1-\delta)^2}
    (\sigma \td)^{\frac{1-2\delta}{1-\delta}}.
  \end{equation*}
  Collecting all the upper bounds on the accuracy we get

\begin{equation}\label{eq:tol-up}
  \begin{split}
    \psi = \min \bigg\{ & 2, \frac{2\Delta_0}{e},
    2\Delta_0 e^{\frac{-1}{16(\tlam \wedge \tlam^2)}},
    32
    (1 \wedge \sigma d)
    (\tlam \wedge \tlam^3 )
    (1 \wedge \Delta_0^{-1})^\frac{\beta}{4},
    16 (\tlam \wedge \tlam^2) (1 \wedge \sigma d) \left(1 \wedge \frac{a}{2\growth^2} \right),\\
    &  2\left(\tlam^{0.5} \wedge \tlam^2\right)
        \left(1 \wedge \frac{2a\sigma d^2}{\growth^{2\beta}} \right)^{0.5},
        8 (\tlam^2 \wedge \tlam)
        \left(1 \wedge \frac{a}{\alpha \smooth^2 \growth^{2\beta}}\right)^\frac{1}{3}
        (1 \wedge \sigma d),
        32(\tlam \wedge \tlam^2)  (1 \wedge \sigma d)
    \bigg\},
  \end{split}
\end{equation}
where $\tlam$ is defined as $\tlam = \frac{4e^{2\pert}}{1 \vee \decof}$.
  Note that the upper bound on $\eps$ is of order $\mathcal{O}(1)$,
  and it depends on the fixed parameters except for $\Delta_0$ which depends on the initial distribution. In case of starting with a Gaussian random vector, Lemma~\ref{lemma:init_point} provides a bound on $\Delta_0$. More importantly, the upper bound on the accuracy does not depend on the moment order $s$, which enables us to optimize over this parameter which is done in Corollary~\ref{cor:main}.
  Finally, we plug in the values for $\delta$, $\td$ and $C_s$ back into \eqref{eq:conv_rate_raw} to get
  
  \begin{equation*}
  \begin{split}
    \eta 
    & =
    \sigma^{-\frac{1}{\beta}}
    (16\lam)^{-\frac{s-2+2\decopow}{\beta(s-2)}}
    \left(
        \frac{\M_s(\rho_0+\target)}{16d^{(s-2+\alpha)/\alpha}} 
        \vee 
        \left(\frac{3a+2b+3}{1 \wedge a}\right)^{\frac{s-2+\alpha}{\alpha}}\frac{s^s}{16}
      \right)^{-\frac{\decopow}{\beta(s-2)}}\\
    &
      d^{-\frac{1}{\beta}-\frac{(s-2+\alpha)\decopow}{\alpha\beta(s-2)}}
      \left(1+ (1-\alpha/2)\log(d) \right)^{-\frac{1}{\beta}}
    \log{\left(\frac{2 \Delta_0}{\eps}\right)}^{-\frac{\decopow}{\beta(s-2)}}
    \left(\frac{\eps}{2}\right)^\frac{s-2+\decopow}{\beta(s-2)},\\
    N 
    &= 
    \sigma^{\frac{1}{\beta}}
    (16\lam)^\frac{(1+\beta)(s-2+2\decopow)}{\beta(s-2)}
    \left(
      \frac{\M_s(\rho_0+\target)}{16d^{(s-2+\alpha)/\alpha}} 
        \vee
      \left(\frac{3a+2b+3}{1 \wedge a}\right)^{\frac{s-2+\alpha}{\alpha}} \frac{s^s}{16}
    \right)^{\frac{(1+\beta)\decopow}{\beta(s-2)}}\\
    &d^{\frac{1}{\beta}+\frac{(s-2+\alpha)(1+\beta)\decopow}{\alpha\beta(s-2)}}
    \left(1+ (1-\alpha/2)\log(d) \right)^{\frac{1}{\beta}}
    \log{\left(\frac{2\Delta_0}{\eps}\right)}^{1+\frac{(\beta+1)\decopow}{\beta(s-2)}}
    \left(\frac{2}{\eps}\right)^{\frac{1}{\beta}+ \frac{(1+\beta)\decopow}{\beta(s-2)}}.
  \end{split}
  \end{equation*}
  \end{proof-of-theorem}

\section{Linear Growth of Convex Potentials}\label{sec:convex-proof}
First, we prove a lemma about one dimensional convex potentials, which will be used to 
prove the unboundedness in the general case.
\begin{lemma}\label{lem:one-dim-potential}
    Let ${f:\reals \to \reals}$ be a convex function such that
    ${\int_{\reals} e^{-f(x)}dx < \infty}$, then $f$ is lower bounded, i.e.
    ${\inf\limits_{x \in \reals} f(x) > - \infty}$.
\end{lemma}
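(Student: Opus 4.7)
The plan is to argue by contradiction: suppose $\inf_{x\in\reals} f(x) = -\infty$. A real-valued convex function on $\reals$ is automatically continuous, so $f$ attains its minimum on every compact interval; any sequence along which $f$ tends to $-\infty$ must therefore escape to infinity. By replacing $f$ with the map $x \mapsto f(-x)$ if necessary (which is again convex with the same integral), I may assume without loss of generality that there is a sequence $x_n \to +\infty$ with $f(x_n) \to -\infty$.

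The main tool will be the monotonicity of secant slopes for convex functions: for fixed $a \in \reals$, the map $x \mapsto (f(x)-f(a))/(x-a)$ is nondecreasing on $(a,\infty)$. I will fix some $a < x_1$. Since $f(x_n) \to -\infty$, for all sufficiently large $n$ we have $f(x_n) < f(a)$, so the secant slope from $a$ to $x_n$ is strictly negative. Slope monotonicity then yields, for every $y \in (a, x_n)$, the inequality $(f(y)-f(a))/(y-a) \leq (f(x_n)-f(a))/(x_n-a) < 0$, hence $f(y) \leq f(a)$ on the entire interval $[a, x_n]$.

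From this the contradiction is immediate: on $[a,x_n]$ we have $e^{-f(y)} \geq e^{-f(a)}$, so $\int_a^{x_n} e^{-f(y)}\,dy \geq e^{-f(a)}(x_n - a)$, and letting $n \to \infty$ gives $\int_\reals e^{-f(x)}\,dx = +\infty$, contradicting the integrability hypothesis. Hence $f$ must be bounded below.

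There is no real obstacle in this argument; the only subtleties worth flagging are (i) the reduction to an escape at $+\infty$, which uses continuity of $f$ to rule out a finite accumulation point of a minimizing sequence, and (ii) the WLOG reflection step, which is legitimate because $\int e^{-f(-x)}\,dx = \int e^{-f(x)}\,dx$ and convexity is preserved under $x \mapsto -x$.
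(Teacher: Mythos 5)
Your proof is correct, and it takes a somewhat more streamlined route than the paper's. The paper introduces the auxiliary functions $B(r)=\min_{|x|\le r}f(x)$ and $y(M)=\min\{r:B(r)=-M\}$, parameterizes by the level $M$, and uses the chord inequality $f(x)\le -Mx/y(M)$ on $[0,y(M)]$ to extract a quantitative divergence rate $\int e^{-f}\ge y(1)(e^M-1)/M$; you instead work directly with a minimizing sequence $x_n\to+\infty$ and invoke monotonicity of secant slopes to get the cruder but sufficient bound $f(y)\le f(a)$ on $[a,x_n]$, hence $\int_a^{x_n}e^{-f}\ge e^{-f(a)}(x_n-a)\to\infty$. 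Both arguments reduce to the same convexity fact (the graph lies below its chords); yours avoids the $B$/$y(M)$ machinery entirely and is simpler, while the paper's gives a sharper lower estimate on the divergence that happens to be unnecessary for the conclusion. Your preliminary reductions (continuity of a real-valued convex function forces the minimizing sequence to escape to infinity; reflection $x\mapsto-x$ to place the escape at $+\infty$) are all legitimate and carefully justified.
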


\begin{proof}
Shifting $f$ does not affect convexity or finiteness of the integral,
so we can assume, without loss of generality, ${f(0)=0}$.
Let ${B(r) = \min_{x \in [-r,r]} f(x)}$,
which is well defined because $f$ is continuous -- convexity implies continuity in this context.
If $B$ is lower bounded, then so is $f$.
Suppose $B$ is not lower bounded. Continuity of $f$ implies that $B$ is also continuous,
and ${f(0)=0}$ implies that ${B(0)=0}$. Further, $B$ is a non-increasing function in its domain.

Since the range of $B$ contains all non-positive numbers, we can define
${y(M) = \min \{r \vert B(r)=-M \}}$, for ${M \geq 0}$.
Fix some ${M > 1}$.
Then,  the continuity of $B$ and $f$ imply that either ${f(y(M)) =-M}$ 
or ${f(-y(M))=-M}$. Without loss of generality, we assume ${f(y(M))=-M}$ (the other case is similar), and write
\eqn{
    \forall x \in [0,y(M)]:\ 
    f(x) \leq \left(1-\frac{x}{y(M)}\right)\times f(0) + \frac{x}{y(M)}\times f(y(M)) = -\frac{Mx}{y(M)}.
}
Using this fact, we integrate ${e^{-f}}$
\eqn{
    \int_{\reals} e^{-f(x)}dx \geq \int_{0}^{y(M)} e^{-f(x)}dx
    \geq \int_{0}^{y(M)} e^{\frac{Mx}{y(M)}} dx = y(M)\times \frac{e^M-1}{M}.
}
Monotonicity of $B$ implies ${y(M)>y(1)>0}$ since we also have ${B(0)=0}$.
Hence, the previous inequality yields
\eqn{
    \int_{\reals} e^{-f(x)}dx \geq y(1)\times \frac{e^M-1}{M} \ \ \text{ for every ${M>1}$}.
}
This inequality contradicts ${\int_{\reals} e^{-f(x)}dx < \infty}$.
\end{proof}

We use the previous one dimensional result to show that, in the general case,
not only the potential is lower bounded but also it has at least linear growth along every direction.
The method is to first prove the potential is unbounded along every direction and then use that
to prove linear growth.
\begin{lemma}\label{lemma:conv-unbounded}
  Suppose ${f:\reals^d \to \reals}$ is a convex potential and $u\in\reals^d$ is unit vector.
  Then, $f$ is coercive satisfying $${\sup\limits_{t \geq 0} f(tu) = +\infty}.$$
\end{lemma}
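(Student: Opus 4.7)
My approach is by contradiction: assume the supremum along the ray is finite, say $\sup_{t \geq 0} f(tu) = M < \infty$, and show this forces $\int e^{-f(x)} dx = \infty$, contradicting the fact that $\target = e^{-f}$ is a probability density (which is what ``potential'' means in this paper).

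The key tool is the open convex sublevel set $A \defeq \{x \in \reals^d : f(x) < M+1\}$. Since $f$ is convex and finite-valued on all of $\reals^d$ it is continuous, so $A$ is open; as a sublevel set of a convex function, $A$ is also convex. Under the contradiction hypothesis the entire ray $\{tu : t \geq 0\}$ is contained in $A$, and since $0 \in A$ there exists $\eps > 0$ such that $B_\eps(0) \subset A$.

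Next I would exploit convexity of $A$ to fatten the ray into a half-infinite cylinder. For every $v \in B_\eps(0)$ and every $t \geq 0$, the midpoint $\tfrac{1}{2}(tu) + \tfrac{1}{2}v$ lies in $A$; varying $v$ over $B_\eps(0)$ shows that $B_{\eps/2}(\tfrac{t}{2} u) \subset A$ for every $t \geq 0$. Taking the union over $t \in [0,\infty)$ yields an open tube of radius $\eps/2$ around the ray from $0$ in direction $u$, which has infinite Lebesgue measure in $\reals^d$. Hence
\begin{equation*}
  \int_{\reals^d} e^{-f(x)} dx \geq \int_A e^{-f(x)} dx \geq e^{-(M+1)} \cdot \operatorname{vol}(A) = \infty,
\end{equation*}
which contradicts $\int e^{-f} = 1$ and completes the proof.

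The main obstacle is the fattening step: in dimensions $d \geq 2$, the ray itself is a Lebesgue null set, so one cannot argue directly from boundedness along a one-dimensional line. The role of convexity, together with the open ball $B_\eps(0) \subset A$ guaranteed by continuity, is precisely to upgrade a ray in $A$ into a full-dimensional tube in $A$. In the special case $d=1$ this fattening is unnecessary because the half-line already carries infinite measure, and one could alternatively deduce the result from Lemma~\ref{lem:one-dim-potential} by noting that a convex function bounded above on $[0,\infty)$ must be non-increasing and therefore bounded on the whole half-line, immediately forcing $\int_0^\infty e^{-f(x)} dx = \infty$.
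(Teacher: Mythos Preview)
Your argument is correct and considerably cleaner than the paper's. The paper proceeds by a longer route: it first establishes the auxiliary one-dimensional Lemma~\ref{lem:one-dim-potential}, then shows that boundedness along the ray forces $\langle \grad f(x), u\rangle \leq 0$ everywhere, slices the integral via Fubini into the $u$-direction and its orthogonal complement, and reaches a contradiction through several further convexity manipulations. Your approach bypasses all of this with a single geometric observation: the open convex sublevel set $A=\{f<M+1\}$ contains both the ray and a small ball around the origin, so by convexity it contains a half-infinite cylinder of positive radius, which already has infinite volume and makes $\int e^{-f}$ diverge. This is the standard ``fattening a recession direction'' idea from convex analysis, and it avoids any use of gradients, Fubini, or the one-dimensional lemma. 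One minor comment on your closing remark about $d=1$: you do not actually need the non-increasing observation there, since the upper bound $f(tu)\leq M$ alone already gives $\int_0^\infty e^{-f}\geq \int_0^\infty e^{-M}=\infty$ directly.
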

\begin{proof}
  Let ${f :\reals^d \to \reals}$ be a convex potential satisfying ${\int e^{-f(x)}dx = 1}$.
  Assume, for the sake of contradiction, that there is a direction
  ${u_1 \in \reals^d}$ such that $$\sup_{t \geq 0}f(tu_1) < M < \infty,$$
  and let ${\{u_1,u_2,...,u_d\}}$ be an orthonormal basis for $\reals^d$.
  Using convexity, we have
  \eq{
    f(tu_1) \geq f(x) + t\inner{\grad f(x), u_1} - \inner{\grad f(x),x}.
  }
  Taking supremum in both sides with respect to $t$ yields
  ${\inner{\grad f(x),u_1} \leq 0}$ for every ${x \in \reals^d}$.
  Let ${x_1 = \inner{x,u_1}}$ and write ${x=x_1 u_1 + x_{-1}}$ where ${\inner{u_1,x_{-1}}=0}$.
  By convexity, we have
  \eq{
    -f(x) \geq -f(0) - \inner{\grad f(x), x}.
  }
  We can write
  \eq{
    1= \int_{\reals^d} e^{-f(x)}dx
    \geq & \int_{\reals^{d-1}}\int_\reals e^{-f(0) - \inner{\grad f(x), x}} dx_1dx_{-1}\\
    =&\int_{\reals^{d-1}}\int_\reals e^{-f(0)
      - x_1 \inner{\grad f(x), u_1}
      -\inner{\grad f(x),x_{-1}}} dx_1dx_{-1}\\
    \geq &\int_{\reals^{d-1}}\int_{x_1 \geq 0} e^{-f(0)
      - x_1 \inner{\grad f(x), u_1}
      -\inner{\grad f(x),x_{-1}}} dx_1dx_{-1}\\
    \geq &\int_{\reals^{d-1}}\int_{x_1 \geq 0} e^{-f(0)
      -\inner{\grad f(x_1 u_1 + x_{-1}),x_{-1}}} dx_1dx_{-1}.
  }
  If we have ${\sup_{x_1 \geq 0}\inner{\grad f(x_1 u_1 + x_{-1}),x_{-1}} < \infty}$, then the inner integral diverges,
  so for almost every ${x_{-1} \in \Span\{u_2,...,u_d\}}$, 
  $$\sup_{x_1 \geq 0}\inner{\grad f(x_1 u_1 + x_{-1}),x_{-1}} = \infty.$$
  Using finiteness of the integral once again, we write
  \eq{
    1
    = \int_{\reals^d} e^{-f(x)}dx
    = \int_{\reals^{d-1}}\int_\reals e^{-f(x_1 u_1 + x_{-1})}dx_1dx_{-1}.
  }
  The inner integral should converge for almost every ${x_{-1} \in \Span\{u_2,...,u_d\}}$. Since
  a convex function restricted to a line is still convex, Lemma~\ref{lem:one-dim-potential}
  implies ${g(x_1)=e^{-f(x_1 u_1 + x_{-1})}}$ is lower bounded for almost every $x_{-1}$.
  Fix some ${x_{-1} \in \Span\{u_2,...,u_d\}}$ such that ${g(x_1)=e^{-f(x_1 u_1 + x_{-1})}}$ is lower bounded
  and
  ${\sup_{x_1 \geq 0}\inner{\grad f(x_1 u_1 + x_{-1}),x_{-1}} = \infty}$,
  which happens for almost every $x_{-1}$.
  By convexity, we have
  \eq{
    f(x_1u+ 2 x_{-1}) \geq&   f(x_1u+  x_{-1}) + \inner{\grad   f(x_1u_1+  x_{-1}), x_{-1}}.
  }
  Taking supremum over $x_1$, we have ${\sup_{x_1 \geq 0}f(x_1u_1+ 2 x_{-1}) = \infty}$.
  But since ${\sup_{x_1 \geq 0}f(2x_1u_1) < M}$, we can write
  \eq{
    \frac{1}{2} f(4x_{-1}) +   \frac{1}{2} f(2x_1 u) \geq f(x_1 u +{2}x_{-1}).
  }
  Taking supremum with respect to $x_1$ results in a contradiction. 
  So the assumption was incorrect and no direction like $u_1$ exists.
\end{proof}

In the light of the previous lemma, convexity implies a growth that is at least linear.
This is established in the following proof.

\begin{proof-of-proposition}[\ref{prop:conv-growth}]
    Let the function $B$ from unit sphere to real numbers be defined as
    $$B(u) = \inf \{ t>0 \vert f(tu) \geq 1 + f(0) \},$$
    which is well defined because of Lemma~\ref{lemma:conv-unbounded}.
    Convexity (and therefore continuity) of $f$ implies $B$ is continuous.
    Since unit sphere is compact, $B$ attains its maximum on it. Let us call
    this maximum $t_0>0$.
    We have ${f(t_0 u) \geq 1+ f(0)}$ for all unit vectors $u\in\reals^d$.
    For any ${t>t_0}$ and any unit vector $u$, because of convexity, we write
    \eq{
      \left(1-\frac{t_0}{t}\right) f(0) + \frac{t_0}{t} f(t u) \geq f(t_0 u)\geq 1 + f(0).
    }
    Therefore, for $t > t_0$, we have
    \eq{
      f(tu) \geq \frac{t}{t_0} +f(0),
    }
    for all unit directions $u$.

    When $t \in [0, t_0]$, the function $t \to f(tu)$ is lower bounded by some constant,
    i.e. $$\inf_{t\in[0,t_0]}f(tu) \coloneqq g(u) > -\infty$$
    by Lemma~\ref{lem:one-dim-potential}. Since $f$ is continuous in both $t$ and $u$,
    $g(u)$ is also continuous. Further, since its domain is compact, by the extreme value theorem,
    $g$ attains its infimum in its domain; thus, it is also lower bounded, say by $-M <0$.
    Therefore, whenever $t \in [0, t_0]$, $f(tu) \geq -M$ for all unit directions $u$.
    Combining this with the previous result, we obtain that for $t \in [0,\infty)$,
    \eq{
      f(tu) \geq \frac{t}{t_0} - |f(0)| \vee (M+1).
    }
    This completes the proof.
\end{proof-of-proposition}

\section{Proofs of Corollaries}\label{sec:post-proof}
\begin{proof-of-corollary}[\ref{cor:main}]
    Initializing with a Gaussian random vector provides us with
  \eq{
    \M_{s}(\rho_0) &= \EE{\left(1+\norm{x}^2\right)^{s/2}}
    \leq
    2^{s/2} \EE{1+ \norm{x}^{s}}
    \leq
    2^{s/2} (1+ d^{s/2} (s-1)!!)
    \leq
    (2 d s)^{s/2}.
  }
  We state a lemma to bound the moments of the target distribution. The proof is in
  to section~\ref{sec:proof-moment}. 
  \begin{lemma}\label{lemma:target-moment-bound}
    Let $f$ satisfy Assumption~\ref{as:mild_dis} then we have the following bound on the moment
    \eqn{
      \M_s(\target) \leq \left(\frac{a+b+3}{a}\right)^{s/\alpha} s^{s/\alpha}d^{s/\alpha} \ \
      \text{ for all }\ \ s\geq2.
    }
  \end{lemma}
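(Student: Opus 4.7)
The plan is to mirror the proof of Lemma~\ref{lemma:moment_bound}, but to exploit the fact that $\target$ is stationary for the overdamped Langevin diffusion. That earlier proof already establishes, for every $s \geq 2$, the differential inequality
\[
\frac{d}{dt}\M_s(p_t) \leq (b+d+a+s-2)\,s\,\M_{s-2}(p_t) - \frac{a s}{2}\,\M_{s+\alpha-2}(p_t).
\]
My first step is to initialize the dynamics with $p_0 = \target$; invariance then gives $p_t \equiv \target$ and the left-hand side vanishes, so I can rearrange to obtain the stationary recursion
\[
\M_{s+\alpha-2}(\target) \leq \frac{2(a+b+d+s-2)}{a}\,\M_{s-2}(\target).
\]
Before invoking this, I would briefly record that every polynomial moment of $\target$ is finite: Lemma~\ref{lemma:f_bound} shows $f(x) \geq \tfrac{a}{2\alpha}\|x\|^\alpha + c$, so $\target = e^{-f}$ decays faster than any polynomial, which makes the underlying Fokker--Planck derivative computation and the subsequent rearrangement legitimate with no boundary contributions at infinity.

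Next, I would iterate the recursion starting from $s=2$ (where $\M_0(\target) = 1$) along the arithmetic progression $\alpha, 2\alpha, \dots, k\alpha$. Since each step multiplies by at most $\tfrac{2(a+b+d+(k-1)\alpha)}{a}$, this produces the closed form
\[
\M_{k\alpha}(\target) \leq \left(\frac{2(a+b+d+(k-1)\alpha)}{a}\right)^{k}.
\]
To cover a general real order $s \geq 2$, I would choose the unique integer $k$ with $k\alpha \leq s < (k+1)\alpha$ and apply the power-mean (Jensen) inequality with respect to the probability measure $\target$ to interpolate, which yields
\[
\M_s(\target) \leq \M_{(k+1)\alpha}(\target)^{s/((k+1)\alpha)} \leq \left(\frac{2(a+b+d+k\alpha)}{a}\right)^{s/\alpha}.
\]
A short arithmetic simplification using $k\alpha \leq s$, $s \geq 2$, and $d \geq 1$ then gives $2(a+b+d+k\alpha) \leq (a+b+3)\,s\,d$, and substituting produces the advertised bound $\bigl(\tfrac{a+b+3}{a}\bigr)^{s/\alpha} s^{s/\alpha} d^{s/\alpha}$.

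The only delicate point is the derivative computation at stationarity---namely the termwise differentiation and integration by parts leading to the starting inequality---but this is controlled by the super-polynomial decay of $\target$, so the argument reduces to a mechanical iteration of the stationary moment recursion followed by a standard interpolation step. No genuinely new ideas beyond Lemma~\ref{lemma:moment_bound} are needed; the hardest bookkeeping is simply tracking the constants carefully enough that the final bound collapses into the clean form claimed.
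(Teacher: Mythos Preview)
Your proposal is correct and follows essentially the same approach as the paper: reuse the differential inequality from Lemma~\ref{lemma:moment_bound}, specialize to the stationary case $p_t\equiv\target$ (justifying finiteness of moments via Lemma~\ref{lemma:f_bound}), rearrange to the recursion $\M_{s+\alpha-2}(\target)\le \tfrac{2(a+b+d+s-2)}{a}\M_{s-2}(\target)$, iterate along $k\alpha$, interpolate via Jensen, and simplify the constants. The bookkeeping you sketch for the final inequality $2(a+b+d+k\alpha)\le (a+b+3)sd$ is exactly the simplification the paper makes implicitly.
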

  Combining the Gaussian moment bound with the previous lemma yields
  \eqn{
    \M_{s}(\rho_0 + \target) 
    \leq 
    2 \left(\frac{3a+b+3}{a}\right)^{s/\alpha} s^{s/\alpha} d^{s/\alpha}.
  }
  Using ${s=2+2\ceil{\log(\frac{6d}{\eps})}}$ implies $d^\gamma$
  and ${(2/\eps)^\gamma}$ are bounded with $\exp{\big(\frac{(1+\beta)\decopow}{2\beta}\big)}$.
  By plugging this upper bound back in Theorem~\ref{thm:main}
  and using the inequalities,
  \begin{equation*}
  \gamma < \frac{(1+\beta)\decopow}{2\beta},
  \qquad
  \eps<2 \vee 2\Delta_0/e,
  \qquad
  \lambda \leq \frac{4e^{2\pert}}{1 \wedge \decof},
  \end{equation*}
  the advertised rate is obtained.
\end{proof-of-corollary}

\begin{proof-of-corollary}[\ref{cor:tv_conv}]
    First we state Pinsker's inequality, which bounds total variation with KL-divergence.
    \begin{lemma}[Pinsker's inequality]\label{lemma:pinsker}
      For distributions $p$ and $q$ 
    \begin{equation*}
      \tv{p}{q} \leq \sqrt{\frac{1}{2} \KL{p}{q}}.
    \end{equation*}
    \end{lemma}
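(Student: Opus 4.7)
The plan is to prove Pinsker's inequality by the standard two-step reduction to the Bernoulli case. Step one reduces the general inequality to a scalar one via the data-processing property of KL-divergence, and step two verifies the scalar inequality by elementary calculus.

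For the reduction, I would fix any measurable set $A \in \mathcal{B}(\R^d)$ and look at the pushforward of $p$ and $q$ under the indicator map $x \mapsto \mathbf{1}_A(x)$. This produces the Bernoulli distributions $\mathrm{Ber}(P(A))$ and $\mathrm{Ber}(Q(A))$ with $P(A) = \int_A p(x)\dx$ and $Q(A) = \int_A q(x)\dx$. The data-processing inequality for KL-divergence (which follows from the log-sum inequality applied to the partition $\{A, A^c\}$, and which I would prove inline in one short paragraph if the corollary is meant to be self-contained) then gives
\begin{equation*}
\KL{\mathrm{Ber}(P(A))}{\mathrm{Ber}(Q(A))} \;\le\; \KL{p}{q}.
\end{equation*}
Since $|P(A) - Q(A)|$ is exactly the quantity being supremized in the definition of $\tv{p}{q}$, it suffices to establish the inequality $(P(A) - Q(A))^2 \le \tfrac{1}{2}\KL{\mathrm{Ber}(P(A))}{\mathrm{Ber}(Q(A))}$ and then take supremum over $A$.

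For the scalar step, I would fix $u \in (0,1)$ and define, for $v \in (0,1)$,
\begin{equation*}
\phi(v) \;\defeq\; u\log\frac{u}{v} + (1-u)\log\frac{1-u}{1-v} - 2(u-v)^2.
\end{equation*}
Clearly $\phi(u) = 0$. Differentiating,
\begin{equation*}
\phi'(v) \;=\; -\frac{u}{v} + \frac{1-u}{1-v} - 4(v-u) \;=\; (v-u)\!\left(\frac{1}{v(1-v)} - 4\right).
\end{equation*}
Because $v(1-v) \le 1/4$ on $(0,1)$, the parenthesized factor is non-negative, so $\phi'(v)$ has the same sign as $v-u$. Hence $v = u$ is the global minimum of $\phi$ on $(0,1)$, which gives $\phi(v) \ge 0$, i.e. $2(u-v)^2 \le \KL{\mathrm{Ber}(u)}{\mathrm{Ber}(v)}$. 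Plugging $u = P(A)$, $v = Q(A)$ and combining with the data-processing bound yields $(P(A)-Q(A))^2 \le \tfrac{1}{2}\KL{p}{q}$; taking supremum over $A$ and a square root finishes the proof.

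There is no real obstacle here beyond being careful with the boundary cases $v \in \{0,1\}$ (handled by continuity / the convention $0 \log 0 = 0$ and the fact that $\KL{p}{q} = +\infty$ whenever $q$ is not absolutely continuous with respect to $p$ where needed), and invoking the data-processing inequality at the right level of generality. The whole argument is short and self-contained.
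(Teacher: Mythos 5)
Your proof is correct: the reduction to the two-point case via the log-sum/data-processing inequality on the partition $\{A, A^c\}$, followed by the calculus verification of $2(u-v)^2 \leq \KL{\Bern(u)}{\Bern(v)}$, is the classical argument, and the constants match the paper's normalization of total variation. Note that the paper itself states this lemma as a standard fact and gives no proof, so there is nothing to compare against beyond observing that your self-contained derivation is the textbook one and is sound (your handling of the boundary cases by continuity and the convention $0\log 0 = 0$ is adequate).
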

    If for given $\eps$ we use Corollary~\ref{cor:main} with accuracy ${2\eps^2}$,
    Pinsker's inequality implies $$\tv{\rho_N}{\target} \leq \eps.$$
    Note that the upper bound on $\eps$ is changed and ${2\eps^2}$ needs to be smaller than 
    upper bound in \eqref{eq:tol-up}. In other words $2\eps^2 \leq \psi$, where 
    $\psi$ is defined in \eqref{eq:tol-up}.
  \end{proof-of-corollary}
  \begin{proof-of-corollary}[\ref{cor:wasser_conv}]
    First we state a result, which is adapted from Corollary 3 in \cite{bolley2005weighted}, that bounds 
    $L_\alpha$-Wasserstein distance with KL-divergence.
    \begin{lemma}[\cite{bolley2005weighted}] \label{lemma:WCKP}
      For probability measure $p$ on $\reals^d$, if ${\int e^{\decopow \norm{x}^\alpha} p(x) dx < \infty}$, then
      \begin{equation*}
        \wasserpp{p}{q}{\alpha} \leq B \left[ \KL{p}{q}^{\frac{1}{\alpha}} + \left(\frac{\KL{p}{q}}{2}\right)^{\frac{1}{2\alpha}} \right],
      \end{equation*}
      where
      \begin{equation*}
        B \defeq 2 \inf_{\kappa} \left( \frac{1}{\kappa}\left(1.5 + \log{\int e^{\kappa \norm{x}^\alpha} p(x) dx} \right) \right)^\frac{1}{\alpha}.
      \end{equation*}
      \end{lemma}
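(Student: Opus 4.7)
\begin{proof-of-corollary}[\ref{cor:wasser_conv}]
The plan is to transport the KL-convergence guarantee from Corollary~\ref{cor:main} to the $L_\alpha$-Wasserstein distance via Lemma~\ref{lemma:WCKP}. The main preliminary step is to obtain a bound on the constant $B$ appearing in Lemma~\ref{lemma:WCKP} that is polylogarithmic in $d$ and independent of the iteration index $N$, and only then translate a KL-target-accuracy into a Wasserstein-target-accuracy.

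First I would bound $B$ uniformly in $N$. By the Donsker--Varadhan variational formula, for any $\kappa > 0$,
\begin{equation*}
\log \int e^{\kappa \norm{x}^\alpha} \rho_N(x)\dx \leq \KL{\rho_N}{\target} + \log \int e^{\kappa \norm{x}^\alpha} \target(x)\dx.
\end{equation*}
Choosing $\kappa = a/(4\alpha)$ and applying Lemma~\ref{lemma:exp_mom} controls the second term by $\td\tmu$, and Theorem~\ref{thm:main} together with the monotone-after-descent structure of the single-step bound~\eqref{eq:single_step_bound} (which ensures $\KL{\rho_k}{\target} \leq \Delta_0$ for all $k$) controls the first term by $\Delta_0$. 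Plugging this into the definition of $B$ in Lemma~\ref{lemma:WCKP} yields $B \lesssim (\td\tmu + \Delta_0)^{1/\alpha}$, which is $\tO(d^{1/\alpha})$ with explicit dependence on the fixed problem constants.

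Next, I would invert Lemma~\ref{lemma:WCKP} to read off the required KL accuracy. Since $1/(2\alpha) \leq 1/\alpha$ in our regime $\alpha \in [1,2]$, when the KL-divergence is small the second term dominates, so it suffices to ensure $\KL{\rho_N}{\target} \leq 2(\eps/(2B))^{2\alpha}$. Apply Corollary~\ref{cor:main} with target accuracy $\teps \defeq 2(\eps/(2B))^{2\alpha}$: this requires
\begin{equation*}
N = \tO\Big(d^{\frac{\alpha+\decopow+\beta\decopow}{\alpha\beta}} \teps^{-\frac{1}{\beta}}\Big)
= \tO\Big(d^{\frac{\alpha+\decopow+\beta\decopow}{\alpha\beta}} B^{\frac{2\alpha}{\beta}} \eps^{-\frac{2\alpha}{\beta}}\Big)
\end{equation*}
iterations. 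Substituting $B = \tO(d^{1/\alpha})$ produces an extra factor of $d^{2/\beta}$ in the dimension dependence, yielding the advertised exponent $\frac{\alpha+\decopow+\beta\decopow}{\alpha\beta} + \frac{2}{\beta} = \frac{3\alpha+\decopow+\beta\decopow}{\alpha\beta}$.

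Finally, I would record the accuracy threshold \eqref{eq:tol-up-w} by back-substituting: the condition $\teps \leq \psi$ in Theorem~\ref{thm:main} (with $\psi$ given in \eqref{eq:tol-up}) becomes an upper bound on $\eps$ after the reparametrization $\teps = 2(\eps/(2B))^{2\alpha}$, i.e., $\eps \leq 2B(\psi/2)^{1/(2\alpha)}$ with $B$ as bounded above. The main obstacle in this program is the uniform-in-$N$ control of $B$; once the variational-formula argument delivers the polylogarithmic bound in $d$, the remaining steps are bookkeeping and composition with Corollary~\ref{cor:main}.
\end{proof-of-corollary}
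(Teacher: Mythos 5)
You have not actually proved the statement in question. The statement is Lemma~\ref{lemma:WCKP} itself -- the weighted Csisz\'ar--Kullback--Pinsker inequality bounding $\wasserpp{p}{q}{\alpha}$ by $B\big[\KL{p}{q}^{1/\alpha}+(\KL{p}{q}/2)^{1/(2\alpha)}\big]$ with the explicit constant $B$ determined by an exponential moment -- whereas your proposal takes this lemma as given and uses it to derive Corollary~\ref{cor:wasser_conv}. Nothing in your argument establishes the transport--entropy inequality: the Donsker--Varadhan step controls an exponential moment of $\rho_N$, and the remaining steps are composition with Corollary~\ref{cor:main}, but at no point is the Wasserstein distance related to the KL-divergence. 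A genuine proof of the lemma (which the paper does not reproduce either; it imports it from Bolley and Villani, Corollary~3 of \cite{bolley2005weighted}) requires the weighted-Pinsker machinery: one bounds $\wasserpp{p}{q}{\alpha}$ by a weighted total variation distance with weight $\norm{x}^\alpha$, and then controls that weighted total variation by $\KL{p}{q}$ through a variational/Orlicz-duality argument in which the exponential moment $\int e^{\kappa\norm{x}^\alpha}p(x)dx$ enters; this is precisely where the constant $B$, the infimum over $\kappa$, and the two exponents $1/\alpha$ and $1/(2\alpha)$ come from. None of these ingredients appear in your proposal, so as a proof of the stated lemma it is missing the entire core argument.

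As a secondary remark, if one instead reads your text as a proof of Corollary~\ref{cor:wasser_conv}, it is close in spirit to the paper's but takes a detour: the paper bounds $B$ directly through the exponential moment of the target via Lemma~\ref{lemma:exp_mom}, giving $B < 2\big(4(\td\tmu+1.5)/a\big)^{1/\alpha}$, and then rescales the KL accuracy; you transfer the moment condition to $\rho_N$ with Donsker--Varadhan, which costs an additive $\Delta_0$ inside $B$ and an extra uniform-in-$N$ control of $\KL{\rho_k}{\target}$ that the paper never needs. Your final bookkeeping (the extra $d^{2/\beta}$ factor and the exponent $\frac{3\alpha+\decopow+\beta\decopow}{\alpha\beta}$, and the reparametrized accuracy threshold) matches the paper's, but this does not repair the main issue: the lemma you were asked to prove is assumed, not proved.
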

    Lemma~\ref{lemma:exp_mom} proves an upper bound on
    $B$, namely ${B<2(4(\td\tmu+1.5)/a)^{1/\alpha}}$. By plugging
    this upper bound back in the previous lemma we get
    \begin{equation*}
      \wasserpp{\rho_N}{\target}{\alpha} \leq  
      2 \left(
        \frac{4\alpha}{a}
        (1.5+\tmu \left(1+ (1-\alpha/2)\log(d)\right)d)
      \right)^\frac{1}{\alpha}
      (\KL{\rho_N}{\target}^\frac{1}{\alpha}
      + \KL{\rho_N}{\target}^\frac{1}{2\alpha}).
    \end{equation*}
    If ${\eps \leq 4\left(4\alpha a^{-1}(1.5+\td\tmu)\right)^{1/\alpha}}$,
    using Corollary~\ref{cor:main} with 
    accuracy ${(\eps/4)^{2\alpha} (4\alpha a^{-1}(1.5+\tmu\td))^{-2}}$,
    implies the convergence rate. In order to obtain the upper bound on the accuracy, first
    let $\psi$ denote the bound in \eqref{eq:tol-up}.
    Since we used ${(\eps/4)^{2\alpha} (4\alpha a^{-1}(1.5+\tmu\td))^{-2}}$ as the accuracy in 
    terms of KL-divergence we need
    $$(\eps/4)^{2\alpha} (4\alpha a^{-1}(1.5+\tmu\td))^{-2} \leq \psi,$$
    by rearranging we get
    $$
    \eps \leq 4 (4\alpha a^{-1}(1.5+\tmu\td))^{-\frac{1}{\alpha}}\psi^{\frac{1}{2\alpha}}.
    $$
    Collecting these upper bound together we get
    \begin{equation}\label{eq:tol-up-w}
      \eps \leq 4 (4\alpha a^{-1}(1.5+\tmu\td))^{-\frac{1}{\alpha}}\psi^{\frac{1}{2\alpha}}
      \wedge 4\left(4\alpha a^{-1}(1.5+\td\tmu)\right)^{\frac{1}{\alpha}},
    \end{equation}
    where $\psi$ is defined in \eqref{eq:tol-up} and $\td$ and $\tmu$ are defined in \eqref{eq:tdtmu}.
  \end{proof-of-corollary}
  \begin{proof-of-corollary}[\ref{cor:wasser_2}]
    When ${\decopow=0}$, Theorem~\ref{thm:MLSI} implies LSI with constant 
    ${\frac{4e^{2\pert}}{\decof}}$. LSI 
    implies Talagrand's inequality with the same constant\cite{otto2000generalization}.
    $$\wasser{\rho_N}{\target} \leq 4e^{\pert} \sqrt{\KL{\rho_N}{\target}/\decof}.$$
    Theorem~\ref{thm:main} with accuracy ${\frac{\eps^2\decof}{16e^{2\pert}}}$
    implies the convergence rate. Note that we do not need to choose any $s$
    since ${\gamma=0}$ and Theorem~\ref{thm:main} is independent of $s$.
    The upper bound on $\eps$ changes and ${\frac{\eps^2\decof}{16e^{2\pert}}}$
    needs to be smaller than $\psi$. In other words
    \begin{equation}\label{eq:w2-tol-up}
      \begin{split}
        \eps \leq \frac{4e^{\pert}}{\sqrt{\decof}} \psi,
      \end{split}
    \end{equation}
    where $\psi$ is defined in \eqref{eq:tol-up}.
    \end{proof-of-corollary}

\section{Conclusion}\label{sec:conclusion}
In this paper, we analyzed the convergence of unadjusted \lmc algorithm for a
class of potentials whose tails behave like $\|x\|^\alpha$ for $\alpha\in[1,2]$,
and have $\beta$-\Holder continuous gradients.
This covers a wide range of non-convex potentials that are weakly smooth, and can be written
as finite perturbations of a function which is convex degenerate at $\infty$.
To establish this,
we proved a moment dependent modified log-Sobolev inequality for any order moment of the \lmc.
Further establishing a diverging moment estimate on the LMC iterates
under $\alpha$-dissipativity, we obtained a differential inequality which
can be iterated to obtain our main convergence result after tuning the moment order.
To demonstrate the applicability of our results,
we showed that any convex potential have at least linear growth,
and further 
we verified our main assumptions on a variety of sampling problems.
The presented results show that the convergence rate of \lmc
can be described
as a function of the tail growth rate and the order of smoothness in high dimensions.

There are several important future directions that one can consider, among which we highlight a few here.
Verifying the tightness of convergence rates established for the \lmc algorithm~\eqref{eq:ULA} is important;
thus, one needs to derive lower bounds in this framework, similar to those in~\cite{ge2019estimating,chatterji2020oracle,cao2020complexity}.
Moreover, in practice, higher order variants of \lmc is commonly used for sampling.
Among these, algorithms that are based on the underdamped Langevin diffusion received a lot of interest.
Therefore, generalizing the results of this paper to higher order and/or general It\^o diffusions is important.

Our results explain the behavior of \lmc in the case where $\beta\in (0,1]$,
and they do not cover the case $\beta=0$. Indeed,
there is no known result on vanilla \lmc for this case;
thus, exploring the behavior of \lmc in this regime may be of interest.
We note that many of the bounds in the
paper can be improved, at the expense of introducing some additional complexity
into the results. 
Moreover, our results hold only for the last iterate of the \lmc algorithm;
therefore, investigating the behavior of the subsequent iterates
is also an interesting direction left for another study.

\section*{Acknowledgements}
This research is partially funded by NSERC Grant [2019-06167], Connaught New Researcher Award, and CIFAR AI Chairs program at the Vector Institute.

\bibliographystyle{amsalpha}
{\small \bibliography{./bib}}
\newpage
\appendix
\section{Useful Lemmas}
\begin{lemma}\label{lemma:as_not_needed}
  For the potential function $f$, assume that
  there exists a function $\tf$ satisfying $$\big\|\grad f - \grad \tf\big\|_\infty \leq \pert.$$
  If $\tf$ satisfies \eqref{eq:perturb-hessian} in Assumption~\ref{as:degen_conv} for ${\decopow<1}$.
  Then $\alpha$-dissipativity in Assumption~\ref{as:mild_dis} is satisfied for ${\alpha=2-\decopow}$
  with the following constants
  \begin{equation*}
    \begin{split}
      a = \frac{\decof}{2(\alpha-1)}\ \ \text{ and }\ \ 
      b = \left(2(\|\grad \tf(0)\| + \decof +\pert)^\alpha/\decof \right)^{1/(\alpha-1)}.
    \end{split}
  \end{equation*}
\end{lemma}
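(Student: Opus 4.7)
The plan is to first establish dissipativity for the auxiliary potential $\tf$ by integrating its Hessian lower bound, and then transfer the resulting bound to $f$ via the gradient perturbation $\|\grad f - \grad \tf\|_\infty \leq \pert$. Setting $\alpha = 2-\decopow$, I would start from the identity
\[
\langle \grad \tf(x), x\rangle = \langle \grad \tf(0), x\rangle + \int_0^1 x^\top \Hess \tf(tx)\, x\, dt,
\]
which follows from applying the fundamental theorem of calculus to $t \mapsto \grad \tf(tx)$, and then invoke the Hessian bound from Assumption~\ref{as:degen_conv} to lower bound the integral by $\decof \|x\|^2 \int_0^1 (1+t^2\|x\|^2/4)^{-\decopow/2}\, dt$.

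Next, I would evaluate this integral explicitly. Using the elementary inequality $(1+u^2)^{\decopow/2} \leq (1+u)^\decopow$ (valid for $u\geq 0$ since $\decopow\in[0,1)$), followed by the substitution $u = t\|x\|/2$, the integral simplifies so that, after noting $1-\decopow = \alpha-1$,
\[
\langle \grad \tf(x), x\rangle \geq \langle \grad \tf(0), x\rangle + \frac{2\decof\|x\|\bigl[(1+\|x\|/2)^{\alpha-1}-1\bigr]}{\alpha-1}.
\]
Transferring to $f$ through $\langle \grad f(x)-\grad \tf(x), x\rangle \geq -\pert\|x\|$ and handling the linear term via $\langle \grad \tf(0), x\rangle \geq -\|\grad\tf(0)\|\|x\|$ by Cauchy--Schwarz, I arrive at the universal inequality
\[
\langle \grad f(x), x\rangle \geq \frac{2\decof\|x\|\bigl[(1+\|x\|/2)^{\alpha-1}-1\bigr]}{\alpha-1} - (\|\grad\tf(0)\|+\pert)\|x\|.
\]

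To deduce the advertised form $\langle \grad f(x), x\rangle \geq a\|x\|^\alpha - b$, I would split into cases at the threshold $R = b$. For $\|x\| \geq R$, the elementary estimate $(1+\|x\|/2)^{\alpha-1}-1 \geq (\|x\|/2)^{\alpha-1}/2$ (tailored to the choice of $R$) converts the first term into at least $\frac{\decof\|x\|^\alpha}{2^{\alpha-1}(\alpha-1)} \geq 2a\|x\|^\alpha$ (using $2^{\alpha-1}\leq 2$), and a direct comparison absorbs the linear penalty so that $\langle \grad f(x), x\rangle \geq a\|x\|^\alpha$. For $\|x\| < R$, the crude Cauchy--Schwarz estimate $\langle \grad f(x), x\rangle \geq -(\|\grad\tf(0)\|+\pert)\|x\|$, combined with the small-argument contribution of the Hessian lower bound (whose local linear constant supplies the extra $\decof$ summand inside $C = \|\grad\tf(0)\|+\decof+\pert$), provides a passive lower bound that dominates $a\|x\|^\alpha - b$.

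The hard part will be to verify that the specific choice $b = (2C^\alpha/\decof)^{1/(\alpha-1)}$ is exactly calibrated so both branches of the dichotomy close simultaneously and uniformly as $\alpha \uparrow 2$. The algebraic identity $a\,b^{\alpha-1}(\alpha-1) = C^\alpha$ built into this $b$ is suggestive: it balances the tail growth $a\|x\|^\alpha$ against the linear penalty $C\|x\|$ precisely at $\|x\| = b$, which is exactly what a crossover argument would produce. A naive Young's inequality fails at $\alpha = 2$ because the coefficient gap $\frac{\decof}{2^{\alpha-1}(\alpha-1)}-a$ vanishes there, so the constant bookkeeping needs to avoid that route and instead exploit the threshold structure directly; tracking why the $\decof$ summand in $C$ (rather than just $\|\grad\tf(0)\|+\pert$) is the right quantity to combine with $\|\grad \tf(0)\|+\pert$ in the small-$\|x\|$ branch is the main subtlety.
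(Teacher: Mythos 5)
Your opening steps match the paper's proof: the fundamental-theorem identity
\[
\inner{\grad\tf(x),x} = \inner{\grad\tf(0),x} + \int_0^1 x^\top \Hess\tf(tx)\,x\,dt,
\]
followed by the Hessian lower bound (the paper uses $\Hess\tf(tx)\succeq\decof(1+\|tx\|)^{\alpha-2}\id$, which comes from $(1+\|tx\|)^2\geq 1+\|tx\|^2/4$; your variant with $1+t\|x\|/2$ is fine too), then carrying out the integral and adding the $-\pert\|x\|$ perturbation and the Cauchy--Schwarz bound on $\inner{\grad\tf(0),x}$. The divergence is in the endgame. The paper has no crossover radius and no case split: after the integral it reaches a bound of the form $\inner{\grad f(x),x}\geq \frac{\decof}{\alpha-1}\|x\|^{\alpha}-C\|x\|$ with $C=\|\grad\tf(0)\|+\decof+\pert$ (the $\decof$ in $C$ arises from the $-1$ inside the bracket $(1+\|x\|)^{\alpha-1}-1$, not from any small-$\|x\|$ bookkeeping), splits off half of the power term, and invokes Lemma~\ref{lemma:poly_max} to bound $\sup_{u\geq 0}\bigl(Cu-\tfrac{\decof}{2(\alpha-1)}u^{\alpha}\bigr)$ in closed form, which yields exactly the stated $a$ and $b$.

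This is precisely the ``Young-type'' route you talked yourself out of. Your claim that a naive Young's inequality fails at $\alpha=2$ is not correct: for any $A>0$ and $\alpha>1$ the supremum of $-Au^{\alpha}+Bu$ over $u\geq 0$ is finite and equals $B(B/(A\alpha))^{1/(\alpha-1)}$ (this is Lemma~\ref{lemma:poly_max}), and nothing degenerates at $\alpha=2$. Your crossover argument, meanwhile, does not close as stated: the estimate $\frac{\decof\|x\|^{\alpha}}{2^{\alpha-1}(\alpha-1)}\geq 2a\|x\|^{\alpha}$ would require $2^{\alpha-1}\leq 1$, i.e.\ $\alpha\leq 1$; the bound $2^{\alpha-1}\leq 2$ gives only $\geq a\|x\|^{\alpha}$, leaving nothing to absorb the linear penalty in the large-$\|x\|$ branch. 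In the small-$\|x\|$ branch, the Hessian contribution is nonnegative, so it cannot produce the $\decof$ summand you attribute to it; you would instead need to verify directly that $a\|x\|^{\alpha}+(\|\grad\tf(0)\|+\pert)\|x\|\leq b$ on $\|x\|<R$, which is not apparent with $R=b$. The one-shot supremum bound sidesteps all of this, and is the missing idea.
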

\begin{remark}
  The additional assumption about bounded perturbation of gradient is to prevent cases when the perturbation is bounded but its gradient is not, for example, ${\left(1-2\sin(x)\right)^{\tfrac{1}{3}}}$.
\end{remark}

\begin{proof}
Using the fundamental theorem of calculus we have
\begin{equation*}
\begin{split}
  \inner{\grad \tf (x), x} &= 
  \inner{\int_{0}^{1} \Hess \tf (tx)x dt + \grad\tf(0), x}\\ 
  & =
  \inner{\grad\tf(0), x} + \int_{0}^{1} x^\top \Hess \tf (tx) x dt\\
  & \geq
  - \norm{\grad\tf(0)} \norm{x} 
  + \int_{0}^{1} \decof \left(1 + \norm{tx} \right)^{\alpha - 2} \norm{x}^2 dt \\
  & \geq
  - \norm{\grad\tf(0)} \norm{x}
  + \frac{\decof\norm{x}}{\alpha-1}\left( (1+\norm{x})^{\alpha-1}-1 \right) \\
  & =
  - \left(\norm{\grad\tf(0)} + \decof \right)\norm{x}
  + \frac{\decof}{\alpha - 1} \norm{x}^\alpha.
\end{split}
\end{equation*}
Since ${\big\|\grad f - \grad \tf\big\|_\infty \leq \pert}$, we get
\begin{equation*}
  \begin{split}
    \inner{\grad f (x), x} &\geq 
    - \left(\norm{\grad\tf(0)} + \decof + \pert \right)\norm{x}
    + \frac{\decof}{\alpha - 1} \norm{x}^\alpha\\
    &\geq
    \frac{\decof}{2(\alpha - 1)} \norm{x}^\alpha
    - \left(
        - \frac{\decof}{2(\alpha - 1)} \norm{x}^\alpha
        + \left(\norm{\grad\tf(0)} + \decof + \pert \right)\norm{x}
      \right)\\
    &\stackrel{1}{\geq}
    \frac{\decof}{2(\alpha - 1)} \norm{x}^\alpha
    - \left(
        \frac{2 \left( \norm{\grad\tf(0)} + \decof + \pert \right)^\alpha}{\decof}
        \times
        \frac{\alpha-1}{\alpha}
      \right) ^{1/(\alpha-1)}\\
    & \geq
    \frac{\decof}{2(\alpha - 1)} \norm{x}^\alpha
    - \left(
        \frac{2 \left( \norm{\grad\tf(0)} + \decof + \pert \right)^\alpha}{\decof}
      \right) ^{1/(\alpha-1)},
  \end{split}
  \end{equation*}
  where step $1$ follows from Lemma~\ref{lemma:poly_max}.
\end{proof}
\begin{lemma}\label{lemma:init_point}
  Under Assumption~\ref{as:holder}, the KL-divergence between distribution ${\rho = \*{N}(x, \id)}$ for ${x \in \reals^d}$ and the target distribution ${\target \propto e^{-f}}$ is bounded as follows
  \begin{equation}\label{eq:init_bound}
      \KL{\rho}{\target} \leq f(x) + \frac{\smooth}{\beta+1} d^\frac{\beta+1}{2}
      + \frac{d}{2} \log{(2\pi e)}.
  \end{equation}
  \end{lemma}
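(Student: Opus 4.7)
The plan is to expand the KL-divergence against the explicit Gaussian density, control $\Esub{f(y)}{y\sim\rho}$ via a descent-type inequality derived from the $\beta$-\Holder bound on $\grad f$, and then close with elementary Gaussian moment estimates.

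First I would write
\begin{equation*}
    \KL{\rho}{\target} = \Esub{f(y)}{y\sim\rho} - H(\rho),
\end{equation*}
using $\log\target=-f$ (the normalization $\int e^{-f}=1$ is assumed at the outset of the paper) together with the fact that the differential entropy of $\*{N}(x,\id)$ equals $H(\rho)=\tfrac{d}{2}\log(2\pi e)$. This reduces the lemma to an estimate on $\Esub{f(y)}{y\sim\rho}$.

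Next, I would derive from Assumption~\ref{as:holder} the descent-type inequality
\begin{equation*}
    f(y) \leq f(x) + \inner{\grad f(x),\, y-x} + \frac{\smooth}{\beta+1}\|y-x\|^{\beta+1},
\end{equation*}
by writing $f(y)-f(x)-\inner{\grad f(x),y-x}=\int_0^1 \inner{\grad f(x+t(y-x))-\grad f(x),\,y-x}\,dt$, applying Cauchy--Schwarz, bounding $\|\grad f(x+t(y-x))-\grad f(x)\|\leq \smooth t^\beta\|y-x\|^\beta$ via \Holder continuity, and integrating $\int_0^1 t^\beta\,dt = 1/(\beta+1)$. Taking expectation over $y\sim\*{N}(x,\id)$, the linear term vanishes because $y-x$ is mean zero, and since $\beta+1\leq 2$, Jensen's inequality yields
\begin{equation*}
    \Esub{\|y-x\|^{\beta+1}}{\rho} \leq \Esub{\|y-x\|^2}{\rho}^{(\beta+1)/2} = d^{(\beta+1)/2}.
\end{equation*}

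Combining these steps produces
$\KL{\rho}{\target}\leq f(x)+\tfrac{\smooth}{\beta+1}\,d^{(\beta+1)/2}-\tfrac{d}{2}\log(2\pi e)$, and the stated bound follows upon relaxing $-\tfrac{d}{2}\log(2\pi e)$ by its trivial upper bound $+\tfrac{d}{2}\log(2\pi e)$. There is no substantive obstacle here: the only step requiring mild care is the derivation of the weakly smooth descent inequality, which is a textbook computation, and the remainder reduces to the standard Gaussian second-moment identity $\Esub{\|y-x\|^2}{\*{N}(x,\id)}=d$.
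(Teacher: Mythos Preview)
Your proposal is correct and matches the paper's proof essentially step for step: both express the KL-divergence as $\Esub{f}{\rho}-\ent{\rho}$, control $\Esub{f(y)-f(x)}{\rho}$ via the integral representation of $f(y)-f(x)$ combined with the $\beta$-\Holder gradient bound and Jensen on the $(\beta+1)$-moment, and plug in the Gaussian entropy formula. Your remark that the entropy contributes with a minus sign (so the stated bound with $+\tfrac{d}{2}\log(2\pi e)$ is looser than what the argument actually yields) is likewise implicit in the paper's proof, which simply records the weaker inequality.
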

  \begin{remark}
  The RHS depends on ${f(x)}$, so if it is possible to find a minimizer (or an almost minimizer) of $f$, it is preferred to generate initial point from a Gaussian distribution centered around the minimizer.
  \end{remark}
  
  \begin{proof}
  First we bound ${\Esub{f(y)-f(x)}{y \sim \rho}}$ as follows.
  \begin{equation*}
  \begin{split}
      \Esub{f(y)-f(x)}{y \sim \rho}
      &= \Esub{\int_{0}^{1} \dotprod{\grad f(ty + (1-t)x)}{y-x} dt}{y \sim \rho}\\
      &= \Esub{\int_{0}^{1} \dotprod{\grad f(ty + (1-t)x) - \grad f(x)}{y-x} dt}{y \sim \rho}
      + \Esub{\int_{0}^{1} \dotprod{\grad f(x)}{y-x} dt}{y \sim p}\\
      &= \int_{0}^{1} \Esub{\dotprod{\grad f(ty + (1-t)x) - \grad f(x)}{y-x}}{y \sim \rho} dt
      + \int_{0}^{1} \dotprod{\grad f(x)}{ \Esub{y-x}{y \sim \rho}} dt\\
      &\leq \int_{0}^{1} \Esub{t^{\beta} \smooth \norm{y-x}^{\beta+1}}{y \sim \rho} dt\\
      &\leq \frac{\smooth}{\beta+1} \Esub{\norm{y-x}^{\beta+1}}{y \sim \rho} \leq \frac{\smooth}{\beta+1} \Esub{\norm{y-x}^2}{y \sim \rho}^{\tfrac{\beta+1}{2}}
      \leq \frac{\smooth}{\beta+1} d^\frac{\beta+1}{2}.
  \end{split}
  \end{equation*}
  Using the previous formula, we bound the KL-divergence
  \begin{equation*}
      \KL{\rho}{\target} = \int \rho(y) \log{(\rho(y))} dy + \int \rho(y) f(y) dy
      = -\ent{\rho} + \Esub{f(y)-f(x)}{y \sim \rho} + f(x).
  \end{equation*}
  Using the previous bound and the formula for the Gaussian entropy concludes the proof.
  \end{proof}
\begin{lemma}\label{lemma:power_rec_max}
For ${a,b>0}$, the function ${x \to a/x+bx^\theta}$ is minimized at ${x_* = (a/(\theta b))^{\tfrac{1}{1+\theta}}}$ and
the minimum value and an upper bound is given as
\eq{
    \tfrac{1+\theta}{\theta^{{\theta}/{(1+\theta)}}}a^{\frac{\theta}{1+\theta}} b^{\frac{1}{1+\theta}} \leq 2a^{\frac{\theta}{1+\theta}} b^{\frac{1}{1+\theta}}.
}
\end{lemma}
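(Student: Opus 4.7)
The plan is to proceed by a straightforward univariate calculus argument: differentiate $g(x) = a/x + b x^\theta$, locate the critical point, verify it is a minimum, substitute back to obtain the closed-form minimum value, and then establish the uniform bound $\tfrac{1+\theta}{\theta^{\theta/(1+\theta)}} \leq 2$ by a one-variable optimization in $\theta$.

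First I would differentiate to get $g'(x) = -a/x^2 + \theta b x^{\theta-1}$, which vanishes precisely when $x^{1+\theta} = a/(\theta b)$, giving the stated minimizer $x_\ast = (a/(\theta b))^{1/(1+\theta)}$; since $g''(x_\ast) > 0$ (or simply because $g$ is convex on $(0,\infty)$ for $a,b,\theta > 0$, as sum of two convex functions when $\theta \geq 1$, and by direct second-derivative check in general), this is indeed the global minimum on $(0,\infty)$. Substituting $x_\ast$ back into $g$, the two terms become
\[
a \cdot x_\ast^{-1} = a^{\theta/(1+\theta)} (\theta b)^{1/(1+\theta)}, \qquad
b \cdot x_\ast^\theta = a^{\theta/(1+\theta)} (\theta b)^{-\theta/(1+\theta)} b,
\]
and factoring out $a^{\theta/(1+\theta)} b^{1/(1+\theta)} \theta^{-\theta/(1+\theta)}$ collapses the sum to $(\theta+1) a^{\theta/(1+\theta)} b^{1/(1+\theta)} \theta^{-\theta/(1+\theta)}$, which is exactly the claimed minimum value.

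For the inequality $\tfrac{1+\theta}{\theta^{\theta/(1+\theta)}} \leq 2$, I would take logarithms and set $h(\theta) = \log(1+\theta) - \tfrac{\theta}{1+\theta}\log\theta$. A direct computation gives
\[
h'(\theta) = \frac{1}{1+\theta} - \frac{\log\theta}{(1+\theta)^2} - \frac{1}{1+\theta} = -\frac{\log\theta}{(1+\theta)^2},
\]
so $h$ is increasing on $(0,1)$, decreasing on $(1,\infty)$, with a global maximum at $\theta = 1$ where $h(1) = \log 2$. Exponentiating yields the bound, with equality exactly at $\theta = 1$.

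There is no real obstacle here; the only mild point to be careful about is checking the limits $\theta \downarrow 0$ and $\theta \to \infty$ to confirm that $h$ stays below $\log 2$ everywhere (both limits give $h \to 0$), and ensuring the substitution into $g(x_\ast)$ is done with matching exponents so that the $\theta^{1/(1+\theta)} + \theta^{-\theta/(1+\theta)} = (1+\theta)\theta^{-\theta/(1+\theta)}$ simplification is transparent.
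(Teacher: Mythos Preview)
Your proposal is correct and follows the same approach as the paper: differentiate, set equal to zero, read off $x_*$. The paper's own proof is in fact a single sentence (``Taking derivative and setting it equal to zero yields the value for $x_*$'') and does not verify the closed-form minimum or the inequality $\tfrac{1+\theta}{\theta^{\theta/(1+\theta)}}\le 2$; your argument via $h(\theta)=\log(1+\theta)-\tfrac{\theta}{1+\theta}\log\theta$ with $h'(\theta)=-\tfrac{\log\theta}{(1+\theta)^2}$ fills that gap completely.
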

\begin{proof}
Taking derivative and setting it equal to zero yields the value for $x_*$.
\end{proof}
\begin{lemma}\label{lemma:power_triang}
If ${0 \leq \gamma \leq 2}$, then following inequality holds 
$${\norm{u+v}^\gamma \leq 2(\norm{u}^\gamma + \norm{v}^\gamma)}.$$
Further, when ${\gamma \leq 1}$ the factor  $2$ on the right hand side can be omitted.
\end{lemma}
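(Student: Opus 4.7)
The plan is to reduce the vector inequality to a scalar inequality via the triangle inequality, and then split into the two ranges $\gamma \in [0,1]$ and $\gamma \in (1,2]$, handling each by monotonicity/subadditivity and convexity of $x \mapsto x^\gamma$ respectively.

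First, I would note that since $\gamma \geq 0$, the map $t \mapsto t^\gamma$ is nondecreasing on $[0,\infty)$, so the ordinary triangle inequality gives
\begin{equation*}
\|u+v\|^\gamma \leq (\|u\| + \|v\|)^\gamma.
\end{equation*}
It then suffices to prove the scalar inequality $(a+b)^\gamma \leq c_\gamma (a^\gamma + b^\gamma)$ for all $a,b \geq 0$, where $c_\gamma = 1$ when $\gamma \in [0,1]$ and $c_\gamma = 2$ when $\gamma \in (1,2]$.

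For the range $\gamma \in [0,1]$, I would use the subadditivity of $t \mapsto t^\gamma$. Setting $t = a/(a+b) \in [0,1]$ (the case $a+b=0$ being trivial), since $t, 1-t \in [0,1]$ and $\gamma \leq 1$ we have $t^\gamma \geq t$ and $(1-t)^\gamma \geq 1-t$, so $t^\gamma + (1-t)^\gamma \geq 1$, and multiplying through by $(a+b)^\gamma$ yields $a^\gamma + b^\gamma \geq (a+b)^\gamma$. This handles the refined (factor $1$) bound.

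For the range $\gamma \in (1,2]$, I would use convexity of $t \mapsto t^\gamma$ together with Jensen's inequality applied at the midpoint:
\begin{equation*}
\left(\frac{a+b}{2}\right)^\gamma \leq \frac{a^\gamma + b^\gamma}{2},
\end{equation*}
which rearranges to $(a+b)^\gamma \leq 2^{\gamma - 1}(a^\gamma + b^\gamma)$. Since $\gamma \leq 2$ gives $2^{\gamma-1} \leq 2$, the bound $(a+b)^\gamma \leq 2(a^\gamma + b^\gamma)$ follows. Combined with the first step, this yields the stated inequality. There is no real obstacle here; the only subtle point is simply recognizing that both cases are subsumed into the single statement by choosing the constant $2$ uniformly, and that the factor can be dropped exactly on the subadditive range $\gamma \in [0,1]$.
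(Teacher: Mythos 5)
Your proof is correct. The reduction to scalars via the triangle inequality and monotonicity of $t\mapsto t^\gamma$ is the same first step the paper implicitly takes, but where the paper then simply asserts nonnegativity of the one-variable functions $h_1(x)=(x^\gamma+1)-(1+x)^\gamma$ and $h_2(x)=2(x^\gamma+1)-(1+x)^\gamma$ on $[0,\infty)$, you make the two cases explicit and elementary: for $\gamma\in[0,1]$ you normalize by $a+b$ and use $t^\gamma\ge t$ on $[0,1]$ to get genuine subadditivity, and for $\gamma\in(1,2]$ you apply midpoint convexity (Jensen) to get the sharper constant $2^{\gamma-1}\le 2$. Your version is therefore a bit more self-contained, and in fact isolates the sharp scalar constant $2^{\gamma-1}$ on the convex range, which the paper's terse assertion does not.
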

\begin{proof}
 The inequality follows from the fact that functions 
 ${h_1(x) = (x^\gamma + 1) - (1+x)^\gamma}$
 and
 ${h_2(x) = 2(x^\gamma + 1) - (1+x)^\gamma}$
 are non-negative when $\gamma \in [0,1]$ and $\gamma \in [0,2]$, respectively.
\end{proof}

\begin{lemma}\label{lemma:poly_max}
Suppose $A,B,\alpha,\beta>0$ and ${\alpha > \beta}$ and
${f(x) = -Ax^\alpha + B x^\beta}$. The following
upper bound on $f$ holds when ${x > 0}$
$${\sup_{x \geq 0} f(x) \leq B \left( \frac{B \beta}{A \alpha} \right)^\frac{\beta}{\alpha -\beta}}.$$
\end{lemma}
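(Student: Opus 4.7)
The plan is to reduce the claim to a single-variable calculus exercise: locate the unique interior critical point of $f$ on $(0,\infty)$, verify it is a global maximum, evaluate $f$ there, and then weaken the resulting identity to the stated bound by discarding a factor smaller than one.

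First I would differentiate to get $f'(x) = -A\alpha\, x^{\alpha-1} + B\beta\, x^{\beta-1}$. Setting $f'(x)=0$ for $x>0$ and dividing by $x^{\beta-1}$ yields $x^{\alpha-\beta} = B\beta/(A\alpha)$, so the unique positive critical point is
\begin{equation*}
x_* = \left(\frac{B\beta}{A\alpha}\right)^{1/(\alpha-\beta)}.
\end{equation*}
Since $\alpha > \beta > 0$ and $A,B>0$, the sign of $f'(x)$ is determined by $B\beta - A\alpha\, x^{\alpha-\beta}$, which is positive for $x < x_*$ and negative for $x > x_*$. Combined with $f(0)=0$ and $f(x)\to -\infty$ as $x\to\infty$, this shows $x_*$ is the global maximizer on $[0,\infty)$.

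Then I would evaluate $f$ at $x_*$ by factoring $x_*^\beta$ out:
\begin{equation*}
f(x_*) = x_*^\beta\bigl(B - A\, x_*^{\alpha-\beta}\bigr) = x_*^\beta\left(B - A\cdot\frac{B\beta}{A\alpha}\right) = B\cdot\frac{\alpha-\beta}{\alpha}\cdot\left(\frac{B\beta}{A\alpha}\right)^{\beta/(\alpha-\beta)}.
\end{equation*}
Finally, since $0 < (\alpha-\beta)/\alpha < 1$, dropping this prefactor yields the stated bound
\begin{equation*}
\sup_{x\ge 0} f(x) = f(x_*) \le B\left(\frac{B\beta}{A\alpha}\right)^{\beta/(\alpha-\beta)}.
\end{equation*}
There is no genuine obstacle here; the only thing to be careful about is handling the exponent $\beta-1$ when $\beta<1$ (so that $x^{\beta-1}\to\infty$ as $x\downarrow 0$), but this is harmless because the critical-point equation is obtained after multiplying through by $x^{1-\beta}$ on $(0,\infty)$, and the boundary behavior $f(0)=0 \le f(x_*)$ is immediate.
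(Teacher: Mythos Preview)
Your proposal is correct and follows essentially the same approach as the paper: locate the critical point by setting the derivative to zero, then bound $f(x_*)$. The only cosmetic difference is that the paper obtains the final inequality by dropping the negative term $-Ax_*^{\alpha}$ (so $f(x_*)\le Bx_*^{\beta}$) rather than computing $f(x_*)$ exactly and discarding the prefactor $(\alpha-\beta)/\alpha<1$; these two ways of weakening are equivalent.
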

\begin{proof}
Setting the derivative equal to zero implies ${x^{\alpha - \beta} = \frac{\beta B}{\alpha A}}$.
Plugging this into $f(x)$ we get 
${f(x)\leq B x^\beta
= B \left( \frac{B \beta}{A \alpha} \right)^\frac{\beta}{\alpha -\beta}}$.
Since ${\alpha > \beta}$ this function has a maximizer not a minimizer.
\end{proof}

\begin{lemma}[Stein's lemma~\cite{stein1981estimation}] \label{lemma:stein}
Suppose ${x \sim \*{N}(\decof, \sigma^2 \id)}$ and ${f:\reals^d \rightarrow \reals}$ is weakly differentiable. For ${a \in \reals^d}$
\begin{equation*}
    \EE{\dotprod{x-\decof}{af(x)}} = \sigma^2 \EE{\Tr(\grad \left[af(x)\right])}
    = \sigma^2 \EE{\dotprod{a}{\grad f(x)}}
\end{equation*}
\end{lemma}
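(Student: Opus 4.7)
The plan is to reduce the multivariate Stein identity to $d$ scalar Stein identities indexed by coordinate, and then to establish each by integration by parts against the Gaussian density. The two equalities asserted by the lemma are of quite different character: the middle-to-right equality is a routine simplification of the trace of a Jacobian, while the left equality carries the substantive probabilistic content. For the former, the map $x \mapsto a f(x)$ is a vector field on $\reals^d$ whose Jacobian has $(i,j)$ entry $a_i \partial_j f(x)$; tracing this gives $\sum_i a_i \partial_i f(x) = \dotprod{a}{\grad f(x)}$, and this pointwise identity passes inside the expectation without any further input.

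For the left equality, I would expand $\dotprod{x - \decof}{a f(x)} = \sum_{i=1}^d a_i (x_i - \decof_i) f(x)$ and reduce by linearity of expectation to proving the coordinatewise identity $\EE{(x_i - \decof_i) f(x)} = \sigma^2 \EE{\partial_i f(x)}$. The driving observation is that the Gaussian density $\phi(x) = (2\pi\sigma^2)^{-d/2}\exp(-\|x - \decof\|^2/(2\sigma^2))$ satisfies $\partial_i \phi(x) = -\sigma^{-2}(x_i - \decof_i)\phi(x)$, equivalently $(x_i - \decof_i)\phi(x) = -\sigma^2 \partial_i \phi(x)$. Substituting this into $\int (x_i - \decof_i) f(x) \phi(x) \, dx$ and integrating by parts in the $x_i$-direction while holding the other coordinates fixed transfers the derivative from $\phi$ onto $f$ and yields $\sigma^2 \int \partial_i f(x) \phi(x) \, dx = \sigma^2 \EE{\partial_i f(x)}$.

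The main technical subtlety is justifying this integration by parts when $f$ is only assumed weakly differentiable rather than classically $C^1$, and verifying that the boundary contributions at infinity indeed vanish. The standard remedy is a mollification argument: approximate $f$ by $f_\varepsilon = f \ast \psi_\varepsilon$ for a smooth compactly supported bump $\psi_\varepsilon$, apply classical integration by parts to the smooth function $f_\varepsilon$ (where the super-exponential decay of $\phi$ eliminates any polynomial growth contribution at infinity), and then pass to the limit $\varepsilon \downarrow 0$ via dominated convergence under mild integrability of $f$ and its weak gradient against $\phi$. These integrability conditions hold trivially for the way Lemma~\ref{lemma:stein} is used in Proposition~\ref{prop:disc_mom_bound}, where $f$ is the Gaussian-tailed integrand $g_{s-2}(\xtild_{k,t})$ times an affine function of $x_k$, so its gradient grows at most polynomially and no delicate care is needed.
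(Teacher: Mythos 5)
Your proof is correct and is the standard integration-by-parts argument for Stein's identity. The paper does not supply its own proof of this lemma---it is stated with a citation to Stein (1981) and invoked as a known result---so there is no internal argument to compare against; your coordinatewise reduction, the observation that $\partial_i \phi = -\sigma^{-2}(x_i - \decof_i)\phi$, the integration by parts, and the mollification remark to handle weak differentiability together constitute exactly the classical derivation that the cited reference provides.
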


\begin{lemma}\label{lemma:rec_bound}
If ${x_{k} \leq (1-a) x_{k-1} + b}$ for ${0 < a < 1}$ and ${0 \leq b}$, then
\begin{equation}\label{eq:rec_bound}
    x_k \leq e^{-a k} x_0 + \frac{b}{a}.
\end{equation}
\end{lemma}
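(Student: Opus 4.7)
The plan is to unroll the recursion and apply two standard inequalities: the geometric sum formula and $1-a \le e^{-a}$.

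First, I would proceed by induction on $k$, or equivalently by iterating the given one-step bound. Starting from $x_k \le (1-a) x_{k-1} + b$ and substituting recursively, one obtains
\begin{equation*}
x_k \le (1-a)^k x_0 + b \sum_{j=0}^{k-1} (1-a)^j.
\end{equation*}
This step only uses the hypothesis $0 < a < 1$ to ensure the coefficients $(1-a)^j$ are nonnegative, so that iterating the inequality preserves the inequality direction (here $b \ge 0$ also matters to keep the additive error terms from flipping sign).

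Next, I would evaluate the geometric sum explicitly:
\begin{equation*}
\sum_{j=0}^{k-1} (1-a)^j \;=\; \frac{1 - (1-a)^k}{a} \;\le\; \frac{1}{a},
\end{equation*}
where the upper bound uses $(1-a)^k \ge 0$.

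Finally, I would apply the elementary inequality $1 - a \le e^{-a}$, valid for all $a \in \mathbb{R}$ by convexity of the exponential, to deduce $(1-a)^k \le e^{-ak}$. Combining these pieces yields
\begin{equation*}
x_k \;\le\; e^{-ak} x_0 + \frac{b}{a},
\end{equation*}
which is exactly \eqref{eq:rec_bound}. There is no real obstacle here; the lemma is a standard discrete Gr\"onwall-type estimate, and the only care needed is to track the sign conditions ($b \ge 0$ and $1-a > 0$) that justify iterating the inequality.
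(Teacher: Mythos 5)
Your proof is correct and matches the paper's argument essentially step for step: unroll the recursion, bound the resulting geometric series by $1/a$, and replace $(1-a)^k$ with $e^{-ak}$ via $1-a\le e^{-a}$. The only difference is that you spell out the geometric sum evaluation and the sign conditions more explicitly, which the paper leaves implicit.
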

\begin{proof}
Recursion on ${x_{k} \leq (1-a) x_{k-1} + b}$ yields
\begin{equation*}
    x_k \leq (1-a)^k x_0 + b(1+(1-a)+(1-a)^2+\dots+(1-a)^{k-1})
    \leq
    (1-a)^k x_0 + \frac{b}{a}.
\end{equation*}
Using the fact that ${1-a \leq e^{-a}}$, \eqref{eq:rec_bound}
is achieved.
\end{proof}

\subsection{Some Properties of \Holder Continuity}
\begin{lemma}\label{lemma:extend_holder}
Let $f$ be $\alpha$-\Holder continuous with constant $h_f^\alpha$ and
$\beta$-\Holder continuous with constant $h_f^\beta$ and 
${0<\beta<\alpha \leq 1}$, then $f$ is $\gamma$-\Holder with constant 
${h_f^\alpha \vee h_f^\beta}$ when ${\beta < \gamma < \alpha}$.
\end{lemma}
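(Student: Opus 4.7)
The plan is to split the argument into two cases based on whether the distance $\|x-y\|$ is at most one or greater than one, and apply the appropriate Hölder bound in each regime. The small-distance regime is handled by the stronger exponent $\alpha$ (since larger exponents shrink faster on the unit ball), while the large-distance regime is handled by the smaller exponent $\beta$ (since smaller exponents grow slower outside the unit ball).

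Concretely, suppose $\|x-y\| \leq 1$. Since $\gamma < \alpha$, we have $\|x-y\|^\alpha \leq \|x-y\|^\gamma$, so
\begin{equation*}
|f(x) - f(y)| \leq h_f^\alpha \|x-y\|^\alpha \leq h_f^\alpha \|x-y\|^\gamma \leq (h_f^\alpha \vee h_f^\beta) \|x-y\|^\gamma.
\end{equation*}
On the other hand, if $\|x-y\| > 1$, then $\gamma > \beta$ implies $\|x-y\|^\beta \leq \|x-y\|^\gamma$, so
\begin{equation*}
|f(x) - f(y)| \leq h_f^\beta \|x-y\|^\beta \leq h_f^\beta \|x-y\|^\gamma \leq (h_f^\alpha \vee h_f^\beta) \|x-y\|^\gamma.
\end{equation*}
Combining the two cases yields the claim. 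There is no real obstacle here — the only thing to be careful about is choosing the right exponent in the right regime and noting that the threshold $\|x-y\| = 1$ is the natural crossover point where $\|x-y\|^\alpha = \|x-y\|^\beta$. The constant $h_f^\alpha \vee h_f^\beta$ arises simply because we must absorb whichever of the two original constants gets used in the relevant case.
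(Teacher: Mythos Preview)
Your proof is correct and follows essentially the same approach as the paper: split into the cases $\|x-y\|\le 1$ and $\|x-y\|>1$, use the $\alpha$-H\"older bound in the first and the $\beta$-H\"older bound in the second, and take the maximum of the two constants.
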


\begin{proof}
We consider two cases based on $\norm{x-y}$. First, when ${\norm{x-y} \leq 1}$,
\begin{equation*}
    \norm{f(x)-f(y)}
    \leq 
    h_f^\alpha \norm{x-y}^\alpha
    \leq
    h_f^\alpha \norm{x-y}^\gamma \norm{x-y}^{\alpha-\gamma}
    \leq
    h_f^\alpha \norm{x-y}^\gamma.
\end{equation*}
For the second case, when $\norm{x-y} > 1$,
\begin{equation*}
    \norm{f(x)-f(y)}
    \leq 
    h_f^\beta \norm{x-y}^\beta
    \leq
    h_f^\beta \norm{x-y}^\gamma \norm{x-y}^{\beta-\gamma}
    \leq
    h_f^\beta \norm{x-y}^\gamma.
\end{equation*}
Taking the maximum of constants in two cases completes the proof.
\end{proof}

\begin{lemma}\label{lemma:bounded_diff_holder}
Let $f$ be $\alpha$-\Holder continuous with constant $h_f^\alpha$
and $g$ be $\beta$-\Holder continuous with constant $h_g^\beta$ and
${\beta < \alpha \leq 1}$. If the difference of $f$ and $g$ is bounded
i.e. ${\norm{f-g}_\infty < B}$ then $f$ is $\beta$-\Holder
with constant ${h_f^\alpha \vee (2B + h_g^\beta)}$. In a specific case,
every bounded and Lipschitz function is $\tau$-\Holder for ${\tau \in (0,1)}$.
\end{lemma}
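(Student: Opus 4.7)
\begin{proof-of-lemma}[\ref{lemma:bounded_diff_holder}]
The plan is to split into the same two regimes used in Lemma~\ref{lemma:extend_holder}, namely $\|x-y\|\leq 1$ and $\|x-y\|>1$, and to exploit the bounded-difference hypothesis in the large-distance regime where the $\alpha$-\Holder bound on $f$ is the wrong tool.

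First I would handle the short-distance case. When $\|x-y\|\leq 1$, the inequality $\|x-y\|^\alpha\leq \|x-y\|^\beta$ holds because $\alpha>\beta$, and the $\alpha$-\Holder continuity of $f$ immediately gives
\eqn{
\|f(x)-f(y)\| \leq h_f^\alpha \|x-y\|^\alpha \leq h_f^\alpha \|x-y\|^\beta.
}
Next, for the long-distance case $\|x-y\|>1$, I would couple $f$ to $g$ by inserting and subtracting $g$ and using the triangle inequality:
\eqn{
\|f(x)-f(y)\| \leq \|f(x)-g(x)\| + \|g(x)-g(y)\| + \|g(y)-f(y)\| \leq 2B + h_g^\beta \|x-y\|^\beta.
}
Since $\|x-y\|>1$ implies $1\leq \|x-y\|^\beta$, the constant term is absorbed to yield $(2B+h_g^\beta)\|x-y\|^\beta$. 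Taking the maximum of the two constants produced in the two cases gives the advertised constant $h_f^\alpha \vee (2B+h_g^\beta)$.

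For the specialization, I would take $g\equiv 0$, which is trivially $\tau$-\Holder with constant $0$ for any $\tau\in(0,1)$, and observe that any bounded $f$ satisfies $\|f-g\|_\infty=\|f\|_\infty<\infty$; Lipschitz continuity of $f$ supplies $\alpha=1$, and the main statement then yields $\tau$-\Holder continuity of $f$. There is no genuine obstacle here: the only subtlety is ensuring the bound $1\leq \|x-y\|^\beta$ is used correctly in the long-distance case so that the additive $2B$ term is converted into a multiplicative contribution to the \Holder constant.
\end{proof-of-lemma}
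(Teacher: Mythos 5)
Your proof is correct and takes essentially the same approach as the paper: split on $\|x-y\|\leq 1$ versus $\|x-y\|>1$, use $f$'s own $\alpha$-\Holder bound in the short-range case, and couple to $g$ via the triangle inequality in the long-range case where $1\leq\|x-y\|^\beta$ absorbs the $2B$ term. Your explicit $g\equiv 0$ instantiation for the "bounded and Lipschitz" specialization is a clean way to recover the final claim, which the paper leaves implicit.
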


\begin{proof}
We consider two cases based on $\norm{x-y}$. First, when ${\norm{x-y} \leq 1}$,
\begin{equation*}
    \norm{f(x)-f(y)} \leq
    h_f^\alpha \norm{x-y}^\alpha
    \leq
    h_f^\alpha \norm{x-y}^{\beta} \norm{x-y}^{\alpha-\beta}
    \leq
    h_f^\alpha \norm{x-y}^{\beta}.
\end{equation*}
For the second case, when ${\norm{x-y}>1}$,
\begin{equation*}
    \norm{f(x)-f(y)} \leq
    \norm{f(x)-g(x)} + \norm{g(x) - g(y)} + \norm{f(y) - g(y)}
    \leq
    B + h_g^\alpha\norm{x-y}^\beta + B
    \leq 
    (2B + h_g^\alpha)\norm{x-y}^\beta.
\end{equation*}
Taking the maximum of constants in the two cases completes the proof.
\end{proof}

\begin{lemma}\label{lemma:holder_potential}
The function ${\norm{x}^{\alpha-2}x}$ is $\alpha-1$-\Holder for ${1<\alpha<2}$.
\end{lemma}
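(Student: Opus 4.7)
The plan is to establish global $(\alpha-1)$-\Holder continuity of $g(x) = \norm{x}^{\alpha-2}x$ by combining a mean-value bound for nearby points with a direct triangle inequality for far apart points. First I would compute, for $x\neq 0$, the Jacobian
\[
  \grad g(x) = \norm{x}^{\alpha-2} \id + (\alpha-2)\norm{x}^{\alpha-4} xx^\top,
\]
whose operator norm is bounded by $(3-\alpha)\norm{x}^{\alpha-2}$. Since $\alpha \in (1,2)$ makes the exponent $\alpha-2$ negative, this gradient blows up as $x\to 0$, which is precisely why $g$ fails to be Lipschitz and only weaker \Holder regularity can be expected.

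Without loss of generality assume $\norm{x} \geq \norm{y}$. I would then split into two regimes according to whether $\norm{x-y} \leq \norm{x}/2$ (the nearby case) or $\norm{x-y} > \norm{x}/2$ (the far case). In the nearby case, every point $z_t = x + t(y-x)$ on the segment satisfies $\norm{z_t} \geq \norm{x} - \norm{x-y} \geq \norm{x}/2 > 0$, so the gradient bound applies uniformly along the entire path and the fundamental theorem of calculus yields
\[
  \norm{g(x) - g(y)} \leq (3-\alpha)(\norm{x}/2)^{\alpha-2} \norm{x-y}.
\]
Since $\norm{x-y} \leq \norm{x}/2$ and $2-\alpha > 0$, we factor $\norm{x-y} = \norm{x-y}^{\alpha-1}\norm{x-y}^{2-\alpha} \leq \norm{x-y}^{\alpha-1} \norm{x}^{2-\alpha}$, which cancels the $\norm{x}^{\alpha-2}$ term and produces the desired $(\alpha-1)$-\Holder bound with an absolute constant depending only on $\alpha$.

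In the far case, $\norm{x} \leq 2\norm{x-y}$ and hence also $\norm{y} \leq \norm{x} \leq 2\norm{x-y}$, so the triangle inequality gives
\[
  \norm{g(x)-g(y)} \leq \norm{x}^{\alpha-1} + \norm{y}^{\alpha-1} \leq 2^{\alpha}\norm{x-y}^{\alpha-1}.
\]
Taking the maximum of the two constants finishes the argument. The boundary case $x=0$ or $y=0$, with $g$ extended continuously by $g(0)=0$ (valid because $\alpha>1$), falls under the far case trivially. The only delicate point is the nearby case: one must verify that the segment stays uniformly bounded away from the origin so that the mean-value inequality is applicable, and then carry out the power-trading step that converts $\norm{x}^{\alpha-2}\norm{x-y}$ into the \Holder-scaled quantity $\norm{x-y}^{\alpha-1}$ using the smallness assumption $\norm{x-y} \leq \norm{x}/2$.
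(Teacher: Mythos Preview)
Your proof is correct, and it takes a genuinely different route from the paper's.

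The paper does not split into near and far cases at all. It observes that whenever $\norm{x}\geq\norm{y}$ one automatically has $\norm{x-y}\leq 2\norm{x}$, hence $\norm{x}^{\alpha-2}\leq 2^{2-\alpha}\norm{x-y}^{\alpha-2}$ everywhere. It then inserts the intermediate point $\norm{x}^{\alpha-1}\tfrac{y}{\norm{y}}$ (a polar-type decomposition: first rotate, then rescale) and bounds the two resulting pieces using only the subadditivity of $t\mapsto t^{\alpha-1}$ and the reverse triangle inequality. The whole argument is a single chain of elementary inequalities with no calculus.

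Your approach, by contrast, is the classical mean-value strategy: compute the Jacobian, use it on a segment that stays bounded away from the singularity, and handle the complementary regime crudely. The advantage of your method is that it is systematic and transplants directly to other maps whose gradient decays like $\norm{x}^{\alpha-2}$; the advantage of the paper's method is that it avoids the case split and the verification that the segment misses the origin, and it does not require differentiability at all (only the algebraic identities). Both yield absolute constants depending only on $\alpha$; the paper lands on $1+2^{3-\alpha}\leq 5$, while yours gives $\max\{(3-\alpha)2^{2-\alpha},\,2^{\alpha}\}$.
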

\begin{proof}
Without loss of generality, assume ${\norm{y} \leq \norm{x}}$ which
implies ${\norm{x-y} \leq \norm{x} + \norm{y} \leq 2 \norm{x}}$, which
in turn implies ${\norm{x}^{\alpha-2} \leq 2^{2-\alpha} \norm{x-y}^{\alpha-2}}$.
Therefore,
\begin{equation*}
\begin{split}
    \norm{f(x)-f(y)}
    & \leq \norm{\norm{x}^{\alpha-2}x - \norm{y}^{\alpha-2}y}
    \leq \norm{\norm{x}^{\alpha-2}x - \norm{x}^{\alpha-1}\frac{y}{\norm{y}} + \norm{x}^{\alpha-1}\frac{y}{\norm{y}} - \norm{y}^{\alpha-2}y}\\
    &\leq  \norm{x}^{\alpha-1}\norm{\frac{x}{\norm{x}}-\frac{y}{\norm{y}}} + \vert \norm{x}^{\alpha-1} - \norm{y}^{\alpha-1} \vert\\
    &\stackrel{1}{\leq} \norm{x}^{\alpha-1}\norm{\frac{x}{\norm{x}}-\frac{y}{\norm{x}}+\frac{y}{\norm{x}}-\frac{y}{\norm{y}}} + \norm{x-y}^{\alpha-1}\\
    &\leq \norm{x}^{\alpha-2}\norm{x-y} + \norm{x}^{\alpha-1}\norm{\frac{y}{\norm{y}}(\frac{\norm{y}}{\norm{x}}-1)} + \norm{x-y}^{\alpha-1} \\
    & \leq 2\norm{x}^{\alpha-2}\norm{x-y} + \norm{x-y}^{\alpha-1}
    \leq (1+2^{3-\alpha})\norm{x-y}^\alpha \leq 5 \norm{x-y}^\alpha,
\end{split}
\end{equation*}
where inequality 1 follows from Lemma~\ref{lemma:power_triang}.
\end{proof}

\end{document}